%%%% ijcai24.tex

\typeout{}

% These are the instructions for authors for IJCAI-24.

\documentclass{article}
\pdfpagewidth=8.5in
\pdfpageheight=11in

% The file ijcai24.sty is a copy from ijcai22.sty
% The file ijcai22.sty is NOT the same as previous years'
\usepackage{ijcai24}

% Use the postscript times font!
\usepackage{times}
\usepackage{soul}
\usepackage{url}
\usepackage[hidelinks]{hyperref}
\usepackage[utf8]{inputenc}
\usepackage[small]{caption}
\usepackage{graphicx}
\usepackage{amsmath}
\usepackage{amsthm}
\usepackage{booktabs}
\usepackage{algorithm}
\usepackage{algorithmic}
\usepackage[switch]{lineno}

\usepackage{caption}
\usepackage{subcaption}
\usepackage{xcolor}
\usepackage{amssymb}
\usepackage{multirow}

% Comment out this line in the camera-ready submission
% \linenumbers

\urlstyle{same}

% the following package is optional:
%\usepackage{latexsym}

% See https://www.overleaf.com/learn/latex/theorems_and_proofs
% for a nice explanation of how to define new theorems, but keep
% in mind that the amsthm package is already included in this
% template and that you must *not* alter the styling.

\newtheorem{theorem}{Theorem}
\newtheorem{lemma}{Lemma}
\newtheorem{assumption}{Assumptions}
\newtheorem{definition}{Definition}
\theoremstyle{remark}
\newtheorem{remark}{Remark}

% Following comment is from ijcai97-submit.tex:
% The preparation of these files was supported by Schlumberger Palo Alto
% Research, AT\&T Bell Laboratories, and Morgan Kaufmann Publishers.
% Shirley Jowell, of Morgan Kaufmann Publishers, and Peter F.
% Patel-Schneider, of AT\&T Bell Laboratories collaborated on their
% preparation.

% These instructions can be modified and used in other conferences as long
% as credit to the authors and supporting agencies is retained, this notice
% is not changed, and further modification or reuse is not restricted.
% Neither Shirley Jowell nor Peter F. Patel-Schneider can be listed as
% contacts for providing assistance without their prior permission.

% To use for other conferences, change references to files and the
% conference appropriate and use other authors, contacts, publishers, and
% organizations.
% Also change the deadline and address for returning papers and the length and
% page charge instructions.
% Put where the files are available in the appropriate places.

% PDF Info Is REQUIRED.

% Please leave this \pdfinfo block untouched both for the submission and
% Camera Ready Copy. Do not include Title and Author information in the pdfinfo section
\pdfinfo{
/TemplateVersion (IJCAI.2024.0)
}

\title{Physics-Informed Neural Networks: Minimizing Residual Loss with Wide Networks and Effective Activations}

% Single author syntax
% \author{
%     Anonymous Author(s)
%     \affiliations
%     Affiliation
%     \emails
%     email
% }

% % Multiple author syntax (remove the single-author syntax above and the \iffalse ... \fi here)
% \iffalse
\urlstyle{tt}
\author{
Nima Hosseini Dashtbayaz$^1$
\and
Ghazal Farhani$^{2,}$\footnote{Corresponding Author}\and
Boyu Wang$^{1,3,*}$\And
Charles X. Ling$^1$
\affiliations
$^1$Department of Computer Science, University of Western Ontario\\
$^2$National Research Council of Canada\\
$^3$Vector Institute\\
\emails
nhosse5@uwo.ca,
ghazal.farhani@nrc-cnrc.gc.ca,
bwang@csd.uwo.ca,
charles.ling@uwo.ca
}
% \fi

\begin{document}

\maketitle

\begin{abstract}

    The residual loss in Physics-Informed Neural Networks (PINNs) alters the simple recursive relation of layers in a feed-forward neural network by applying a differential operator, resulting in a loss landscape that is inherently different from those of common supervised problems. Therefore, relying on the existing theory leads to unjustified design choices and suboptimal performance. In this work, we analyze the residual loss by studying its characteristics at critical points to find the conditions that result in effective training of PINNs. Specifically, we first show that under certain conditions, the residual loss of PINNs can be globally minimized by a wide neural network. Furthermore, our analysis also reveals that an activation function with well-behaved high-order derivatives plays a crucial role in minimizing the residual loss. In particular, to solve a $k$-th order PDE, the $k$-th derivative of the activation function should be bijective. The established theory paves the way for designing and choosing effective activation functions for PINNs and explains why periodic activations have shown promising performance in certain cases. Finally, we verify our findings by conducting a set of experiments on several PDEs. Our code is publicly available at \href{https://github.com/nimahsn/pinns_tf2}{\url{https://github.com/nimahsn/pinns\_tf2}}.
\end{abstract}

\section{Introduction}
The success of deep learning in a wide variety of tasks has motivated its application in scientific domains as well \cite{sirignano2018dgm,reiser2022graph,li2020fourier}. PINNs \cite{raissi2017physics} in particular are designed to solve differential equations as an alternative to traditional solvers, benefiting from discretization-free construction and the vast availability of machine learning tools and techniques. As a result, PINNs have been deployed in various physics and engineering problems, such as solving inverse scattering problems in photonics \cite{chen2020physics}, flow problems in fluid dynamics \cite{cai2021physics}, and computational neuromusculoskeletal models raised in biomedical and rehabilitation sciences \cite{zhang2022physics}.

Consider a general-form PDE with a Dirichlet boundary condition such as
\begin{equation}
\label{eq: pde}
    \begin{aligned}
         &\mathcal{D}\left[u\right](x) = f(x)\; &x\in\Omega&\\
         &u(x) = g(x)\; &x\in\partial\Omega&,
    \end{aligned}
\end{equation}
where $u$ is the solution of the PDE on a bounded domain \mbox{$\Omega\subset\mathbb{R}^d$} of $d$ independent variables with boundaries $\partial \Omega$, $f$ and $g$ are known functions, and $\mathcal{D}$ is a differential operator. Here, the operator $\mathcal D$ expresses the physical rules governing $u$ through a differential expression. PINNs are then trained to respect the underlying physical dynamics given in $\mathcal{D}$ by minimizing the residual loss
\begin{equation}\label{eq: residual loss}
    L_r = \sum_{x\in \mathbf{x}}l(\mathcal{D}\left[\hat{u}\right](x) - f(x)),
\end{equation}
where $\hat{u}$ is a neural network approximation of $u$, $l$ is an error function such as squared error, and $\mathbf{x}$ is a set of training collocation points in $\Omega$. To guarantee a unique solution, boundary (and initial) conditions are also imposed by adding other supervised loss terms, referred to as boundary loss, trained with boundary data sampled from $\partial \Omega$. The resulting loss function can then be treated as a multi-objective optimization task \cite{raissi2017physics}.

While proven effective, training PINNs is often a challenging task. These challenges usually originate from either the discrepancy between the residual loss and the boundary loss \cite{wang2020pinns,farhani2022momentum,wang2020understanding} or the nature of the residual loss \cite{krishnapriyan2021characterizing,wang2022respecting}. Notably, as Eq. \ref{eq: residual loss} involves differentiation over a neural network with respect to (w.r.t.) its inputs, the outputs of the network undergo a significant structural transformation. 
To better understand the aforementioned process and its implications, let us consider a simple differential operator $\frac{\partial u}{\partial x}$ of a single independent variable and an $L$-layer feed-forward network with an activation function $\sigma$. One can find that the application of this differential operator on the neural network, $\mathcal{D}[\hat u]$, is given by
\begin{equation}\label{eq: simple D}
% \mathcal D \left[\hat u\right] = (W_2^\top \circ \sigma'(x\times W_1 + b_1))\times W_1^\top,
\mathcal D \left[\hat u\right] = W_L^\top \times (\sigma'(G_{L-1})\circ W_{L-1})^\top \times \dots \times(\sigma'(G_1)\circ W_1)^\top,
\end{equation}
where $W_i$ and $G_i$ are weights and linear outputs of layer $i$, and $\circ$ and $\times$ denote element-wise (Hadamard) and matrix products. In contrast, the original neural network $\hat u$ can be defined recursively as 
$$
\hat{u}(x) = G_L(x),\; G_i(x) = \sigma(G_{i-1}(x))\times W_{i} + b_i.
$$

Eq. \ref{eq: simple D} shows how the differentiation transforms the outputs of a neural network. Firstly, note that the simple recursive relation between the layers of a feed-forward network is disrupted by applying $\mathcal D$, and additional element-wise products with weights emerge as well. More significantly, we observe that the derivative $\sigma'$ of the activation function appears in the outputs. This presence of $\sigma'$ in PINNs highlights the importance of an activation function with well-behaved derivatives in the model's expressive power in learning $\mathcal D$ and likewise in the optimization process as it involves higher-order derivatives of $\sigma$.

Altogether, the distinct characteristics of $\mathcal{D}[\hat u]$ and the resulting residual loss, contribute to a problem that is quite different from common supervised training tasks. Consequently, the existing theory around loss functions and their characteristics cannot readily be applied to PINNs, and the lack of understanding about PINNs and their optimization dynamics leads to uninformed design choices and suboptimal performance even for seemingly easy PDEs.

In this work, we focus on the residual loss and its landscape. Specifically, we are interested in finding what neural networks and design choices enable PINNs to globally minimize the residual loss. To this end, we study the residual loss at a critical point of the network parameter space and search for distinctive characteristics of a global minimum compared to other critical points. Once these characteristics are identified, our investigation shifts towards determining sufficient conditions within the network design, in particular, width and activation function, that guarantee the existence of global minima in the parameter space. Our findings underscore the importance of the width and activation functions with well-behaved high-order derivatives in acquiring a high expressive power in learning the differential operator. Finally, we verify our findings by conducting extensive experiments on several PDEs.   

We summarize our contributions as follows. (1) We theoretically show that the residual loss of PINNs can be globally minimized, given a two-layer neural network with a width equal to or greater than the number of collocation points. (2) Through our analysis, we establish that the residual loss for a $k$-th order differential operator is optimally minimized when using an activation function with a bijective $k$-th order derivative. We leverage this theoretical foundation as a guideline for selecting activation functions, justifying the choice of sinusoidal activations, and subsequently validating their effectiveness through empirical demonstrations and experiments.

\section{Related Works}
\subsection{Wide Neural Networks}
Wide neural networks have historically been of significant interest in machine learning. With classical results such as Universal Approximation and Gaussian processes, and more recently, NTK theory \cite{jacot2018neural}, wide networks have been studied to understand neural networks in certain regimes \cite{chen2020generalized,lee2019wide}. The optimal width of a neural network is also studied for convergence guarantees \cite{oymak2020toward,du2019gradient,allen2019convergence,nguyen2020global} and loss geometry \cite{safran2016quality,nguyen2017loss} with certain classes of neural networks and optimizers. The convergence guarantees are often provided for a width polynomial in the number of training samples and the number of layers \cite{allen2019convergence}. Safran and Shamir \shortcite{safran2016quality} studied the basins of the loss function for wide two-layer ReLU networks, showing that wider networks are initialized at a good basin with higher probability. \cite{nguyen2017loss} also showed that most of the critical points in a wide neural network are also global minima. The developed theory in most of the aforementioned works cannot be directly applied to PINNs, as they either rely on specific neural network formulations \cite{nguyen2017loss,nguyen2020global}, which are disrupted by differentiation, or certain hyper-parameters that are not effective for PINNs, such as ReLU activation function \cite{du2019width,safran2016quality,allen2019convergence}.

\subsection{Periodic Activation Functions}
Sitzmann et al. \shortcite{sitzmann2020implicit} proposed using sinusoidal activation functions in neural networks with low-dimensional inputs for learning differentiable signals. Notably, they also showed the capability of Sine networks in solving Wave and Helmholtz PDEs with PINNs. Since then, few works have explored the behaviour of neural networks with periodic activation functions at initialization \cite{belbute2022simple} and their expressive power as function approximators \cite{meronen2021periodic}. Meronen et al. \cite{meronen2021periodic} studied the inductive bias introduced by periodic activation functions on the neural network functional space, and showed that such networks are less sensitive to input shifts.

\subsection{Physics-Informed Neural Networks}
Besides the applications of PINNs in solving various PDEs, there has been a surge in analyzing the behaviour and pitfalls of PINNs in recent years, especially from the optimization perspective \cite{wang2020understanding,liu2020linearity,farhani2022momentum}. Using \textit{Neural Tangent Kernel} (NTK) theory from infinitely wide neural networks, \cite{wang2020pinns} showed that high-frequency terms in a PDE result in discrepancies in the convergence rate between the loss objectives when trained with Gradient Descent, leading the model to exhibit behaviours similar to spectral bias \cite{rahaman2018spectral}. Wang et al. \shortcite{wang2020understanding} also showed similar results by studying the magnitude of the loss gradients at different layers. Later on, the momentum term was shown to address the discrepancy in optimization in the infinite-width regime \cite{farhani2022momentum}.  

Many recent works alleviate the optimization challenges in PINNs and improve their performance by assigning weights to each loss term \cite{wang2020pinns,mcclenny2020self,wight2020solving}, designing new architectures and embeddings \cite{wong2022learning,wang2020understanding,wang2021eigenvector,dong2021method}, and using sophisticated training strategies such as curriculum learning \cite{krishnapriyan2021characterizing,wang2022respecting}. Among them, \cite{wang2020pinns} and \cite{wong2022learning}, suggested mapping the inputs to random or trainable Fourier features and the use of sinusoidal activation functions to overcome the spectral bias and the convergence discrepancy. 

\section{Global Minima of the Residual Loss} \label{sec: section 3}
 In this section, we study the residual loss at its critical points to obtain sufficient conditions for the existence of global minima.  We present the proofs for the lemmas and theorems in this section in the Appendix. First, we introduce the notation and the setup used throughout this section.
\subsection{Notation and Setup}
We use $\hat{u}_{\mathcal{W}}: \mathbb{R}^d\times \mathbb{R}^{|\mathcal{W}|} \rightarrow \mathbb{R}^{n_L}$ to denote an $L$-layer feed-forward neural network parameterized by $\mathcal{W} = \{W_i, b_i\mid 1\leq i\leq L, W_i\in \mathbb{R}^{n_{i-1} \times n_i},b_i \in \mathbb{R}^{n_i}\}$, where $n_i$ is the number of neurons in layer $i$, $n_0 = d$, and $n_L = 1$. We drop $\mathcal{W}$ from $\hat u_\mathcal{W}$ for simplicity if there is no ambiguity. The neural network $\hat u$ for an input $x=(x_1, \dots, x_d)$ is formulated as
\begin{equation}\label{eq: nn}
    \begin{aligned}
        &\hat u(x) = G_L(x),&\;\\
        &G_i(x) = F_{i-1}(x) \times W_i + b_i&\;\forall i \in \{1, \dots L\},\\
        &F_i(x) = \sigma(G_i(x))&\;\forall i \in \{1, \dots L-1\},
    \end{aligned}
\end{equation}
where $\sigma$ is an activation function, and $F_0(x) = x$. We further define $F^{(k)}_i(x)$ as
$$
F^{(k)}_i(x) = \sigma^{(k)}(G_i(x)),
$$
where $\sigma^{(k)}$ is the $k$-th derivative of $\sigma$. In the case of $k=1$, we simply use $F'_i$ and $\sigma'$ instead. For a batch $\mathbf{x}$ of $N$ samples, $F_i(\mathbf{x})$, $G_i(\mathbf{x})$, and $F^{(k)}_i(\mathbf{x})$ are $N\times n_i$ matrices. Also, the matrix power $W^k$ represents an element-wise power.

In a PINN, the neural network $\hat u$ is trained to approximate the solution $u$ of a differential equation denoted as in Eq. \ref{eq: pde}. In this work, we assume that $\mathcal D$ is a linear differential operator, i.e., the PDE is linear in the derivatives of $u$ and $u$ itself. We reformulate the residual loss in Eq. \ref{eq: residual loss} to be a function of weights $\mathcal W$ and choose $l(r) = r^2$.
\begin{equation} \label{eq: phi residual loss}
    \phi_r(\mathbf{x}; \mathcal{W}) = \sum_{x \in \mathbf{x}} l(\mathcal{D}\left[\hat u\right](x) - f(x)).
\end{equation}

Throughout the rest of this section, we consider a two-layer neural network and a simple $k$-th order differential operator $\mathcal{D}\left[u\right] = \frac{\partial^k u}{\partial x^k}$ for a single independent variable $x$ (i.e., $d=1$). We generalize the results in this section to more independent variables in the Appendix.

\subsection{Residual Loss of a Two-layer PINN}
To study the residual loss and its critical points, we first need to derive the analytic formula for the residual loss and its gradients. The next two lemmas, provide us with these tools by finding the differentiation $\mathcal{D}\left[\hat u\right]$ and then deriving the gradients of the resulting residual loss.
\begin{lemma} \label{theorem: 2 layer Du}
    For a two-layer neural network $\hat u$ defined in Eq. \ref{eq: nn}, and a $k$-th order differential operator $\mathcal{D}\left[u\right] = \frac{\partial^k u}{\partial x^k}$ of a single independent variable $x$, $\mathcal{D}[\hat u]$ is 
    \begin{equation*}
    \begin{aligned}
                \mathcal{D}[\hat{u}](x) = W_2^\top \times (F_1^{(k)}(x) \circ W_1^k)^\top.
        \end{aligned}
    \end{equation*}
\end{lemma}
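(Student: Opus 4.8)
The plan is to expand $\hat u$ into an explicit scalar sum over the hidden neurons, differentiate term-by-term with the chain rule, and then repackage the result in the matrix notation of Eq.~\ref{eq: nn}. Since we are in the two-layer, single-variable case ($d=1$, $L=2$), I would first write $G_1(x) = x W_1 + b_1$ with $W_1\in\mathbb{R}^{1\times n_1}$ and $b_1\in\mathbb{R}^{n_1}$, so that the $j$-th coordinate of $G_1$ is the affine scalar $g_j(x) = w_{1,j}\,x + b_{1,j}$. Then $\hat u(x) = F_1(x)\times W_2 + b_2 = \sum_{j=1}^{n_1} w_{2,j}\,\sigma(g_j(x)) + b_2$.

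Next I would compute $\frac{\partial^k}{\partial x^k}$ of each summand. The key elementary fact is that for an affine scalar function $g_j(x) = w_{1,j}\,x + b_{1,j}$ one has $\frac{d^k}{dx^k}\sigma(g_j(x)) = w_{1,j}^k\,\sigma^{(k)}(g_j(x))$, which follows from a one-line induction on $k$ using the chain rule, since $g_j'(x) = w_{1,j}$ is constant. Because $\mathcal D$ is linear and $k\geq 1$, the bias $b_2$ (indeed any polynomial term of degree less than $k$ in $x$) is annihilated, so $\mathcal D[\hat u](x) = \sum_{j=1}^{n_1} w_{2,j}\,w_{1,j}^k\,\sigma^{(k)}(g_j(x))$.

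Finally I would recognize this sum as a matrix product. The row vector $F_1^{(k)}(x) = \sigma^{(k)}(G_1(x))$ has $j$-th entry $\sigma^{(k)}(g_j(x))$, and the element-wise power $W_1^k$ has $j$-th entry $w_{1,j}^k$, so $(F_1^{(k)}(x)\circ W_1^k)^\top$ is the column vector whose $j$-th entry is $w_{1,j}^k\,\sigma^{(k)}(g_j(x))$; left-multiplying by the row vector $W_2^\top$ then reproduces exactly $\sum_{j} w_{2,j}\,w_{1,j}^k\,\sigma^{(k)}(g_j(x))$, which is the claimed identity.

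I do not expect a substantial obstacle here: the argument is essentially mechanical. The only points requiring a little care are the induction establishing the $w_{1,j}^k$ prefactor (equivalently, keeping straight that $W_1^k$ denotes an element-wise power rather than a matrix power), the observation that lower-degree terms vanish under $k$-fold differentiation, and checking that the matrix dimensions line up so that $W_2^\top \times (F_1^{(k)}(x)\circ W_1^k)^\top$ is the scalar $\mathcal D[\hat u](x)$.
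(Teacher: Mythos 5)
Your proposal is correct and follows essentially the same route as the paper, which proves the multi-variable generalization (Lemma~\ref{lemma: multiple Du}) by the identical chain-rule-plus-induction-on-$k$ argument, merely written in matrix notation rather than your neuron-by-neuron scalar expansion. The two presentations are equivalent, and your attention to the element-wise meaning of $W_1^k$ and the vanishing of the bias term matches the paper's treatment.
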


With the analytic formula for $\mathcal{D}[\hat u]$ in hand, it is easy to plug it into Eq. \ref{eq: phi residual loss} to get the residual loss. The next lemma derives the gradients $\nabla_{W_2} \phi_r(\mathbf{x}; \mathcal{W})$ of the residual loss w.r.t. the weights of the last layer.

\begin{lemma}\label{theorem: 2 layer gradients W_L} For $\hat u$ and $\mathcal{D}[\hat u]$ given in Lemma \ref{theorem: 2 layer Du}, gradients of the residual loss w.r.t. the weights of the second layer over the training collocation data $\mathbf x$ of $N$ samples are given by
\begin{equation*}
    \nabla_{W_2}\phi_r(\mathbf x; \mathcal W) =  W_1^k\circ[l'(\mathcal D[\hat u](\mathbf x) - f(\mathbf x))^\top \times F^{(k)}_1(\mathbf x)].
\end{equation*}
\end{lemma}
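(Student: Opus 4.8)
The plan is to differentiate the residual loss $\phi_r(\mathbf{x};\mathcal{W}) = \sum_{x\in\mathbf{x}} l(\mathcal{D}[\hat u](x) - f(x))$ directly with respect to $W_2$, using the closed form for $\mathcal{D}[\hat u]$ provided by Lemma~\ref{theorem: 2 layer Du}. First I would write the per-sample residual as $r(x) = \mathcal{D}[\hat u](x) - f(x) = W_2^\top \times (F_1^{(k)}(x)\circ W_1^k)^\top - f(x)$, so that $\phi_r = \sum_{x} l(r(x))$ and, by the chain rule, $\nabla_{W_2}\phi_r = \sum_{x} l'(r(x))\,\nabla_{W_2} r(x)$. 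Since $f(x)$ does not depend on $\mathcal{W}$, the whole dependence of $r(x)$ on $W_2$ is through the bilinear term $W_2^\top \times (F_1^{(k)}(x)\circ W_1^k)^\top$, which is linear in $W_2$; hence $\nabla_{W_2} r(x) = F_1^{(k)}(x)\circ W_1^k$ viewed as a vector of the same shape as $W_2$.

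The key step is then to assemble the sum over the batch into matrix form. Stacking the samples, $l'(r(\mathbf{x}))$ is the $N$-vector with entries $l'(r(x))$, and $F_1^{(k)}(\mathbf{x})$ is the $N\times n_1$ matrix whose rows are the $F_1^{(k)}(x)$. The sum $\sum_x l'(r(x))\,(F_1^{(k)}(x)\circ W_1^k)$ factors as $W_1^k \circ \big(\sum_x l'(r(x)) F_1^{(k)}(x)\big)$ because $W_1^k$ is constant across the sum and the Hadamard product distributes over addition; the inner sum is exactly the matrix-vector product $l'(r(\mathbf{x}))^\top \times F_1^{(k)}(\mathbf{x})$ (a $1\times n_1$ row vector). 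Substituting $r(\mathbf{x}) = \mathcal{D}[\hat u](\mathbf{x}) - f(\mathbf{x})$ yields precisely
\[
\nabla_{W_2}\phi_r(\mathbf{x};\mathcal{W}) = W_1^k \circ \big[\, l'(\mathcal{D}[\hat u](\mathbf{x}) - f(\mathbf{x}))^\top \times F_1^{(k)}(\mathbf{x}) \,\big].
\]

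I expect the only real subtlety to be bookkeeping of shapes and transposes: making sure that $W_2$, $W_1^k$, and $F_1^{(k)}(x)$ are all treated as column vectors of length $n_1$ (since $n_0 = n_L = 1$ here), that $\nabla_{W_2}$ is reported in the same layout, and that the Hadamard product commutes with the scalar multiplications by $l'(r(x))$ and pulls out of the batch sum correctly. One should also confirm that $l(r) = r^2$ (the choice fixed in Eq.~\ref{eq: phi residual loss}) is only used implicitly through $l'$, so the statement is clean if phrased with a generic $l'$. No optimization or analytic obstacle arises — this lemma is a direct computation, and the ``hard part'' is purely notational consistency with the conventions set up in Eq.~\ref{eq: nn} and Lemma~\ref{theorem: 2 layer Du}.
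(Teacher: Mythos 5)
Your proposal is correct and follows essentially the same route as the paper's proof: the paper likewise applies the chain rule per sample, uses the linearity of the residual in $W_2$ to get the per-sample gradient $F_1^{(k)}(x)\circ W_1^k$ (working entrywise over the components of $W_2$), and then pulls $W_1^k$ out of the batch sum to recognize the remaining factor as $l'(r(\mathbf{x}))^\top \times F_1^{(k)}(\mathbf{x})$. The shape bookkeeping you flag is indeed the only subtlety, and the paper handles it identically in its generalized version (Lemma~\ref{lemma: gradients multiple var}) with $W_1^k$ replaced by $\sum_i c_i (W_1)_i^k$.
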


\begin{remark}
    Lemmas \ref{theorem: 2 layer Du} and \ref{theorem: 2 layer gradients W_L} generalize the appearance of derivatives in the outputs of the neural network as in Eq. \ref{eq: simple D}, showing that a $k$-th order differential term similarly contains the $k$-th derivative of the activation function. Thus, activation functions with vanishing high-order derivatives, such as ReLU, significantly reduce the network representation power in approximating the residuals. Note that the gradients w.r.t. $W_1$ contain the $(k+1)$-th derivative of the activation function, further highlighting the importance of well-behaved derivatives in optimization.
\end{remark}

In the following section, the gradients given in Lemma 2 are studied at a critical point to find the characteristics of global minima of the residual loss. Note that global minimum in this context refers to the parameters that make the loss zero.

\subsection{Critical Points of Wide PINNs}\label{sec: width N and activation}

We are eventually interested in finding sufficient conditions for the existence of a global minimum of the residual loss, i.e., $\phi_r(\mathbf x; \mathcal W) = 0$. The following theorem takes the first step by providing a necessary condition for globally minimizing the residual loss. We then turn this requirement into a sufficient condition by establishing a set of assumptions. Note that the squared error $l(r)$ is a non-negative convex function of the residuals $r$, and $l'(r)=0$ results in $l(r) = 0$. Thus, a critical point $\overline{\mathcal{W}}$ of $\phi_r(\mathbf{x}; \mathcal{W})$ in the parameter space globally minimizes the residual loss if $l'(\mathcal D [\hat u_{\overline{\mathcal W}}](x) - f(x)) =0$ for every training sample in $\mathbf x$. 
\begin{theorem}\label{theorem: full rank two layer}
    For $\hat u$ and $\mathcal{D} [\hat u]$ as in Lemma \ref{theorem: 2 layer Du}, a critical point $\overline{\mathcal W}$ of the residual loss $\phi_r(\mathbf{x}, \mathcal W)$ is a global minimum if the following conditions are satisfied:
\begin{enumerate}
    \item Weights $\overline{W}_1$ of the first layer are strictly non-zero,
    \item $F^{(k)}_1$ has full row rank, i.e., $\text{rank}(F_1^{(k)}(\mathbf x)) = N$.
\end{enumerate}
\end{theorem}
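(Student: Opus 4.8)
The plan is to read the theorem as a purely algebraic consequence of first‑order stationarity, using only the gradient formula of Lemma~\ref{theorem: 2 layer gradients W_L}. Recall the observation made immediately before the statement: since $l(r)=r^2$ is convex, non‑negative, and satisfies $l'(r)=0\iff l(r)=0$, a critical point $\overline{\mathcal W}$ is automatically a global minimum of $\phi_r$ as soon as $l'\bigl(\mathcal D[\hat u_{\overline{\mathcal W}}](x)-f(x)\bigr)=0$ for every $x\in\mathbf x$. Hence the entire proof reduces to showing that the vector $l'\bigl(\mathcal D[\hat u_{\overline{\mathcal W}}](\mathbf x)-f(\mathbf x)\bigr)\in\mathbb R^{N}$ vanishes, and for this I only intend to use the two hypotheses together with the $W_2$‑block of the gradient.

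First I would invoke that $\overline{\mathcal W}$ is critical, so in particular $\nabla_{W_2}\phi_r(\mathbf x;\overline{\mathcal W})=0$; by Lemma~\ref{theorem: 2 layer gradients W_L} this reads
\[
\overline{W}_1^{\,k}\circ\Bigl[\,l'\bigl(\mathcal D[\hat u](\mathbf x)-f(\mathbf x)\bigr)^{\!\top}\times F^{(k)}_1(\mathbf x)\,\Bigr]=0 .
\]
Condition~(1) makes every entry of $\overline{W}_1$ non‑zero, hence every entry of the element‑wise power $\overline{W}_1^{\,k}$ is non‑zero; since a Hadamard product vanishes only if one factor is zero in each coordinate, I may cancel this factor and obtain $l'\bigl(\mathcal D[\hat u](\mathbf x)-f(\mathbf x)\bigr)^{\!\top}\times F^{(k)}_1(\mathbf x)=0$, an identity of $1\times n_1$ row vectors. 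Transposing gives $F^{(k)}_1(\mathbf x)^{\!\top}\times l'\bigl(\mathcal D[\hat u](\mathbf x)-f(\mathbf x)\bigr)=0$, i.e.\ the residual‑derivative vector lies in the kernel of $F^{(k)}_1(\mathbf x)^{\!\top}$. By condition~(2), $\mathrm{rank}\,F^{(k)}_1(\mathbf x)=N$, so $F^{(k)}_1(\mathbf x)^{\!\top}\in\mathbb R^{n_1\times N}$ has full column rank and is therefore injective; hence $l'\bigl(\mathcal D[\hat u](\mathbf x)-f(\mathbf x)\bigr)=0$, and by the opening remark $\phi_r(\mathbf x;\overline{\mathcal W})=0$.

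The argument is short, so the only things to be careful about are bookkeeping‑level. One must track shapes through the Hadamard/matrix products of Lemma~\ref{theorem: 2 layer gradients W_L} so that cancelling $\overline{W}_1^{\,k}$ and passing to the transpose are legitimate, and one must note that the cancellation works precisely because condition~(1) gives an \emph{entrywise} non‑vanishing factor (a merely non‑zero norm would not do). It is also worth remarking that condition~(2) is feasible only when the width satisfies $n_1\ge N$, which is the bridge to the width requirement in the subsequent existence theorem, and that stationarity is used only through the $W_2$‑derivatives, so the hypotheses are in that sense minimal. I do not anticipate a genuine obstacle here: the content is essentially that full row rank of $F^{(k)}_1(\mathbf x)$ plus non‑degenerate first‑layer weights force the residuals to zero at any critical point; the real work is deferred to the later theorem that actually exhibits parameters meeting these conditions.
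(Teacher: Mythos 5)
Your argument is correct and matches the paper's own proof (given in the appendix for the multi-variable generalization, where $\overline{W}_1^{\,k}$ is replaced by $\sum_i c_i(\overline{W}_1)_i^k$): stationarity of the $W_2$-block, entrywise cancellation of the non-zero weight factor, and injectivity of $F_1^{(k)}(\mathbf x)^\top$ from the full-row-rank hypothesis. No differences worth noting.
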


Theorem \ref{theorem: full rank two layer} distinguishes the global minima from other critical points of the residual loss. However, there is no guarantee that an arbitrary neural network can satisfy the conditions in this theorem. In other words, a critical point that makes $F_1^{(k)}(\mathbf x)$ full row rank may not exist in the parameter space of a neural network at all. Still, this theorem does give out a necessary condition for such a neural network. Since $F_1^{(k)}(\mathbf x)$ is an $N \times n_1$ matrix, the width $n_1$ of the first layer must be at least $N$ for it to be full row rank. In fact, given other assumptions, the next theorem shows that $n_1\geq N$ is also a sufficient condition for the existence of a global minimum. Note that the first condition on $W_1$ is satisfied with a high probability in a continuous high-dimensional parameter space.

We first define the non-degenerate critical points used in the next theorem and establish a set of assumptions that connect the two theorems together.

\begin{definition}[Non-degenerate Critical Point \cite{nguyen2017loss}] 
    For a function $f\in C^2:U\subset \mathbb R ^n \rightarrow \mathbb R$ (i.e., $f$ has continuous second-order derivatives), a critical point $x=(x_1, \dots, x_n) \in U$of $f$ is non-degenerate if its Hessian matrix at $x$ is non-singular. Furthermore, $x$ is non-degenerate on a subset of variables $s\subset \{x_1,\dots,x_n\}$ if the Hessian w.r.t. only the variables in $s$ is non-singular at $x$.
\end{definition}

\begin{assumption}\label{assumption: 2 layer single var}
    For the collocation training data $\mathbf{x}$ of $N$ points, the activation function $\sigma$ in $\hat u$, and the $k$-th order differential operator defined in Lemma \ref{theorem: 2 layer Du}, we assume that
    \begin{enumerate}
        \item \label{assumption: distinc samples}samples in $\mathbf{x}$ are distinct,
        \item \label{assumption: bijective} $\sigma ^ {(k)}$ is a continuous and strictly monotonically increasing function, and
        \item $\sigma^{(k)}$ is a bounded function with an infimum of zero.\label{assumption: infimum}
    \end{enumerate}
\end{assumption}
\begin{theorem}\label{theorem: width N}
    With Assumptions \ref{assumption: 2 layer single var} holding and for $\mathcal D \left[\hat u\right]$ as in Lemma \ref{theorem: 2 layer Du}, if $n_1 \geq N$, then every critical point $\overline{\mathcal W}$ of $\phi_r(\mathbf x; \mathcal W)$ that is non-degenerate on $\{W_2, b_2\}$ is a global minimum of $\phi_r$. 
\end{theorem}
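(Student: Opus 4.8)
The plan is to show that the non-degeneracy hypothesis at $\overline{\mathcal W}$ forces precisely the two conditions of Theorem~\ref{theorem: full rank two layer}, and then to invoke that theorem. I would begin by freezing the first-layer parameters $(\overline{W}_1,\overline{b}_1)$ and reading off from Lemma~\ref{theorem: 2 layer Du} that, as a function of the second-layer parameters alone, the residual loss is the quadratic $\phi_r(\mathbf x;\mathcal W)=\|B\,W_2-f(\mathbf x)\|_2^2$, where $f(\mathbf x)=(f(x_1),\dots,f(x_N))^\top$ and $B$ is the $N\times n_1$ matrix obtained from $F_1^{(k)}(\mathbf x)$ by multiplying its $j$-th column by $(\overline{W}_1)_j^{k}$. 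In particular $b_2$ does not enter $\phi_r$ for the operator $\partial^k/\partial x^k$, so the relevant second-layer Hessian is the block in $W_2$; in agreement with Lemma~\ref{theorem: 2 layer gradients W_L}, the $W_2$-gradient is $2B^\top(B\,W_2-f(\mathbf x))$, hence the $W_2$-Hessian is the symmetric positive semidefinite matrix $2B^\top B$.

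Next I would use non-degeneracy. A symmetric positive semidefinite matrix is non-singular precisely when it is positive definite, so $2B^\top B$ is non-singular exactly when $B$ has full column rank $n_1$; since $B\in\mathbb R^{N\times n_1}$ this forces $n_1\le N$, and combined with the hypothesis $n_1\ge N$ we get $n_1=N$ and $B$ square and invertible. Because $B$ is obtained from $F_1^{(k)}(\mathbf x)$ by scaling column $j$ by $(\overline{W}_1)_j^{k}$, multiplicativity of the determinant gives $\det B=\big(\prod_{j=1}^{N}(\overline{W}_1)_j^{k}\big)\det F_1^{(k)}(\mathbf x)\neq0$, whence (i) every coordinate of $\overline{W}_1$ is nonzero, which is Condition~1 of Theorem~\ref{theorem: full rank two layer}, and (ii) $F_1^{(k)}(\mathbf x)$ is invertible, i.e.\ has full row rank $N$, which is Condition~2. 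Applying Theorem~\ref{theorem: full rank two layer} at $\overline{\mathcal W}$ then yields $\phi_r(\mathbf x;\overline{\mathcal W})=0$, so $\overline{\mathcal W}$ is a global minimum.

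I expect the structural Hessian step --- recognizing that non-degeneracy pins the situation down to the square, invertible case $n_1=N$ and makes the determinant factor into a ``$\overline{W}_1$ part'' and an ``$F_1^{(k)}$ part'' --- to be the crux; the remainder is Theorem~\ref{theorem: full rank two layer} plus bookkeeping, and Assumptions~\ref{assumption: 2 layer single var} are not in fact needed for this particular implication. Their role, I believe, is to ensure the statement is non-vacuous, i.e.\ that a non-degenerate critical point exists: distinctness of the samples together with $\sigma^{(k)}$ continuous, strictly increasing, and bounded (hence bijective onto its range, in particular not a polynomial) should, via the standard linear-independence argument for the ``neuron'' functions $x\mapsto\sigma^{(k)}(wx+b)$ at distinct nodes, let one choose $(\overline{W}_1,\overline{b}_1)$ with $\overline{W}_1$ nonzero and $F_1^{(k)}(\mathbf x)$ invertible; then $B$ is invertible, $W_2=B^{-1}f(\mathbf x)$ gives a global minimum, hence a critical point, at which the $W_2$-Hessian $2B^\top B$ is non-singular. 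Finally, I would flag the minor mismatch that the statement reads ``non-degenerate on $\{W_2,b_2\}$'' although $\phi_r$ is independent of $b_2$ here, so it should be understood as non-degeneracy on $W_2$ (it becomes a genuine $\{W_2,b_2\}$ condition only once the general linear operator of the appendix, which carries a zeroth-order term, is used).
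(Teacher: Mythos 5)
Your proof is correct, but it takes a genuinely different and considerably more direct route than the paper's. The paper adapts the machinery of Nguyen and Hein: it first constructs (using all three parts of Assumptions~\ref{assumption: 2 layer single var}) a parameterization with $\mathrm{rank}(F_1^{(k)}(\mathbf x))=N$, argues by real-analyticity that the rank-deficient parameterizations form a Lebesgue-null set, and then uses the implicit function theorem together with the non-degeneracy on $\{W_2,b_2\}$ to produce critical points arbitrarily close to $\overline{\mathcal W}$ at which $F_1^{(k)}(\mathbf x)$ is full row rank, concluding by Theorem~\ref{theorem: full rank two layer} and a limiting argument. You instead exploit the fact that $\phi_r$ is \emph{exactly} quadratic in $W_2$, so the relevant Hessian block is $2B^\top B$ with $B=F_1^{(k)}(\mathbf x)\,\mathrm{diag}(\overline W_1^{\,k})$; non-degeneracy then forces $B$ to be square and invertible, and the determinant factorization hands you \emph{both} conditions of Theorem~\ref{theorem: full rank two layer} at the original critical point, with no perturbation, no analyticity, and no use of Assumptions~\ref{assumption: 2 layer single var}. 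This buys two real improvements: you derive Condition~1 (non-vanishing $\overline W_1$) rather than assuming it holds ``with high probability'' as the paper's closing paragraph does, and your argument is self-contained. What the paper's route buys is generality (it is the version that survives for deeper networks, where the wide layer and the non-degenerate block are different layers and the loss is no longer globally quadratic in the latter) and, via its Lemma~\ref{lemma: exist full rank}, the existence statement that makes the theorem non-vacuous --- which is exactly where the assumptions on $\sigma^{(k)}$ do their work, as you correctly guess.

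Two caveats you should make explicit rather than leave as asides. First, your argument shows the non-degeneracy hypothesis can only be met when $n_1=N$ exactly: for $n_1>N$ the block $2B^\top B$ has rank at most $N<n_1$ and is always singular, so the theorem is vacuously true there --- a sharpness issue the paper's proof does not surface. Second, as you note, $b_2$ does not appear in $\phi_r$ for $\mathcal D=\partial^k/\partial x^k$, so under the literal reading of ``non-degenerate on $\{W_2,b_2\}$'' the Hessian block always has a zero row and the statement is vacuous for every width; your proof really establishes the (intended) version with non-degeneracy on $W_2$ alone, or the $\{W_2,b_2\}$ version for operators containing a zeroth-order term. Neither caveat is a gap in your reasoning, but both change what the theorem actually asserts, so they belong in the proof, not in a closing remark.
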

The following remark allows us to make our final conclusion from Theorem \ref{theorem: width N}.
\begin{remark}
As explained in \cite{a54a6932-695b-3fed-b1c3-8acbcfb5005c} and \cite{nguyen2017loss}, for a function $f$ in $C^2$ that maps an open subset $U\subset \mathbb R^n$ to $\mathbb R$, the degenerate critical points in $U$ are rare as the set of all degenerate points has Lebesgue measure zero.
\end{remark}

Theorem \ref{theorem: width N} provides sufficient conditions for a global minimum of the residual loss with a wide network of width $N$ or higher. Based on this theorem, if a PINN with a width of at least $N$ has a non-degenerate critical point, then it also has a global minimum for the residual loss. Since the residual loss $\phi_r(\mathbf x; \cdot)$ is a function from $\mathbb R^{|\mathcal W|}$ to $\mathbb R$ and has continuous second derivatives, the degenerate critical points are rare, and the wide PINN in Theorem \ref{theorem: width N} has a global minimum.

Note that the residual loss is a strong regularizer that results in a data-efficient training process. Thus, PINNs are often trained with $\mathcal{O}(1000)$ collocation points and even fewer boundary data \cite{raissi2017physics,krishnapriyan2021characterizing}. Therefore, the constraint on the width is well within the practical settings of neural networks. Furthermore, as we observe in the experiments in Section \ref{sec: experiments}, while satisfying the constraint on the width improves the performance, one can expect relatively good results with smaller width as long as the other conditions in Assumptions \ref{assumption: 2 layer single var} are almost satisfied. 

\section{On the Choice of Activation Function}
\begin{figure}[t]
    \centering
    \begin{subfigure}[t]{0.238\textwidth}
        \centering
        \includegraphics[width=\textwidth]{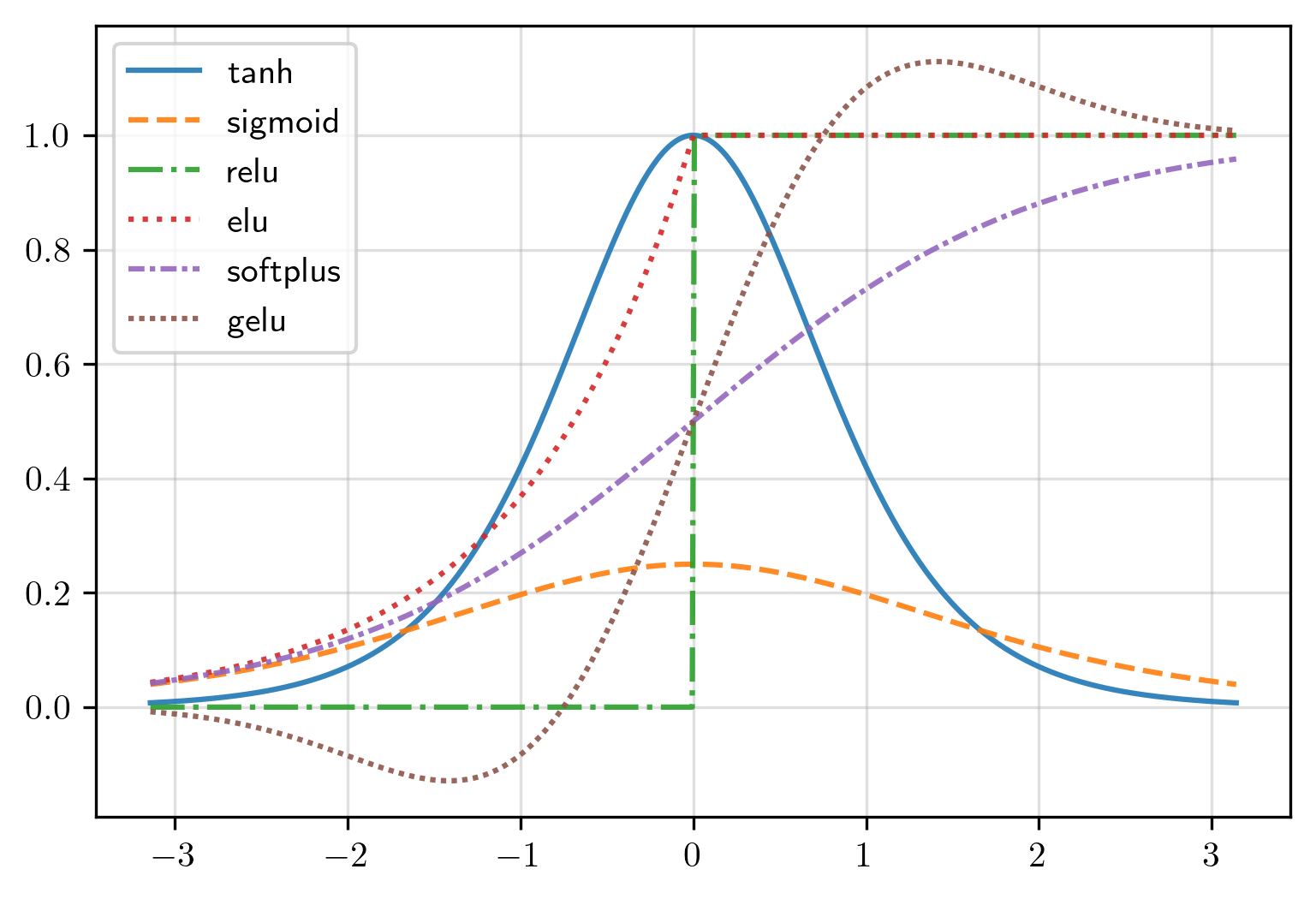}
        \caption{First derivatives}
    \end{subfigure}    
    \hfill
    \begin{subfigure}[t]{0.238\textwidth}
        \centering
        \includegraphics[width=\textwidth]{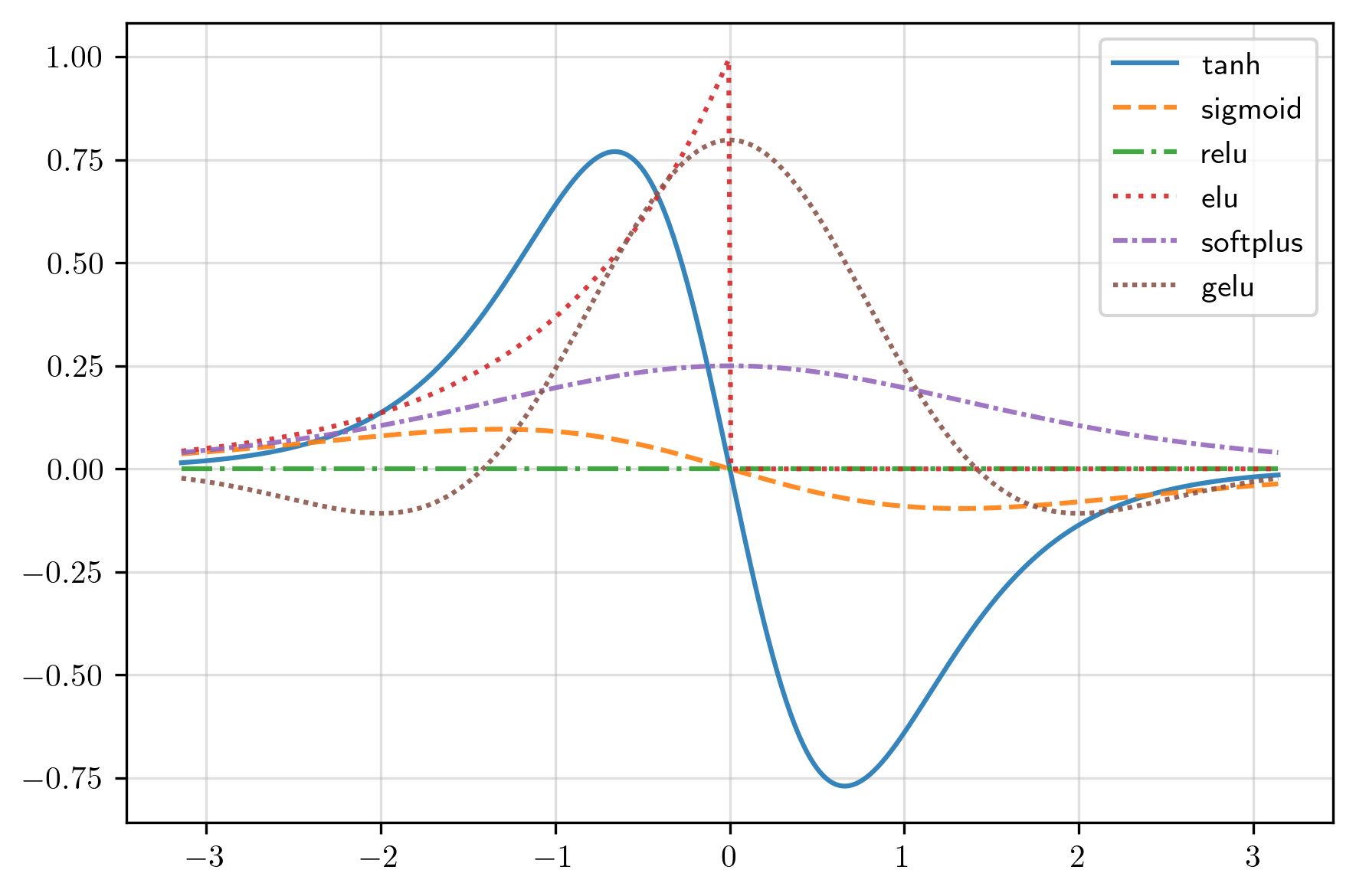}
        \caption{Second derivatives}
    \end{subfigure}    
    \caption{Derivatives of most of the common activation functions are not bijective. Here, only Softplus has a bijective first derivative.}
    \label{fig: derivatives of activations}
\end{figure}
\begin{figure}[t]
    \centering
    \begin{subfigure}[t]{0.238\textwidth}
        \centering
        \includegraphics[width=\textwidth]{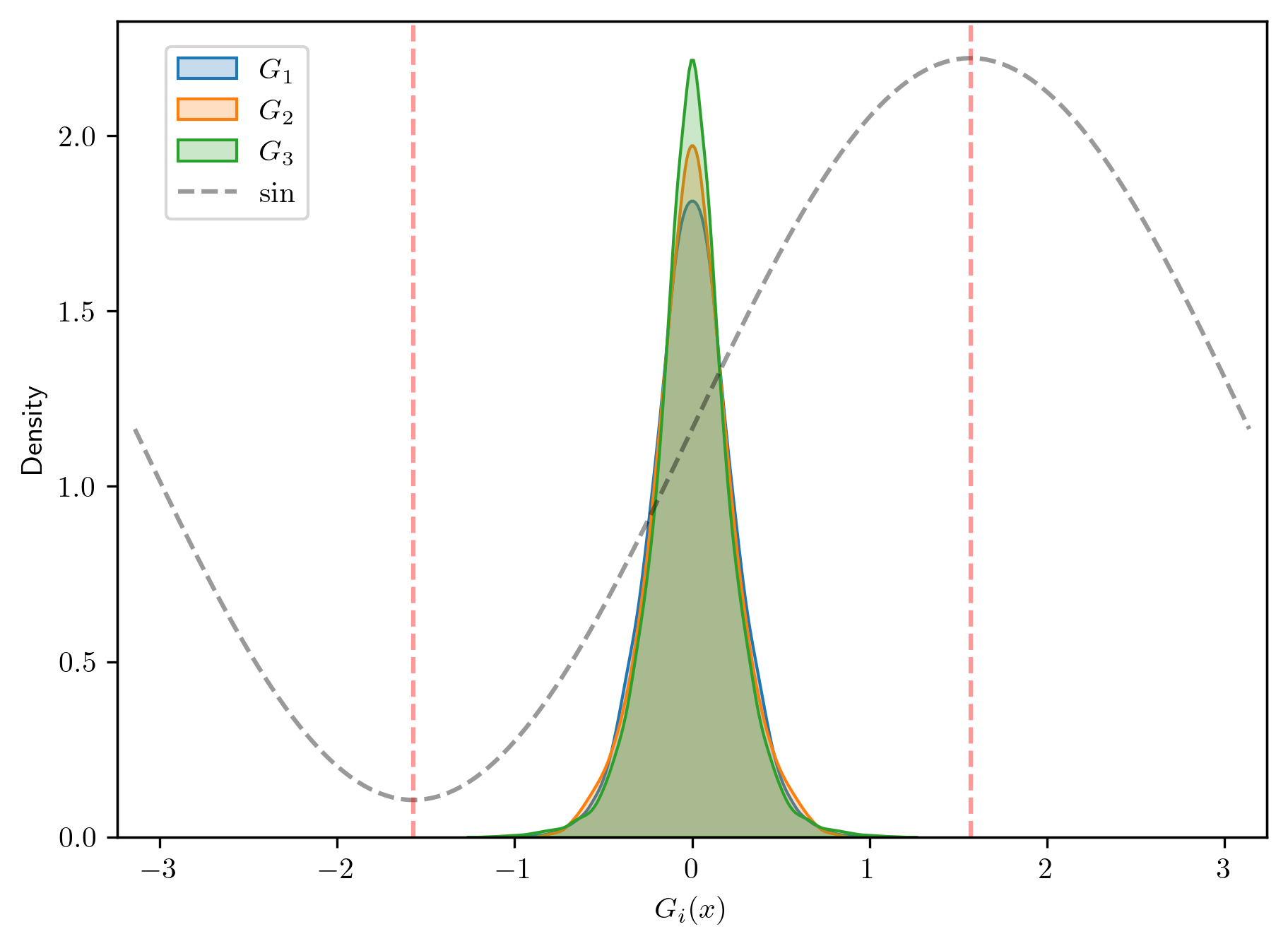}
        \caption{3-Layer 64-Neuron}
    \end{subfigure}
    \hfill
    \begin{subfigure}[t]{0.238\textwidth}
        \centering
        \includegraphics[width=\textwidth]{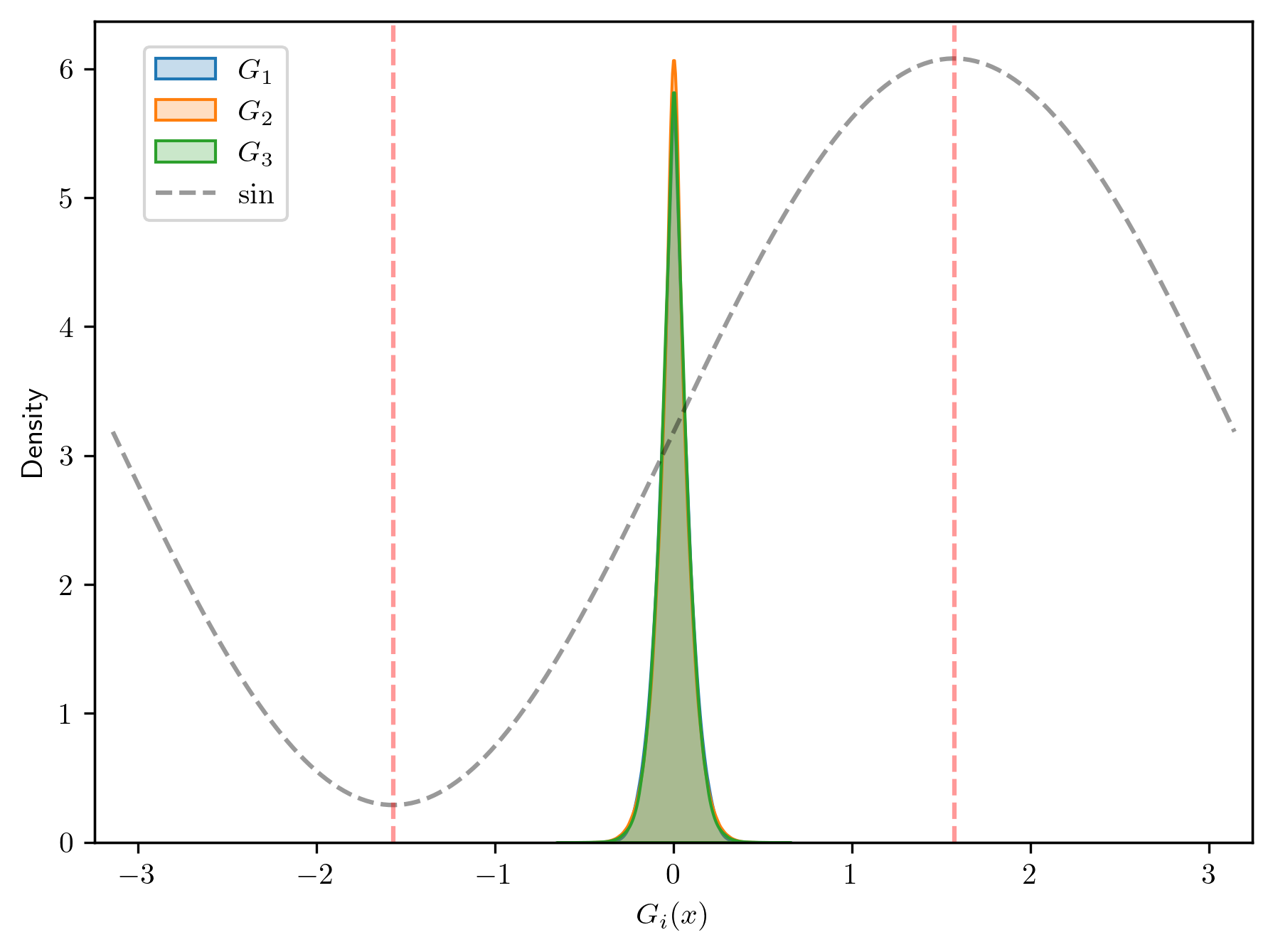}
        \caption{3-Layer 512-Neuron}
    \end{subfigure}
    \caption{Distribution of the linear outputs of the layers in Sine networks at initialization.}
    \label{fig: sine init}
\end{figure}

The conditions outlined in Theorem \ref{theorem: width N} and Assumptions \ref{assumption: 2 layer single var} collectively establish an important set of necessities for achieving global minimization of the residual loss. Notably, the requirement of strictly monotonically increasing $\sigma^{(k)}$ implies that it should be a bijection, providing an important guideline in choosing effective activation functions for PINNs. 
It is noteworthy that bijective activation functions are widely prevalent in deep learning, and extending this characteristic to their derivatives for improved expressiveness in representing differential operators is a plausible goal.

However, the activation functions frequently used in deep learning do not satisfy the bijection property even for the first-order derivatives. As depicted in Figure \ref{fig: derivatives of activations}, only Softplus has a bijective first-order derivative, and as we show in Section \ref{sec: experiments}, it indeed improves the performance of the first-order Transport PINN significantly. Meanwhile, there has been an increasing interest in the use of sinusoidal functions either as feature embeddings \cite{wong2022learning,wang2021eigenvector} or activation functions \cite{sitzmann2020implicit,belbute2022simple} for PINNs.

As shown in Figure \ref{fig: sine init}, we observe that the linear outputs of the layers in a neural network with the Sine activation function at initialization are centred at zero with low variance when initialized with normal Xavier initialization. Sitzmann et al. \shortcite{sitzmann2020implicit} also proposed a uniform initialization scheme for Sine networks that produces normal linear outputs at all layers with a desired variance. Consequently, in both cases, most of the linear outputs of the layers lie in the $\left[-\pi/2, \pi/2\right]$ interval where Sine is bijective. Furthermore, as we train the PINNs with the Sine activation function, we observe that layers still exhibit the same behaviour, i.e., most of the linear outputs of the layers are between $-\pi/2$ and $\pi/2$ after convergence, especially as the width grows larger. Figure \ref{fig: layer outputs} illustrates the output distributions for each layer of the trained Wave and Klein-Gordon PINNs (We later define these equations in Section \ref{sec: experiments}). 

The observations above suggest that the sinusoidal functions can be utilized to almost satisfy the bijective condition of the activation function. Specifically, we use Cosine and Sine activation functions to train PINNs with first- and second-order terms, respectively. As a result, as long as the width is adequately large to produce low-variance pre-activations within $\left[-\pi/2, \pi/2\right]$, the first-order terms in Cosine networks and the second-order terms in Sine networks are determined with the bijective interval of Sine. The same approach can be taken when solving PDEs with higher odd or even terms. 

The experiments in the next section show that sinusoidal non-linearity greatly improves the performance of PINNs compared to the common Tanh activation, and the gains are often greater as the width grows. We note that while the Assumption \ref*{assumption: 2 layer single var}.\ref{assumption: infimum} facilitates the proof of the Theorem \ref{theorem: width N}, the crucial property is the bijective $\sigma^{(k)}$, and we relax the assumption on the infimum of the derivatives.

\begin{figure}[t]
    \centering
    \begin{subfigure}[t]{0.23\textwidth}
        \centering
        \includegraphics[width=\textwidth]{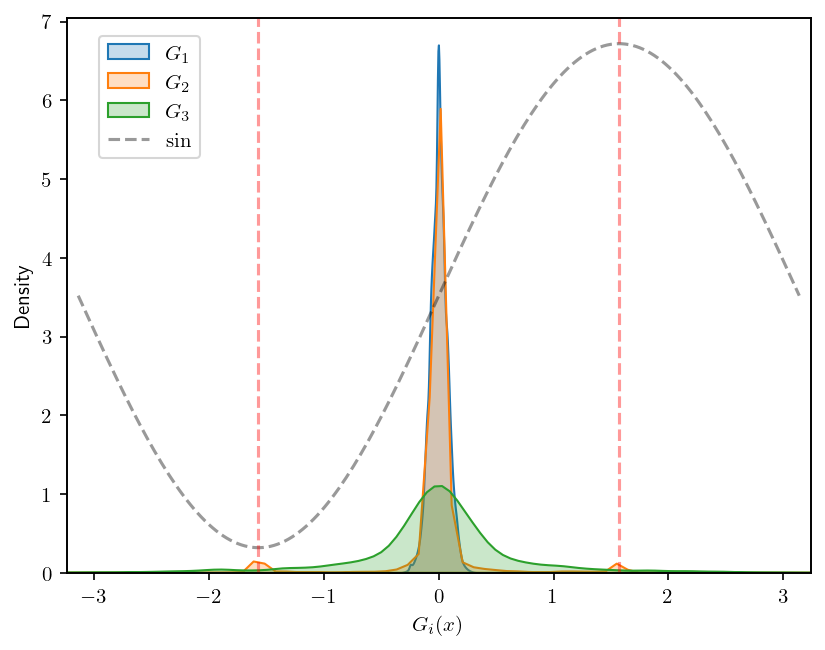}
        % \caption{}
    \end{subfigure}
    \hfill
    \begin{subfigure}[t]{0.23\textwidth}
        \centering
        \includegraphics[width=\textwidth]{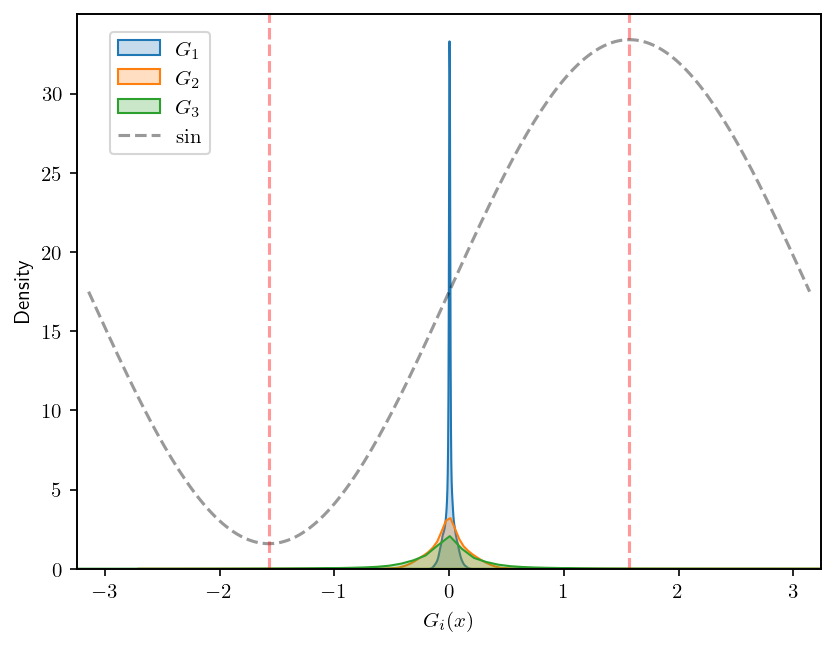}
        % \caption{}
    \end{subfigure}

    \begin{subfigure}[t]{0.23\textwidth}
        \centering
        \includegraphics[width=\textwidth]{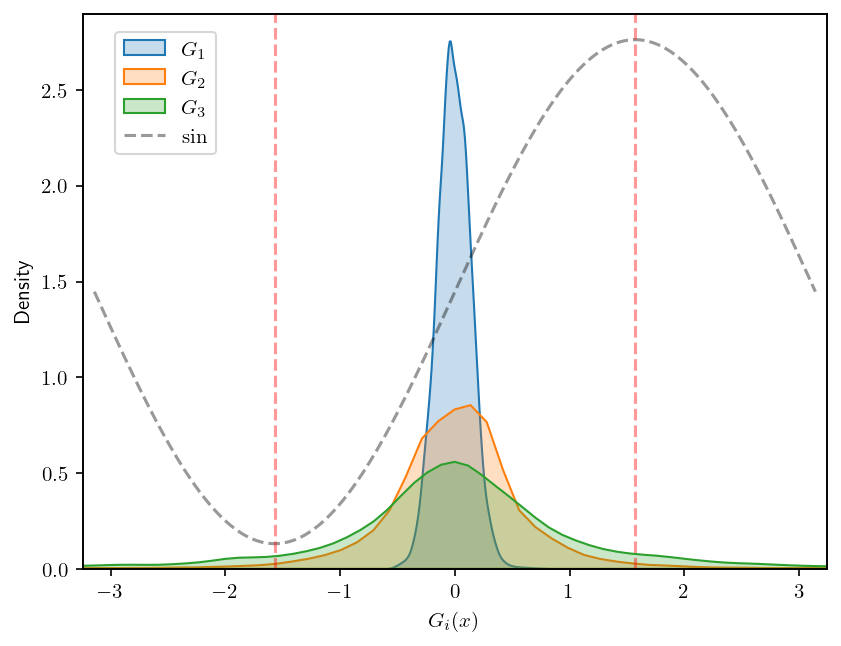}
        \caption{Wave Equation}
    \end{subfigure}
    \hfill
    \begin{subfigure}[t]{0.23\textwidth}
        \centering
        \includegraphics[width=\textwidth]{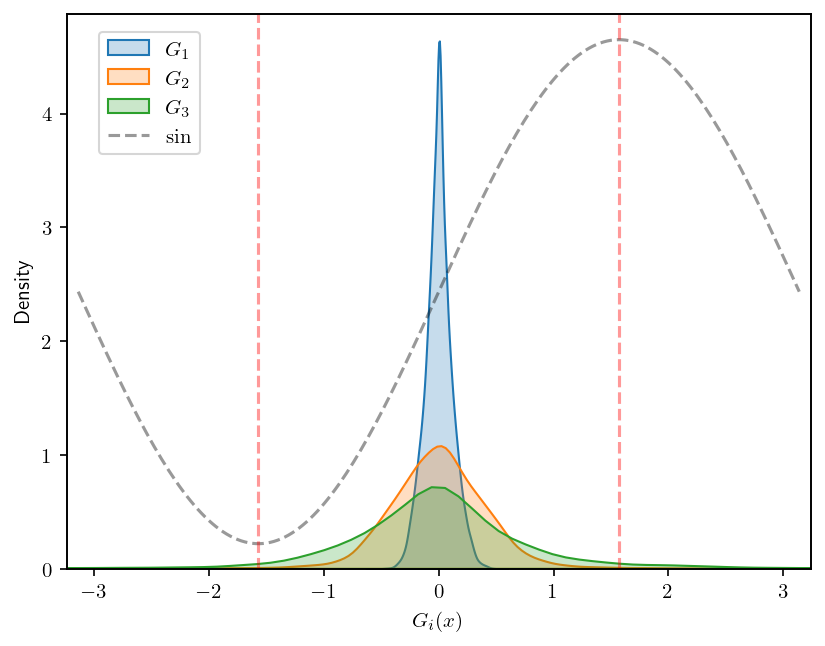}
        \caption{Klein-Gordon Equation}
    \end{subfigure}  
    \caption{Distribution of linear outputs of PINNs' layers. Top row: 1024 neurons wide, Bottom row: 256 neurons wide}
    \label{fig: layer outputs}
\end{figure}

\section{Experiments}\label{sec: experiments}

In this section, we provide numerical results for several PDEs, revealing the impact of the activation functions and the width. We first experiment with the first-order Transport equation, comparing the Softplus and Cosine activation functions with Tanh. Next, we study three second-order PDEs using Sine and Tanh activation functions. We empirically show that Sine significantly improves the performance of PDEs with second-order terms with a noticeable decrease in error as the width exceeds the number of training samples. 

In all of the experiments, we use a three-layer feed-forward network with a width varying from 64 neurons up to 1024 neurons and initialized with Normal Xavier initialization. All the models are trained with normalized inputs for 80K epochs using the Adam optimizer and an exponential learning rate decay scheme. The only exception is the Wave equation, for which the models are trained for 120K epochs for better convergence. We repeat each experiment three times with a different random initialization and report the average and the best results.

\subsection{Transport Equation}\label{sec: transport}

\begin{figure}[t]
    \centering
    \begin{subfigure}[t]{0.32\linewidth}
        \centering
        \includegraphics[width=\textwidth]{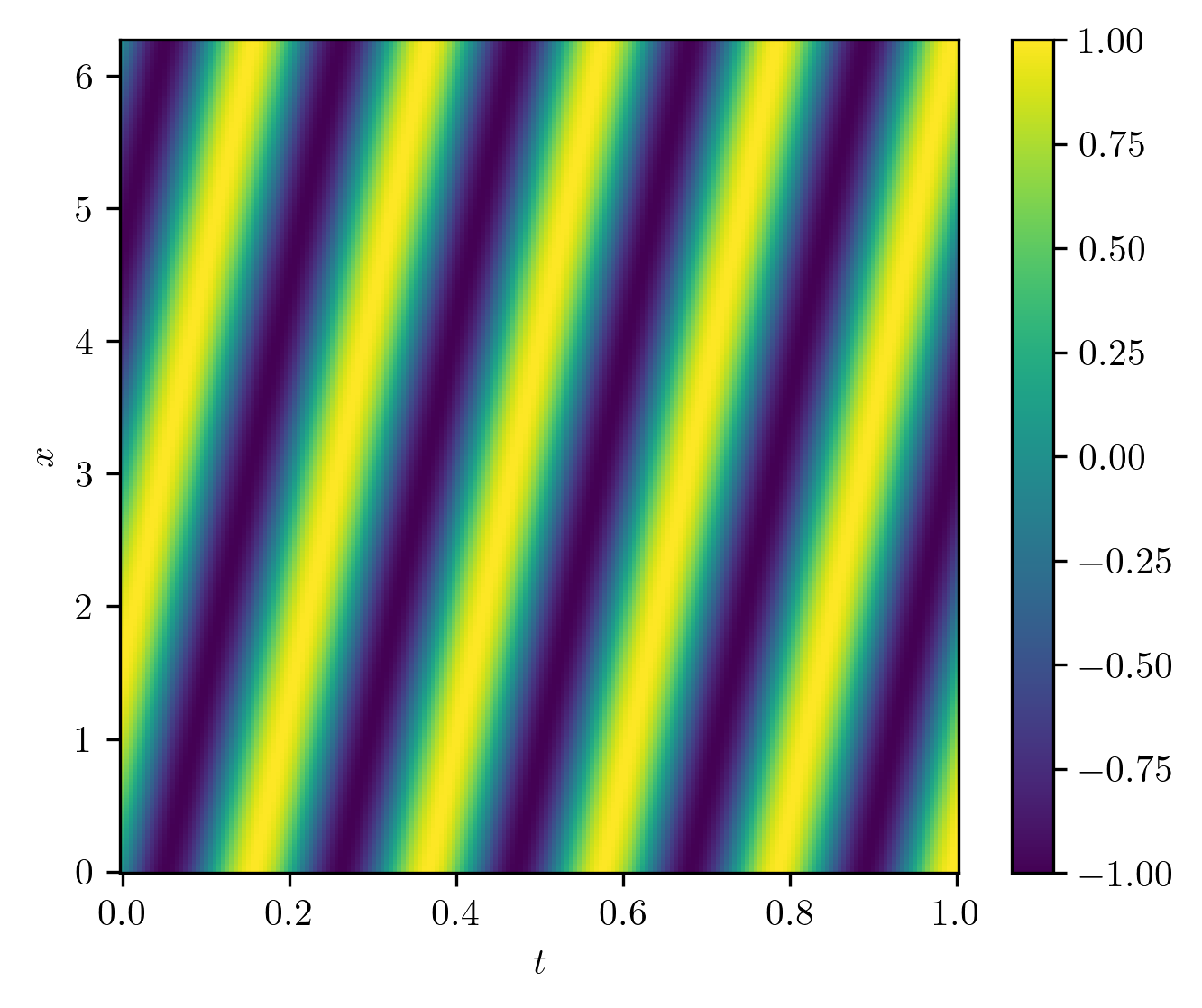}
    \end{subfigure}    
    \hfill
    \begin{subfigure}[t]{0.32\linewidth}
        \centering
        \includegraphics[width=\textwidth]{figs/transport_exact.png}
    \end{subfigure}
    \hfill
    \begin{subfigure}[t]{0.32\linewidth}
        \centering
        \includegraphics[width=\textwidth]{figs/transport_exact.png}
    \end{subfigure} 
    
    \begin{subfigure}[t]{0.32\linewidth}
        \centering
        \includegraphics[width=\textwidth]{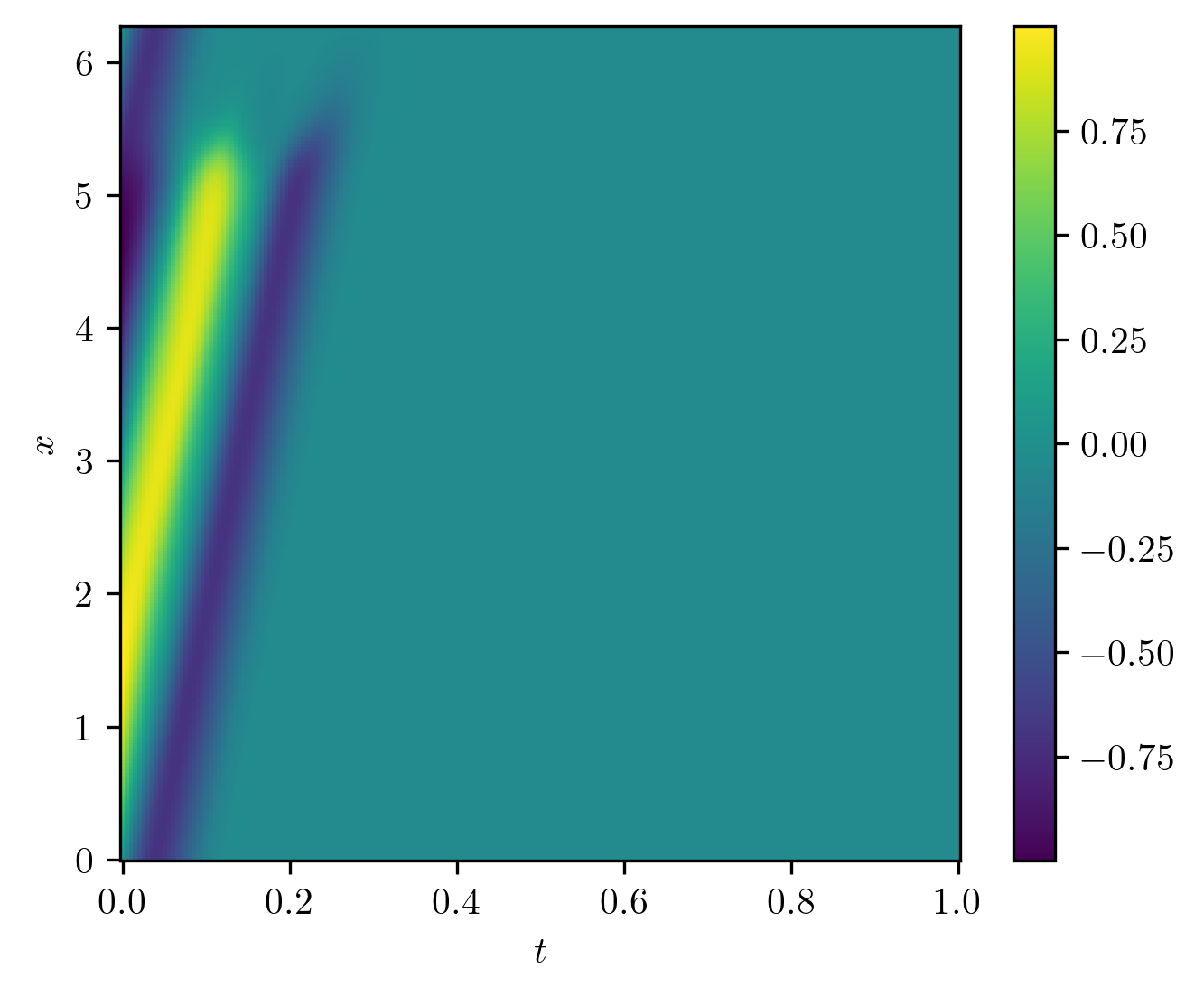}
    \end{subfigure}    
    \hfill
    \begin{subfigure}[t]{0.32\linewidth}
        \centering
        \includegraphics[width=\textwidth]{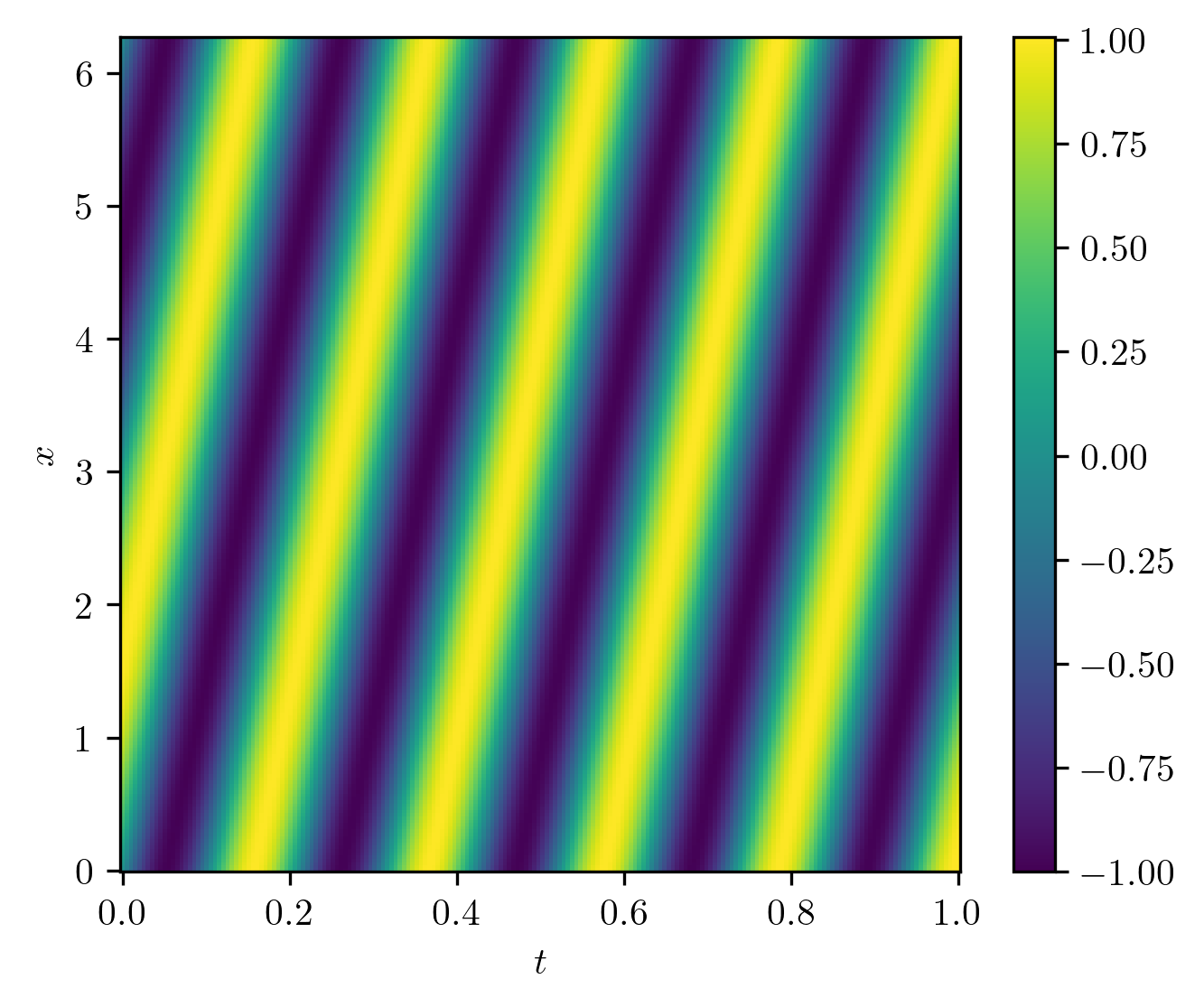}
    \end{subfigure}   
    \hfill
    \begin{subfigure}[t]{0.32\linewidth}
        \centering
        \includegraphics[width=\textwidth]{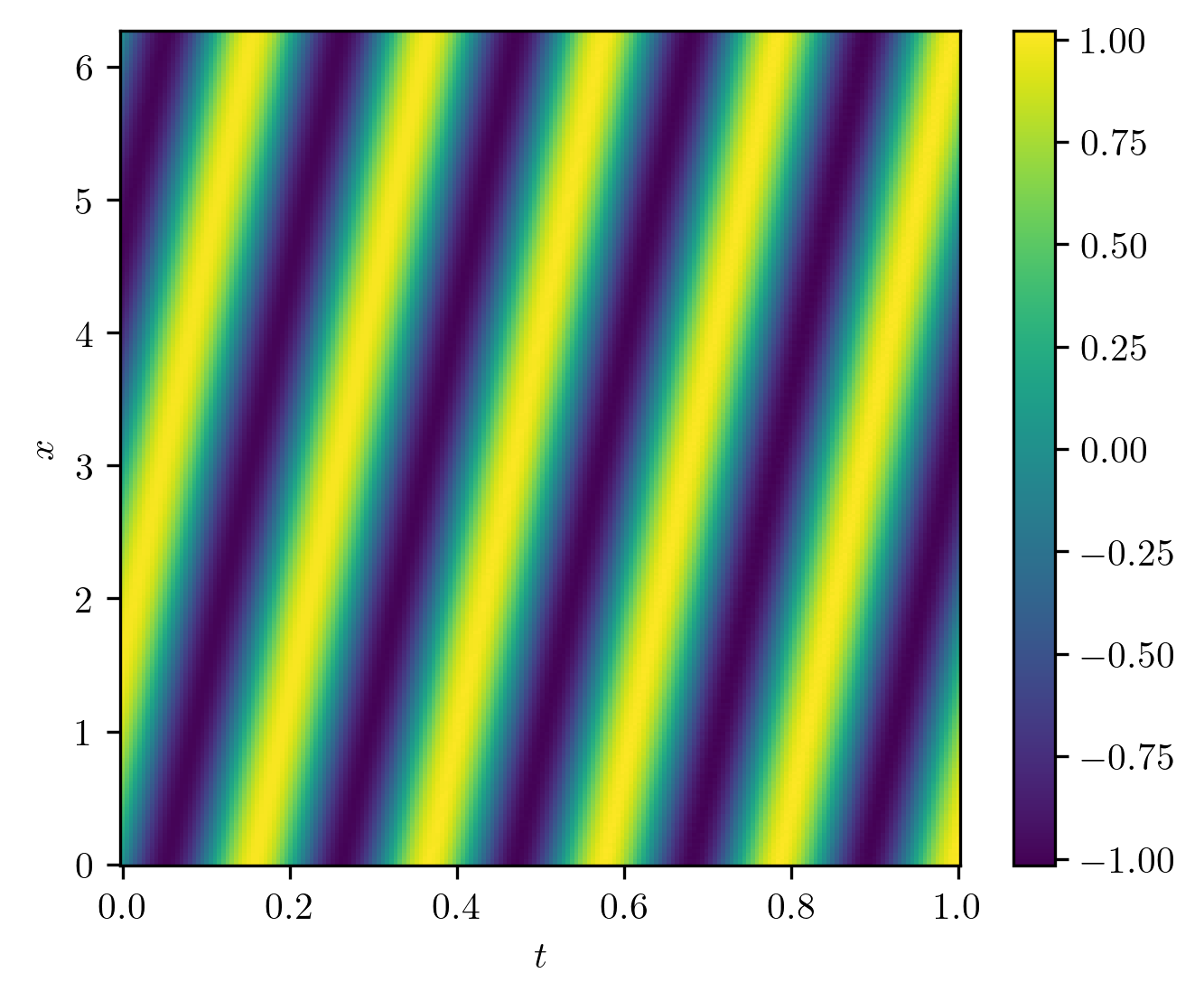}
    \end{subfigure} 

    \begin{subfigure}[t]{0.32\linewidth}
        \centering
        \includegraphics[width=\textwidth]{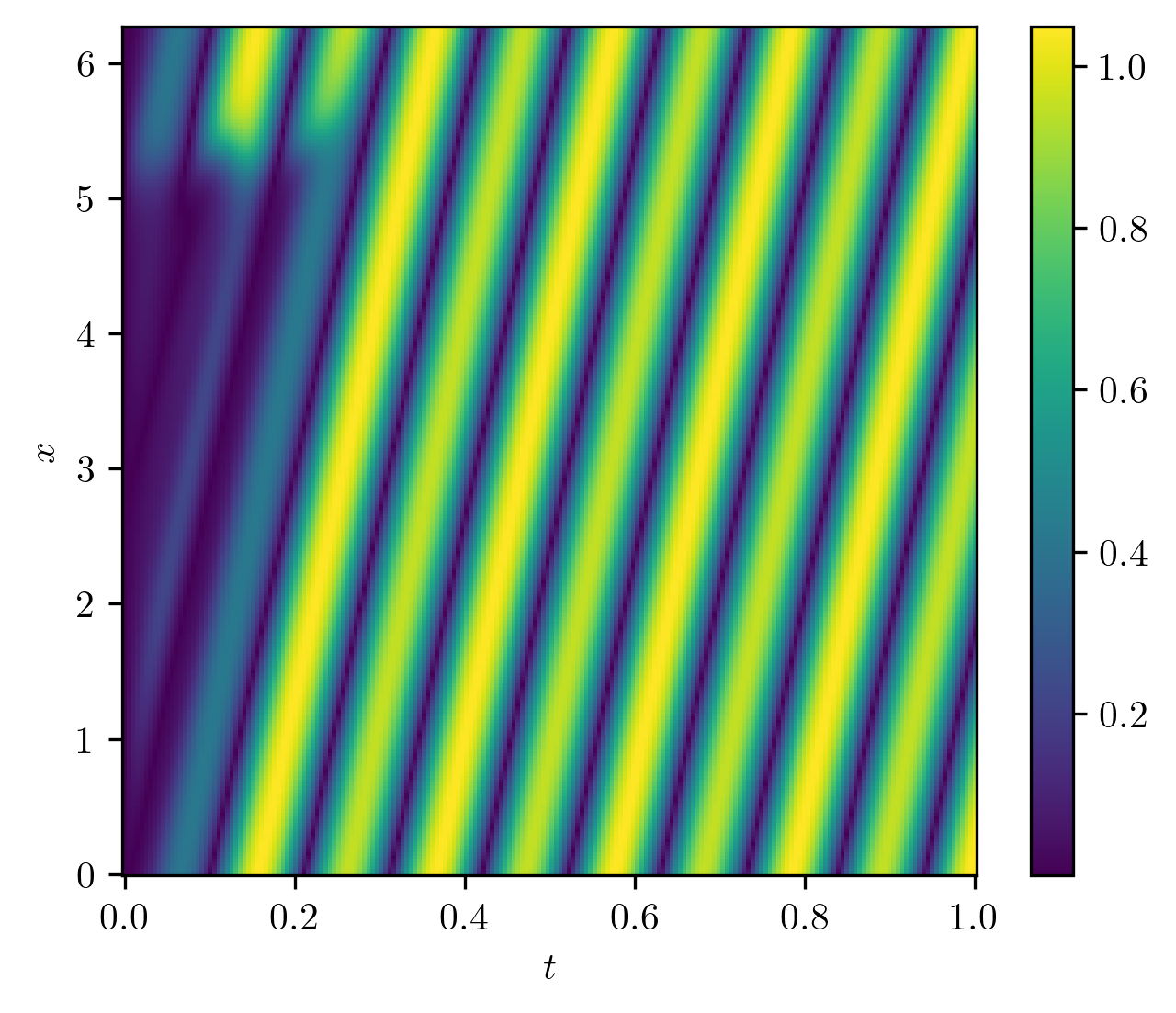}
        \caption{Tanh 256-Neuron}
    \end{subfigure}    
    \hfill
    \begin{subfigure}[t]{0.32\linewidth}
        \centering
        \includegraphics[width=\textwidth]{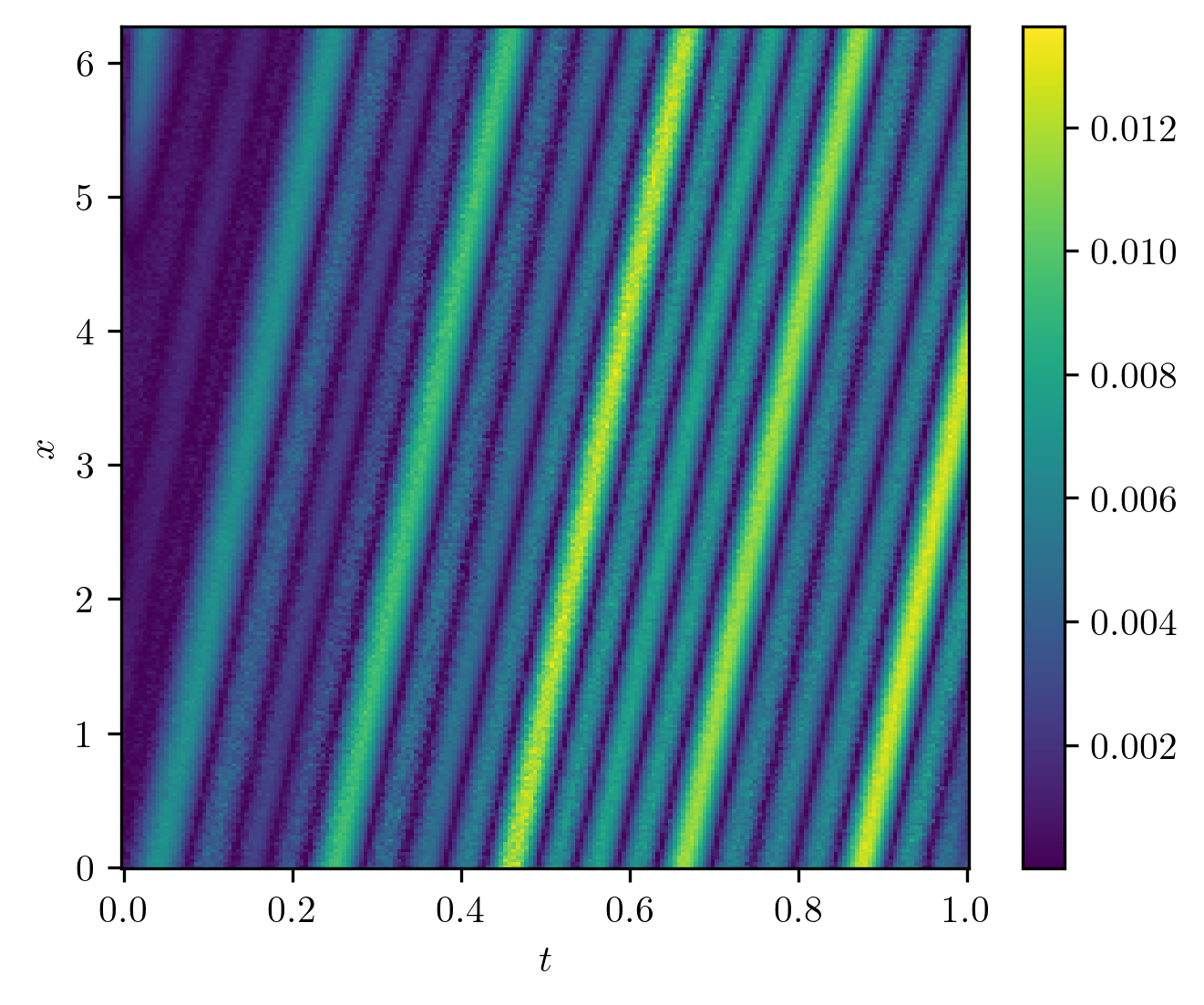}
        \caption{Cosine 256-Neuron}
    \end{subfigure}    
    \hfill
    \begin{subfigure}[t]{0.32\linewidth}
        \centering
        \includegraphics[width=\textwidth]{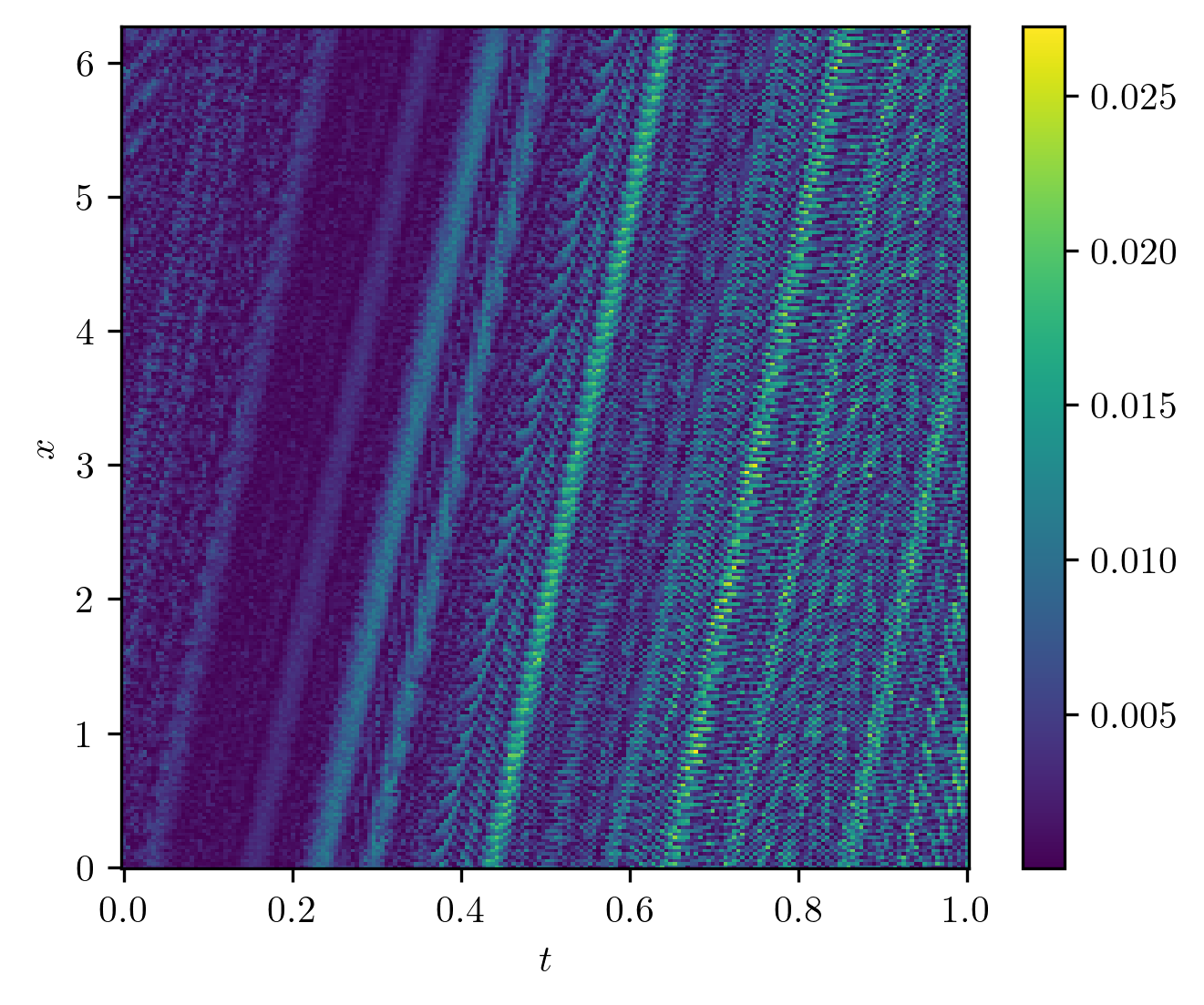}
        \caption{Softplus 256-Neuron}
    \end{subfigure} 

    \caption{Transport Equation. Top panels: Exact solution, Middle panels: Predicted solution, Bottom panels: Absolute Error}
    \label{fig: trasnport preds}
\end{figure}

The transport equation is a first-order linear PDE that describes a quantity as it moves through time and space. We experiment with the one-dimensional equation with the following formulation:
\begin{equation*}
        \frac{\partial u}{\partial t} + 30 \frac{\partial u}{\partial x} = 0 \;\;\;\; x\in\left[0, 2\pi\right], t\in\left[0, 1\right]
\end{equation*}
We also impose a periodic boundary condition $u(t, 0) = u(t, 2\pi)$ and a Dirichlet initial condition $u(0, x)$ consistent with the solution in \cite{krishnapriyan2021characterizing}. The PINN is then trained with 256 collocation training points and 200 boundary samples. To verify our results in Section \ref{sec: width N and activation}, we choose Softplus as the activation function as it is a smooth version of ReLU with a bijective first derivative equal to the Sigmoid function. We also use Cosine, as it results in a Sine network for the first-order terms. 

Table \ref{tab: transport mae} reports the mean absolute errors for the transport equation. In all cases, Softplus and Cosine perform significantly better than Tanh, as shown in Figure \ref{fig: trasnport preds}. Furthermore, as the width becomes equal to the number of collocation samples (256), we observe a noticeable decrease in the absolute error. The same improvement is also evident in the training curve of the residual loss as shown in Figure \ref{fig: residual curve transport softplus}, where the wide models follow a steep curve. For the PINNs with a width of 256 or wider with both Cosine and Softplus activation functions, the absolute errors are between $10^{-2}$ and $10^{-3}$, outperforming the reported $1.1\times 10^{-2}$ absolute error achieved with curriculum learning in \cite{krishnapriyan2021characterizing}. 

\begin{figure}[t]
    \centering
    \includegraphics[width=0.8\linewidth]{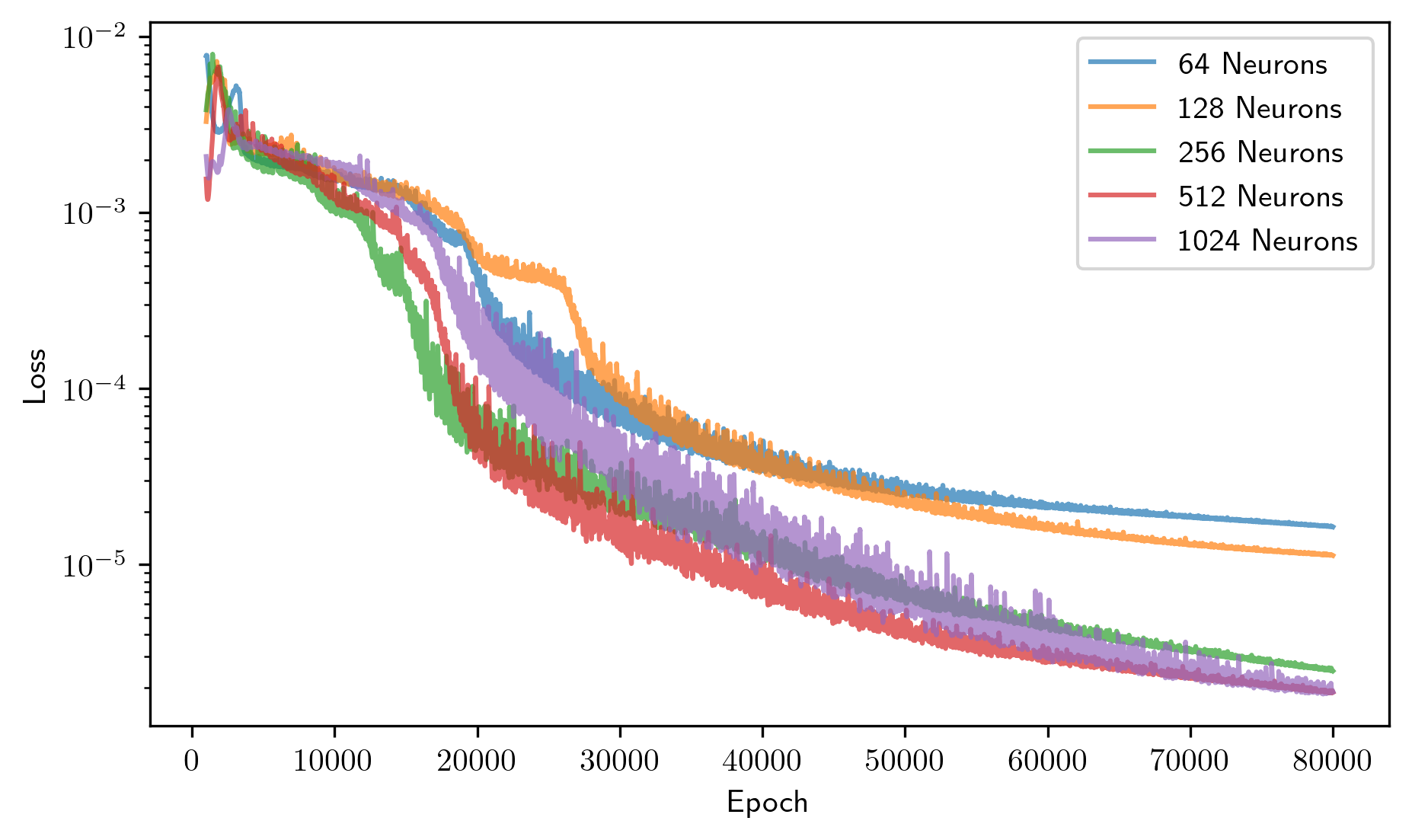}
    \caption{Average residual loss curve for the Transport PINNs with the Softplus activation function and trained with 256 collocation samples. }
    \label{fig: residual curve transport softplus}
\end{figure}

\begin{table}[t]
\centering
\resizebox{\linewidth}{!}{
\begin{tabular}{@{}c|rr|rr|rr@{}}
\toprule
      & \multicolumn{2}{c|}{Tanh} & \multicolumn{2}{c|}{Softplus} & \multicolumn{2}{c}{Cosine} \\ 
\multicolumn{1}{c|}{Width}           & \multicolumn{1}{c}{Avg}        & \multicolumn{1}{c|}{Best}        & \multicolumn{1}{c}{Avg}        & \multicolumn{1}{c|}{Best}       & \multicolumn{1}{c}{Avg}       & \multicolumn{1}{c}{Best}  \\ \midrule
64              & 0.6314       & 0.6212       & 0.0207      & 0.0111     & \textbf{0.0095}     & 0.0086      \\
128             & 0.6118       & 0.6083       & 0.0229      & 0.0156     & \textbf{0.0042}     & 0.0039      \\
\underline{256} & 0.5673       & 0.5588       & 0.0092      & 0.0040     & \textbf{0.0029}     & 0.0013      \\
512             & 0.5168       & 0.4158       & 0.0093      & 0.0062     & \textbf{0.0019}     & 0.0004      \\
1024            & 0.3632       & 0.0011       & 0.0103      & 0.0062     & \textbf{0.0014}     & 0.0011      \\ \bottomrule
\end{tabular}
}
\caption{Average and best mean absolute error for the Transport equation over three random initializations trained with 256 collocation points.}
\label{tab: transport mae}
\end{table}

\begin{table*}[t]
\centering
\resizebox{\textwidth}{!}{
\begin{tabular}{@{}crrrrrrrrrrrr@{}}
\toprule
\multicolumn{1}{l}{}       & \multicolumn{4}{c}{Helmholtz}                                                                              & \multicolumn{4}{c}{Klein Gordon}                                                                                  & \multicolumn{4}{c}{Wave}                                                                                \\ \midrule
\multicolumn{1}{l|}{}      & \multicolumn{2}{c}{Tanh}                           & \multicolumn{2}{c|}{Sine}                             & \multicolumn{2}{c}{Tanh}                           & \multicolumn{2}{c|}{Sine}                             & \multicolumn{2}{c}{Tanh}                           & \multicolumn{2}{c}{Sine}                           \\
\multicolumn{1}{c|}{Width} & \multicolumn{1}{c}{Avg} & \multicolumn{1}{c}{Best} & \multicolumn{1}{c}{Avg} & \multicolumn{1}{c|}{Best}   & \multicolumn{1}{c}{Avg} & \multicolumn{1}{c}{Best} & \multicolumn{1}{c}{Avg} & \multicolumn{1}{c|}{Best}   & \multicolumn{1}{c}{Avg} & \multicolumn{1}{c}{Best} & \multicolumn{1}{c}{Avg} & \multicolumn{1}{c}{Best} \\ \midrule
\multicolumn{1}{c|}{64}    & 4.7235                  & 4.1456                   & \textbf{0.0125}         & \multicolumn{1}{r|}{0.0087} & 0.0275                  & 0.0167                   & \textbf{0.0097}         & \multicolumn{1}{r|}{0.0059} & 0.3166                  & 0.2676                   & \textbf{0.1966}         & 0.1272                   \\
\multicolumn{1}{c|}{128}   & 2.8161                  & 2.2007                   & \textbf{0.0105}         & \multicolumn{1}{r|}{0.0064} & {0.0134}            & 0.0096                   & \textbf{0.0056}         & \multicolumn{1}{r|}{0.0035} & 0.2902                  & 0.2351                   & \textbf{0.1377}         & 0.0594                   \\
\multicolumn{1}{c|}{256}   & 1.8516                  & 1.4016                   & \textbf{0.0273}         & \multicolumn{1}{r|}{0.0212} & \underline{0.0606}                  & \underline{0.0191}                   & {\underline{\textbf{0.0052}}}   & \multicolumn{1}{r|}{\underline{0.0031}} & \textbf{0.1937}         & 0.1808                   & 0.3608                  & 0.0676                   \\
\multicolumn{1}{c|}{512}   & \underline{0.7854}                  & \underline{0.4909}                   & {\underline{\textbf{0.0044}}}   & \multicolumn{1}{r|}{\underline{0.0028}} & 0.0875                  & 0.0289                   & \textbf{0.0096}         & \multicolumn{1}{r|}{0.0049} & \underline{0.1577}                  & \underline{0.0725}                   & {\underline{\textbf{0.0604}}}   & \underline{0.0587}                   \\
\multicolumn{1}{c|}{1024}  & {0.5946}            & 0.2221                   & \textbf{0.0067}         & \multicolumn{1}{r|}{0.0056} & 0.1939                  & 0.0928                   & \textbf{0.0090}         & \multicolumn{1}{r|}{0.0046} & {0.1147}            & 0.0311                  & \textbf{0.0620}         & 0.0562                   \\ \bottomrule
\end{tabular}
}
\caption{Average and best mean absolute errors for second-order PDEs over three random initializations. Underlined values show where the width is equal to $N$.}
\label{tab: table second order}
\end{table*}

\begin{figure}[ht]
    \centering
    \begin{subfigure}[t]{0.32\linewidth}
        \centering
        \includegraphics[width=\textwidth]{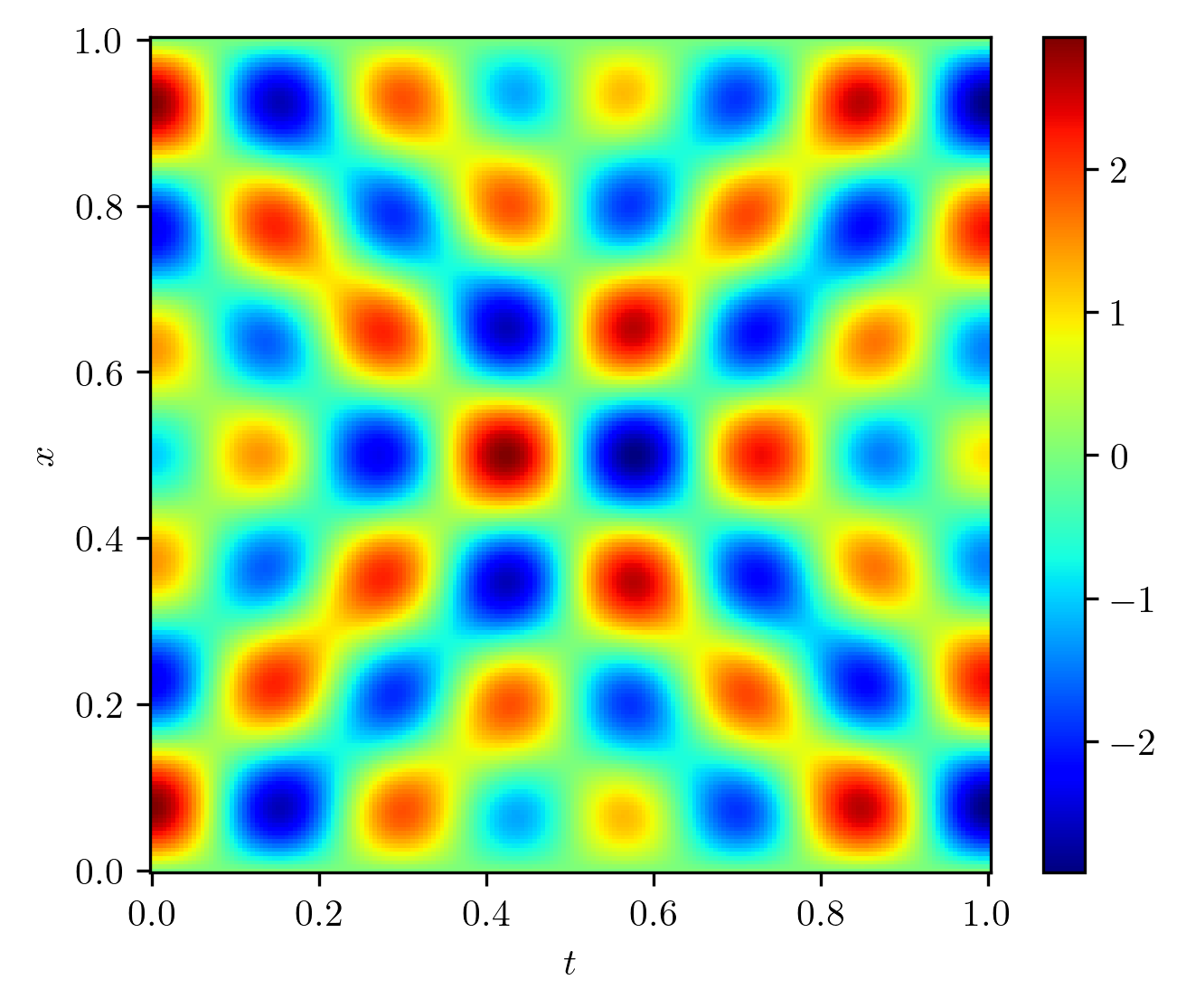}
        % \label{fig: exact transport}
    \end{subfigure}    
    \hfill
    \begin{subfigure}[t]{0.32\linewidth}
        \centering
        \includegraphics[width=\textwidth]{figs/wave_exact.png}
        % \label{fig: exact wave}
    \end{subfigure}
    \hfill
    \begin{subfigure}[t]{0.32\linewidth}
        \centering
        \includegraphics[width=\textwidth]{figs/wave_exact.png}
        % \label{fig: exact wave}
    \end{subfigure} 
    
    \begin{subfigure}[t]{0.32\linewidth}
        \centering
        \includegraphics[width=\textwidth]{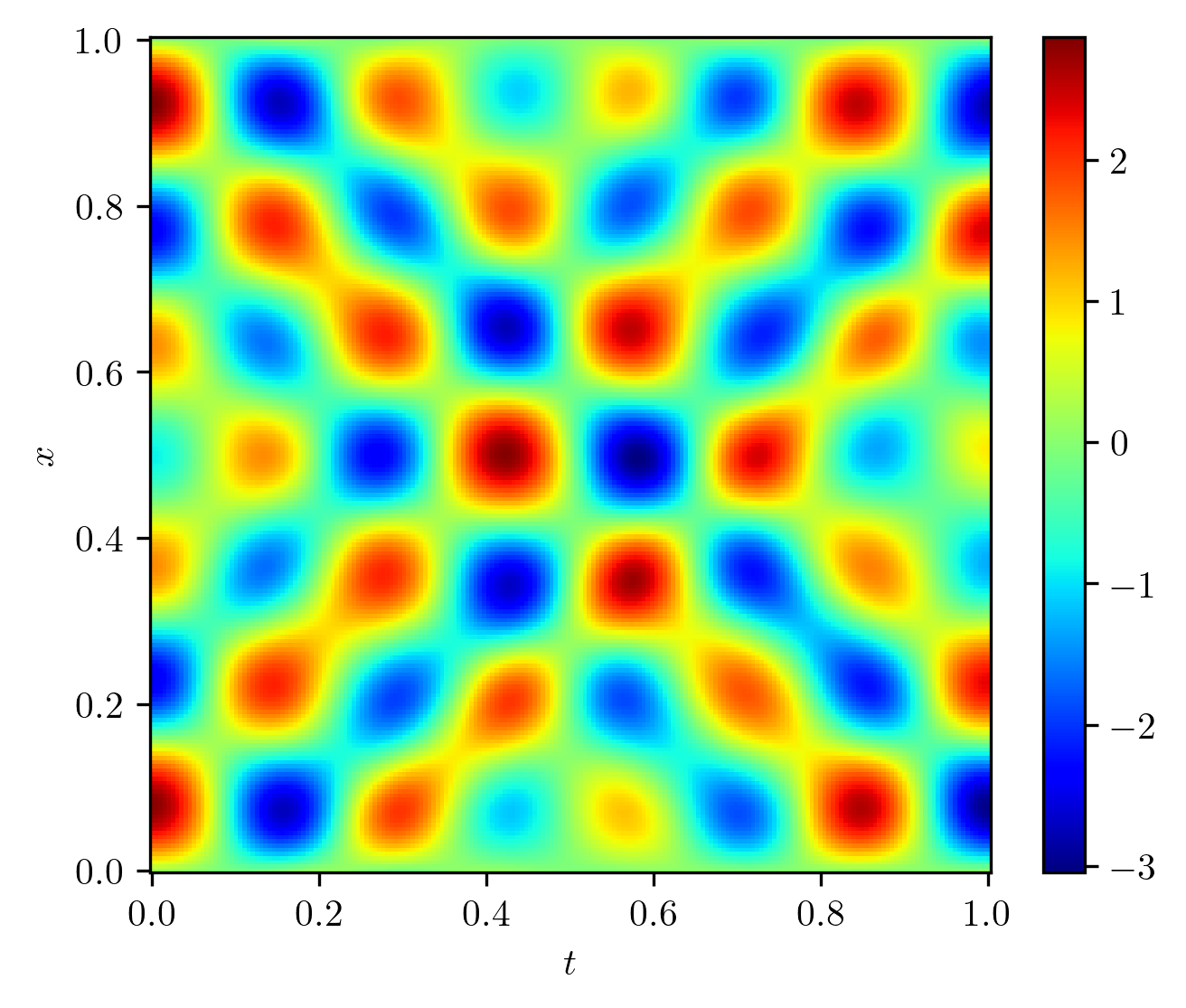}
        % \caption{Sine 1024-Neuron}
        % \label{fig: exact transport}
    \end{subfigure}    
    \hfill
    \begin{subfigure}[t]{0.32\linewidth}
        \centering
        \includegraphics[width=\textwidth]{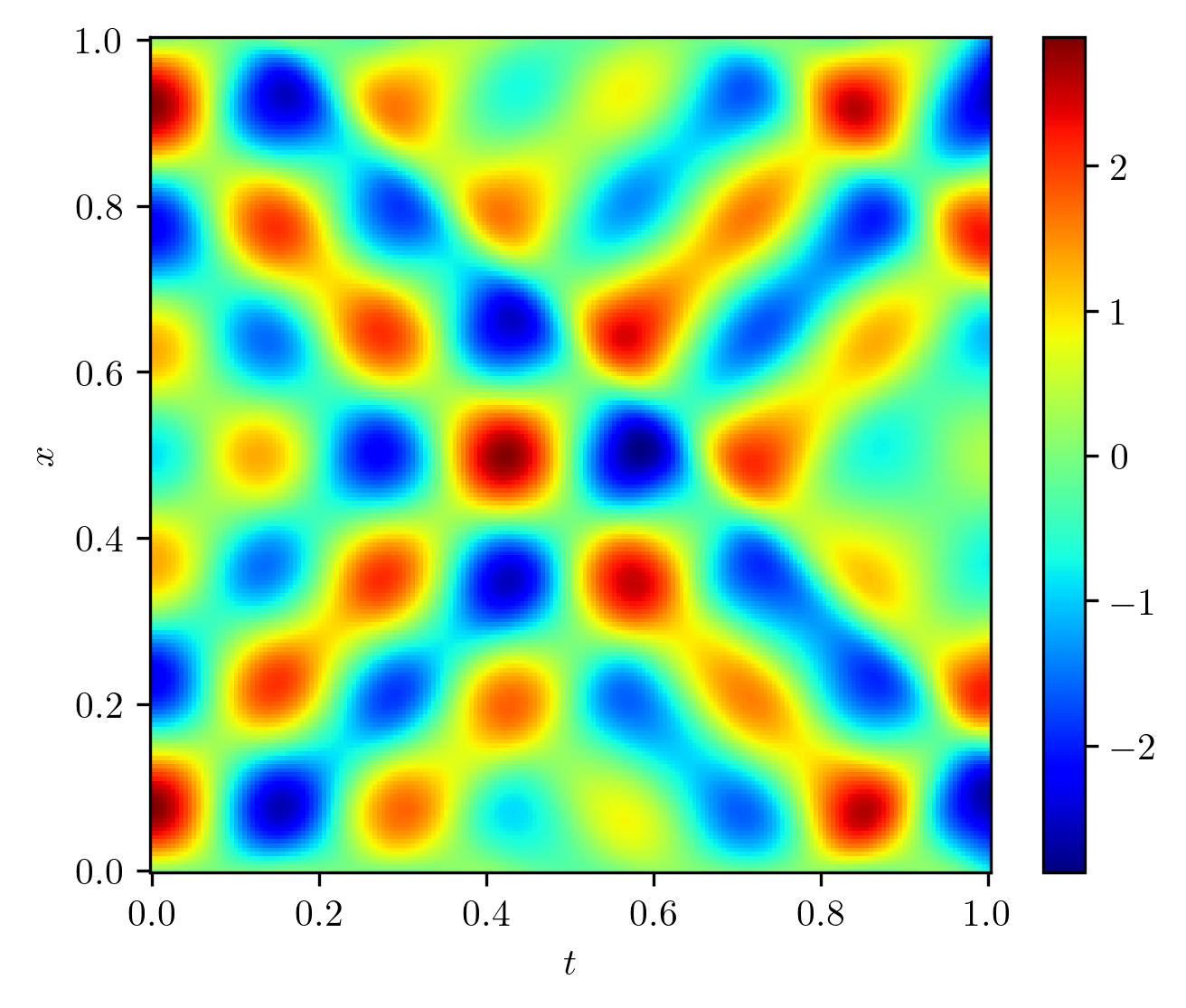}
        % \caption{Tanh 1024-Neuron}
        % \label{fig: exact wave}
    \end{subfigure}   
    \hfill
    \begin{subfigure}[t]{0.32\linewidth}
        \centering
        \includegraphics[width=\textwidth]{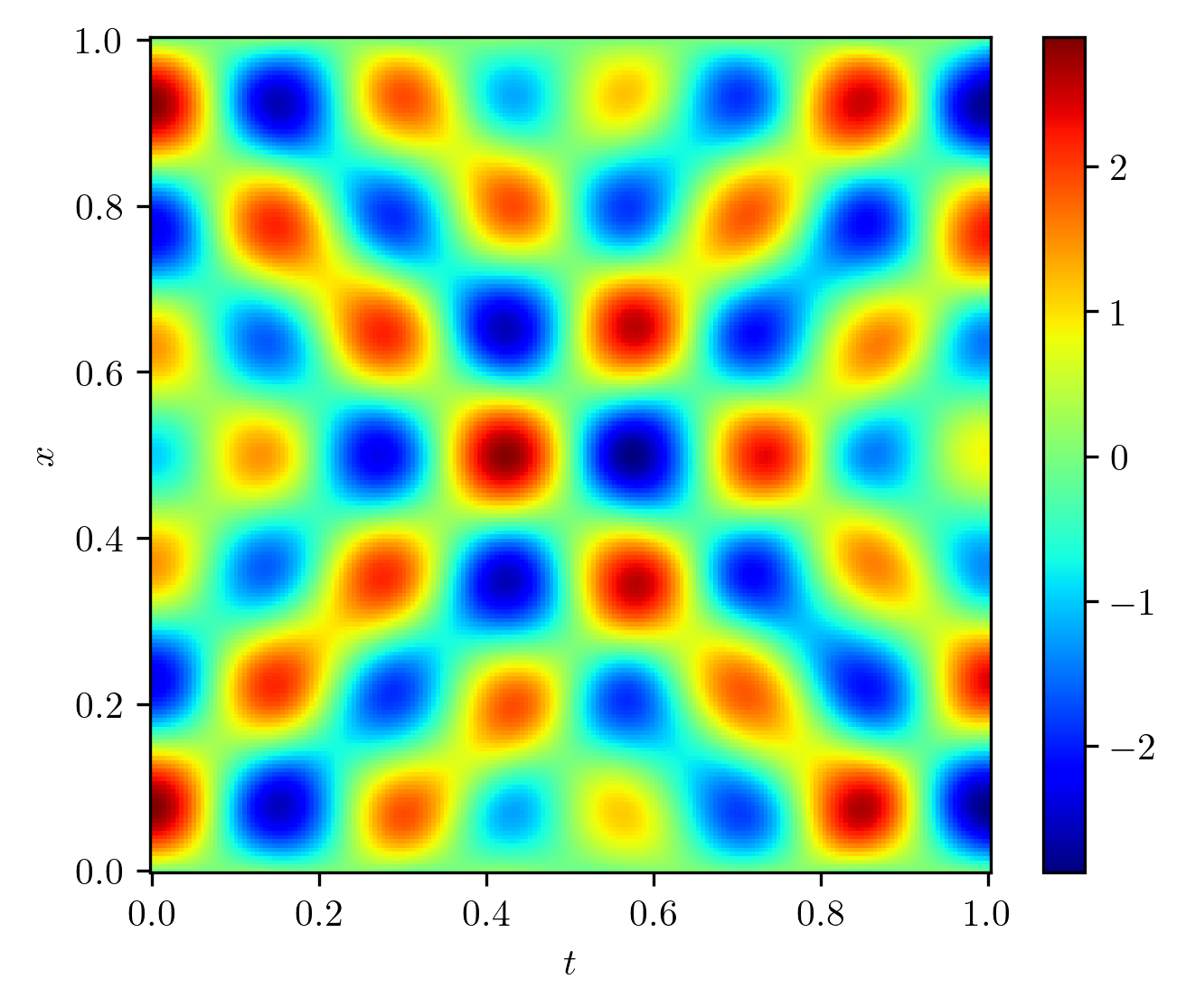}
        % \caption{Tanh 1024-Neuron}
        % \label{fig: exact wave}
    \end{subfigure} 

    \begin{subfigure}[t]{0.32\linewidth}
        \centering
        \includegraphics[width=\textwidth]{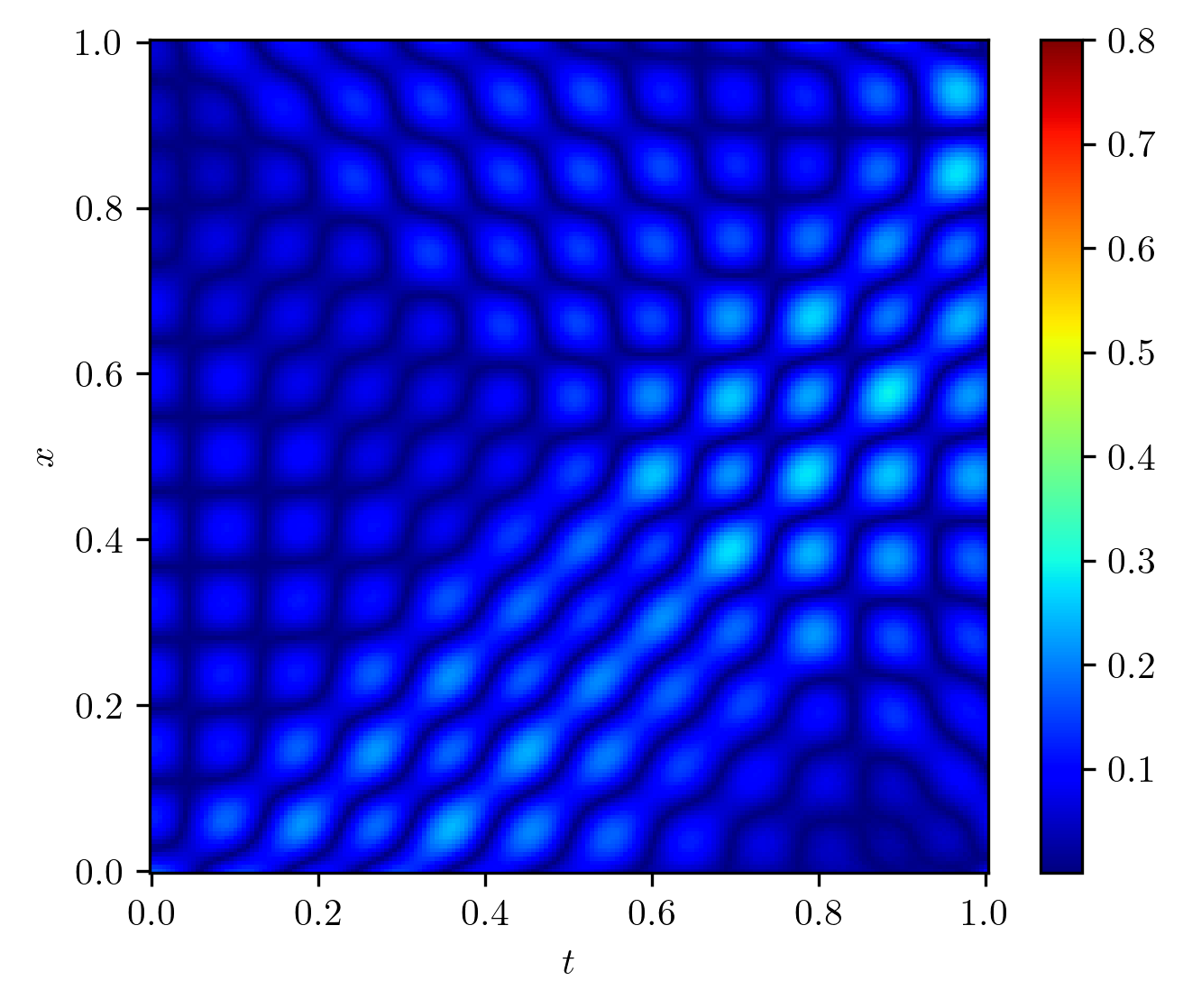}
        \caption{Sine 128-Neruon}
        % \label{fig: exact transport}
    \end{subfigure}    
    \hfill
    \begin{subfigure}[t]{0.32\linewidth}
        \centering
        \includegraphics[width=\textwidth]{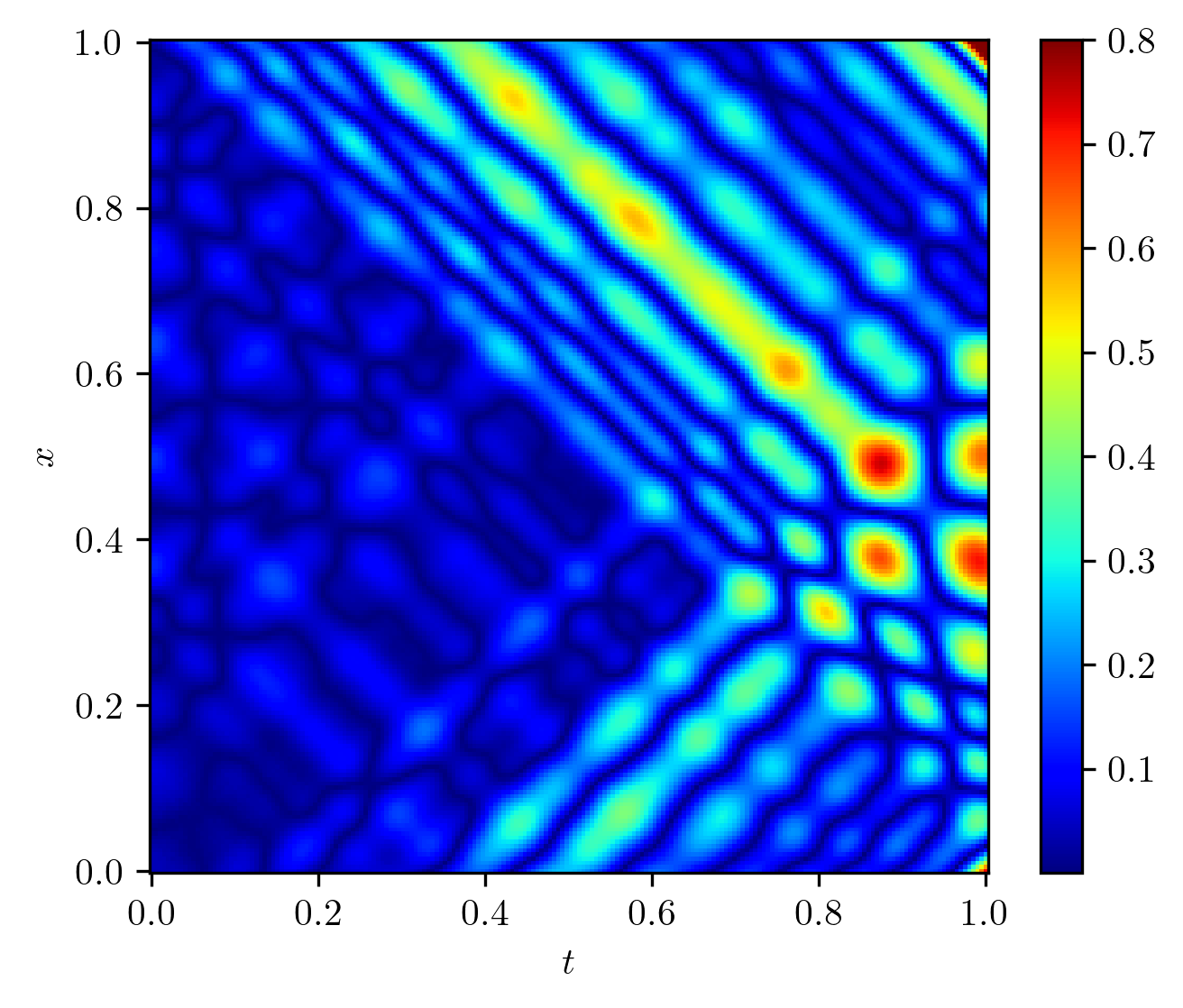}
        \caption{Tanh 128-Neuron}
        % \label{fig: exact wave}
    \end{subfigure}    
    \hfill
    \begin{subfigure}[t]{0.32\linewidth}
        \centering
        \includegraphics[width=\textwidth]{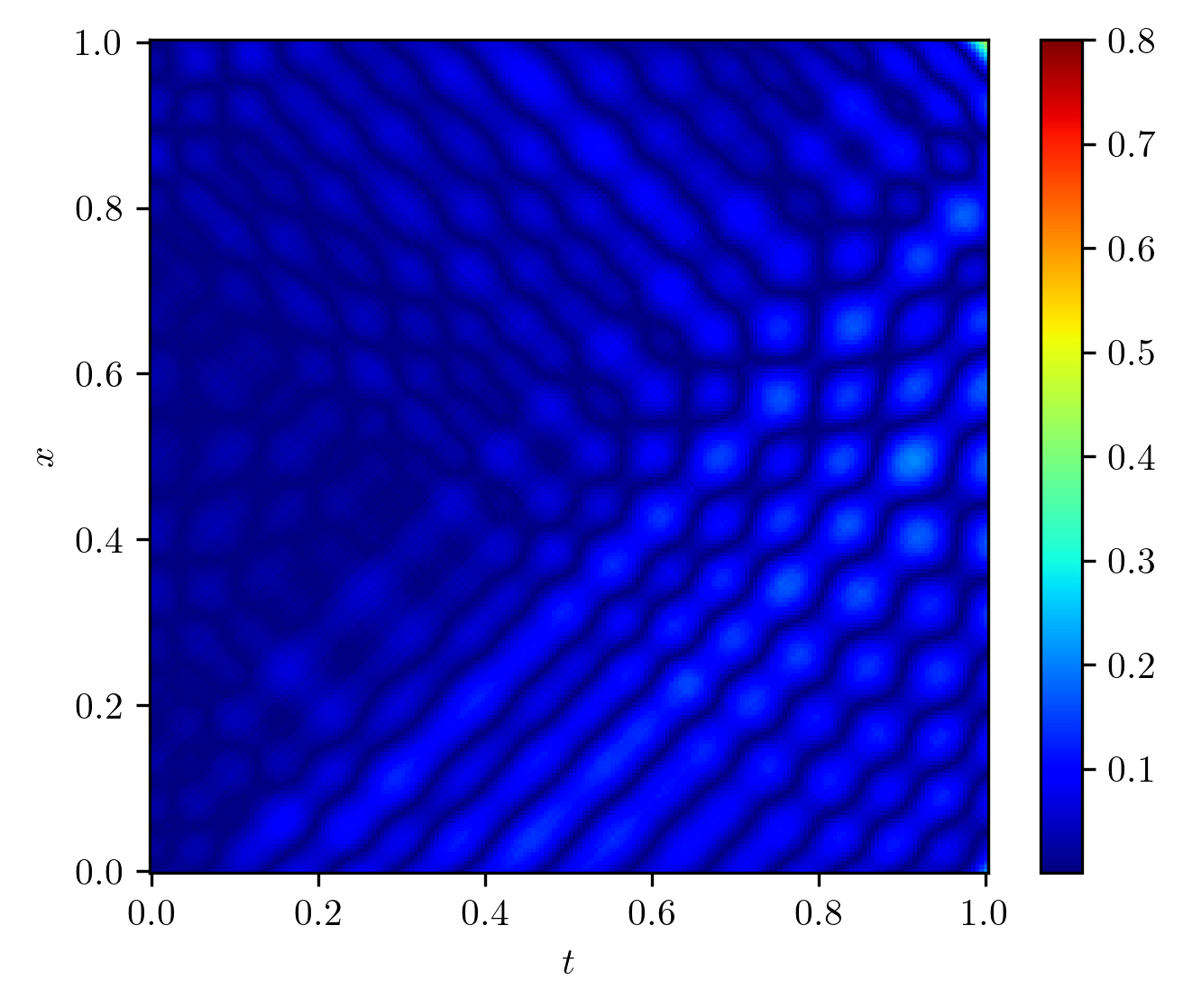}
        \caption{Tanh 1024-Neuron}
        % \label{fig: exact wave}
    \end{subfigure} 

    \caption{Wave Equation. Top panels: Exact solution, Middle panels: Predicted solution, Bottom panels: Absolute Error}
    \label{fig: wave preds}
\end{figure}

\begin{figure}[t]
    \centering
    \includegraphics[width=0.8\linewidth]{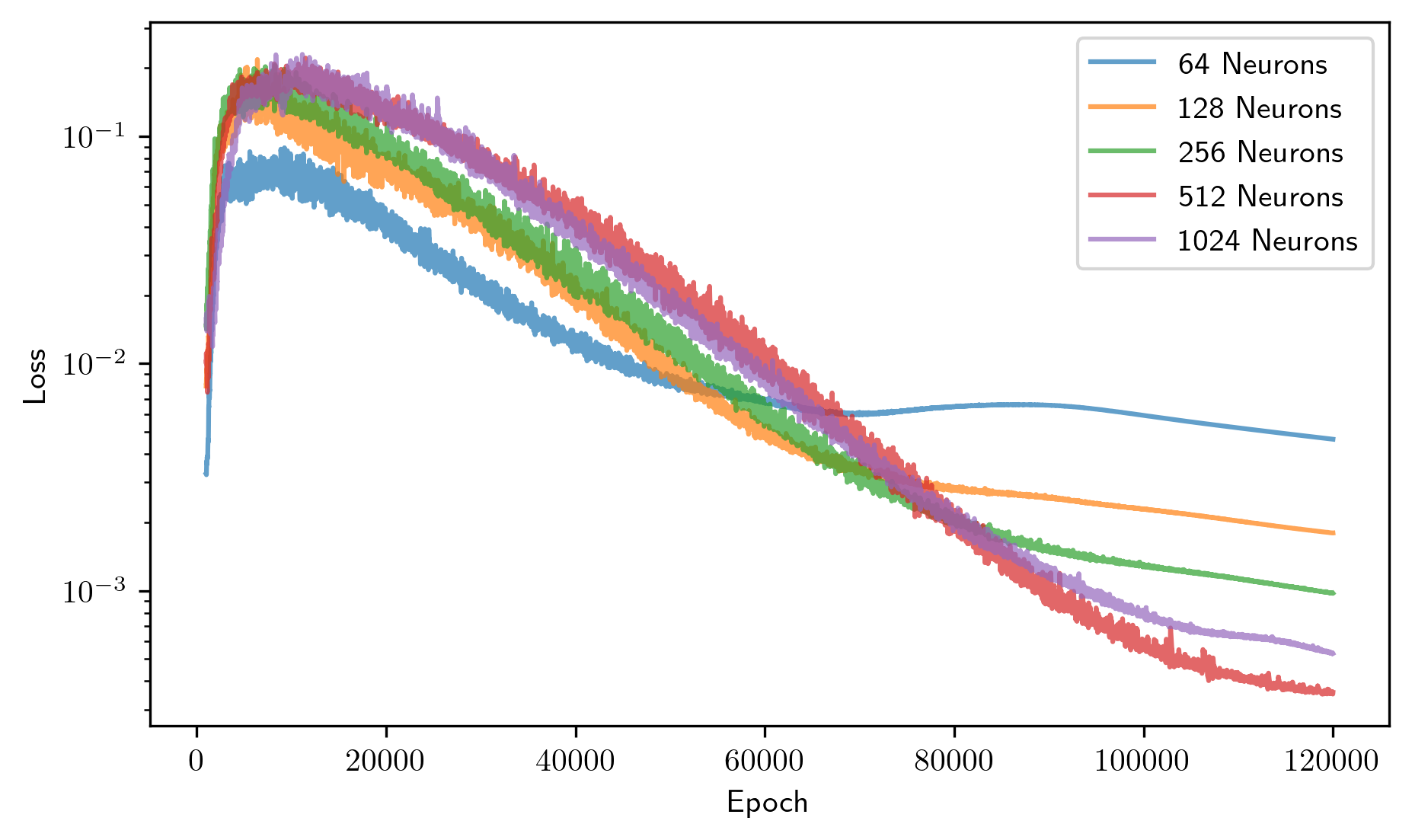}
    \caption{Average residual loss curve for Wave PINNs with the Tanh activation function, trained with 512 collocation samples.}
    \label{fig: residual curve wave}
\end{figure}

\subsection{Wave Equation}
The wave equation describes mechanical and electromagnetic waves and has the following form in 1-D: 
\begin{equation*}
    \frac{\partial^2 u}{\partial t^2} = c^2\frac{\partial^2 u}{\partial x ^ 2}
\end{equation*}
Here $c$ is the velocity of the wave. For $c=1$ and the solution
$$u(t, x) = \sin(5\pi x)\cos(5\pi t) + 2\sin(7\pi x)\cos(7 \pi t),$$
we train the PINN with 512 collocation points for $t, x\in[0, 1]$ and impose the initial and boundary conditions below with 256 boundary data points:
\begin{align*}
    &u(0, x) = \sin(5\pi x)+2\sin(7\pi x)\;\;& x \in \left[0, 1\right]\\
    &\frac{\partial u}{\partial t}(0, x) = 0 &x \in \left[0, 1\right] \\
    &u(t, 1) = u(t, 0) = 0 & t \in \left[0, 1\right]
\end{align*}

The residual loss training curves in Figure \ref{fig: residual curve wave} highlight the impact of the width in training PINNs, with wider models consistently achieving smaller loss values, and the 512- and 1024-neuron models following almost the same path. This behaviour is similar to the curves in Figure \ref{fig: residual curve transport softplus} for the transport equation, where all the models with a width of at least $N$ achieve very close loss values. As evident in Table \ref{tab: table second order}, PINNs with both Tanh and Sine activation functions perform notably better than narrow PINNs, with a mean absolute error of $3.11\times 10^{-2}$ and $5.62\times 10^{-2}$ for Tanh and Sine respectively. Figure \ref{fig: wave preds} illustrates the inability of the narrow Tanh network compared to Sine and wide Tanh models in representing the solution. Also, while narrow Sine PINNs are able to find good solutions, the training is more unstable and the performance is worse on average compared to wider models.

\subsection{Helmholtz Equation}
We consider a 2D Helmholtz equation of the following form with $t,x \in \left[-1, 1\right]$ as in \cite{wong2022learning}:
\begin{equation*}
    \begin{aligned}
        \frac{\partial^2 u}{\partial x^2} + \frac{\partial^2 u}{\partial y^2} + u = (1-\pi^2-(6\pi)^2)\sin(\pi x)\cos(6\pi y)
    \end{aligned}
\end{equation*}
With zero boundary conditions, the solution is given by
$$
u(x, y) = \sin(\pi x)\sin(6\pi y).
$$
The PINN is then trained with 512 collocation points and 256 boundary data, using Sine and Tanh as the activation function. As reported in Table \ref{tab: table second order}, Sine performs remarkably better than Tanh across all widths. Similar to the Wave and Transport equations, there is also a slight decrease in the errors as the width exceeds the number of collocation points. As in the Wave equation, PINNs with the Tanh activation function start to perform better as the width grows larger. However, as illustrated in Figure \ref{fig: Helmholtz preds}, even the 1024 neurons-wide Tanh network is still unable to capture the solution, while the Sine network finds an acceptable solution even with a width of 128 neurons.

\begin{figure}[t]
    \centering
    \begin{subfigure}[t]{0.32\linewidth}
        \centering
        \includegraphics[width=\textwidth]{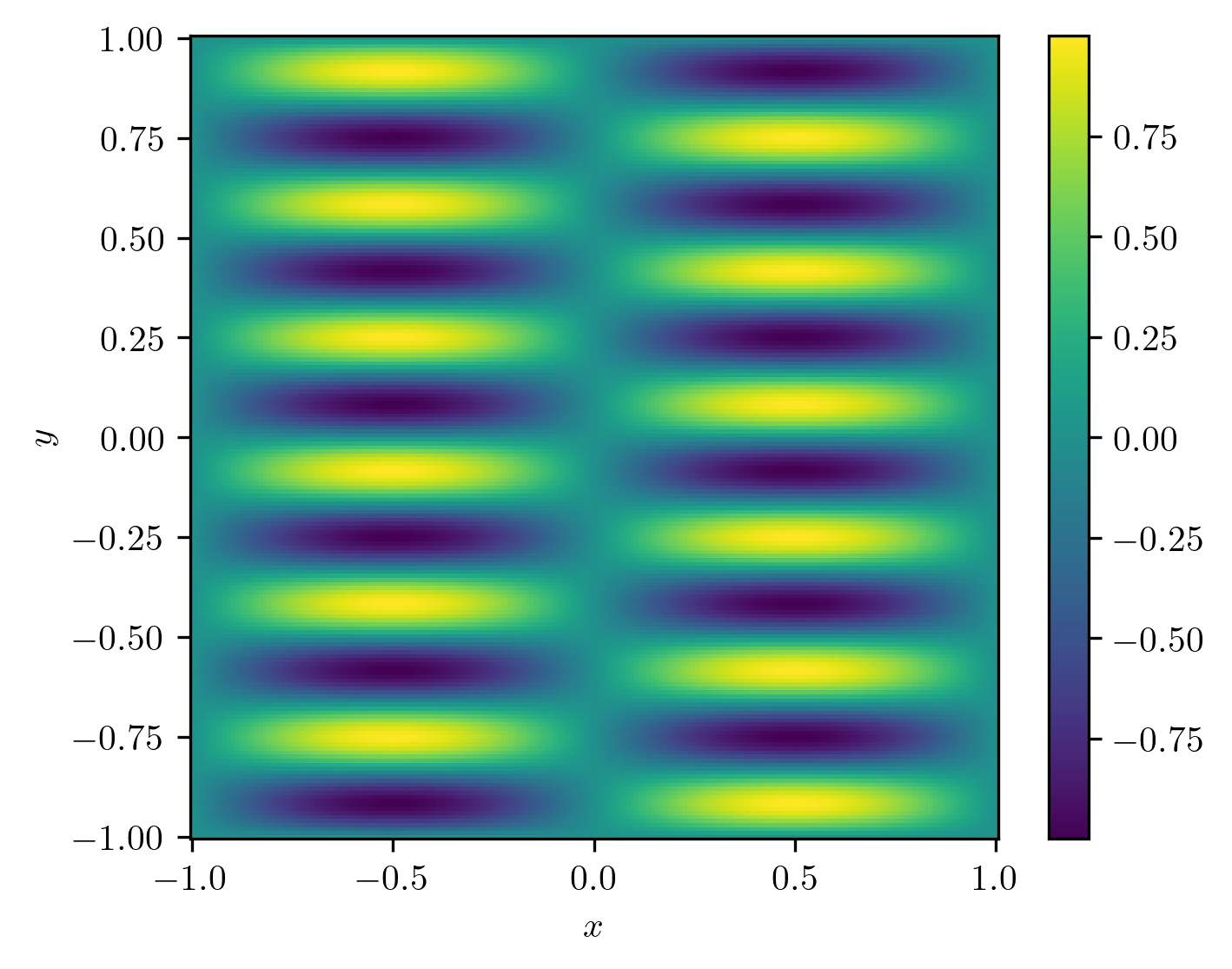}
    \end{subfigure}    
    \hfill
    \begin{subfigure}[t]{0.32\linewidth}
        \centering
        \includegraphics[width=\textwidth]{figs/helmholtz_exact.png}
    \end{subfigure}
    \hfill
    \begin{subfigure}[t]{0.32\linewidth}
        \centering
        \includegraphics[width=\textwidth]{figs/helmholtz_exact.png}
    \end{subfigure} 
    
    \begin{subfigure}[t]{0.32\linewidth}
        \centering
        \includegraphics[width=\textwidth]{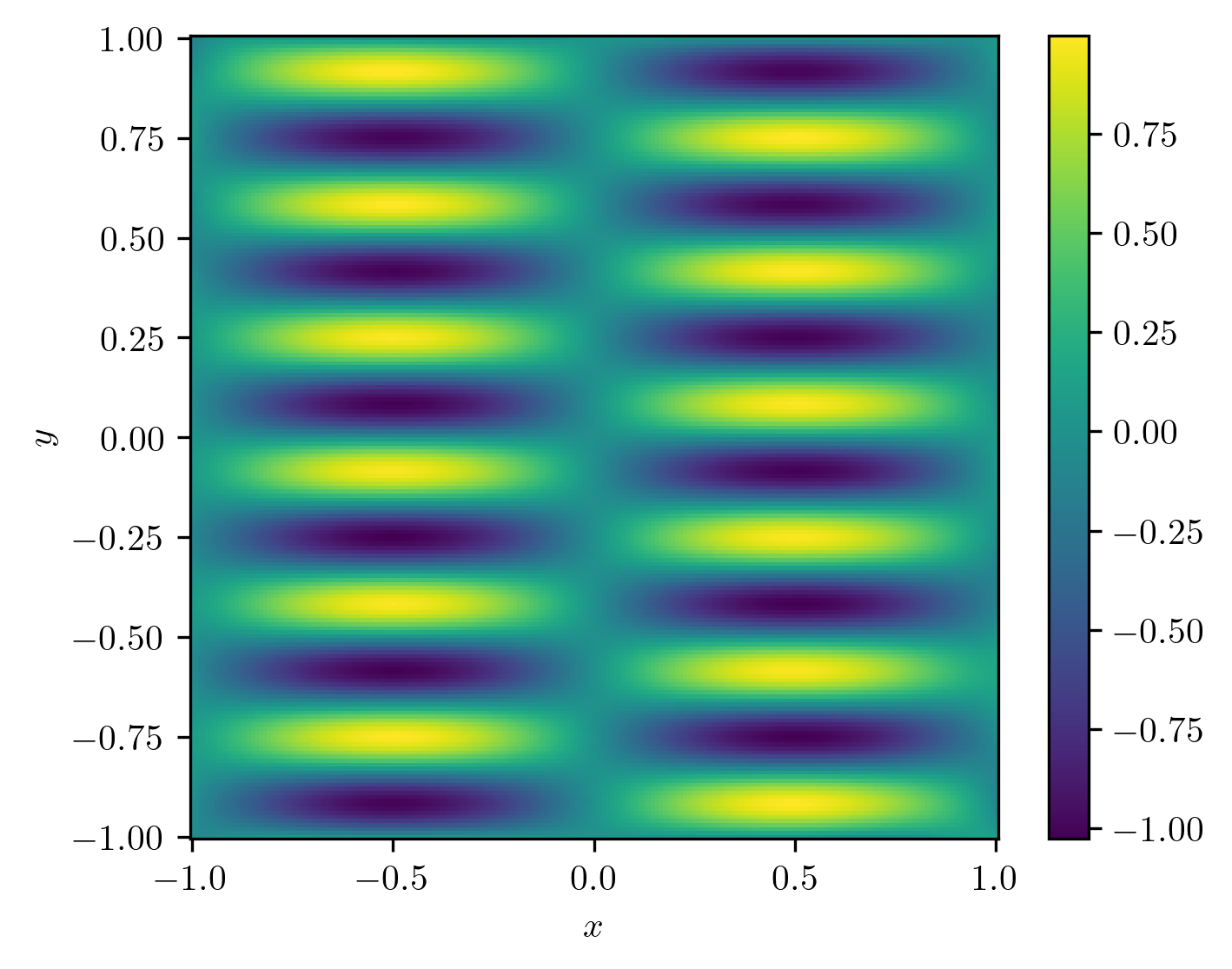}
    \end{subfigure}    
    \hfill
    \begin{subfigure}[t]{0.32\linewidth}
        \centering
        \includegraphics[width=\textwidth]{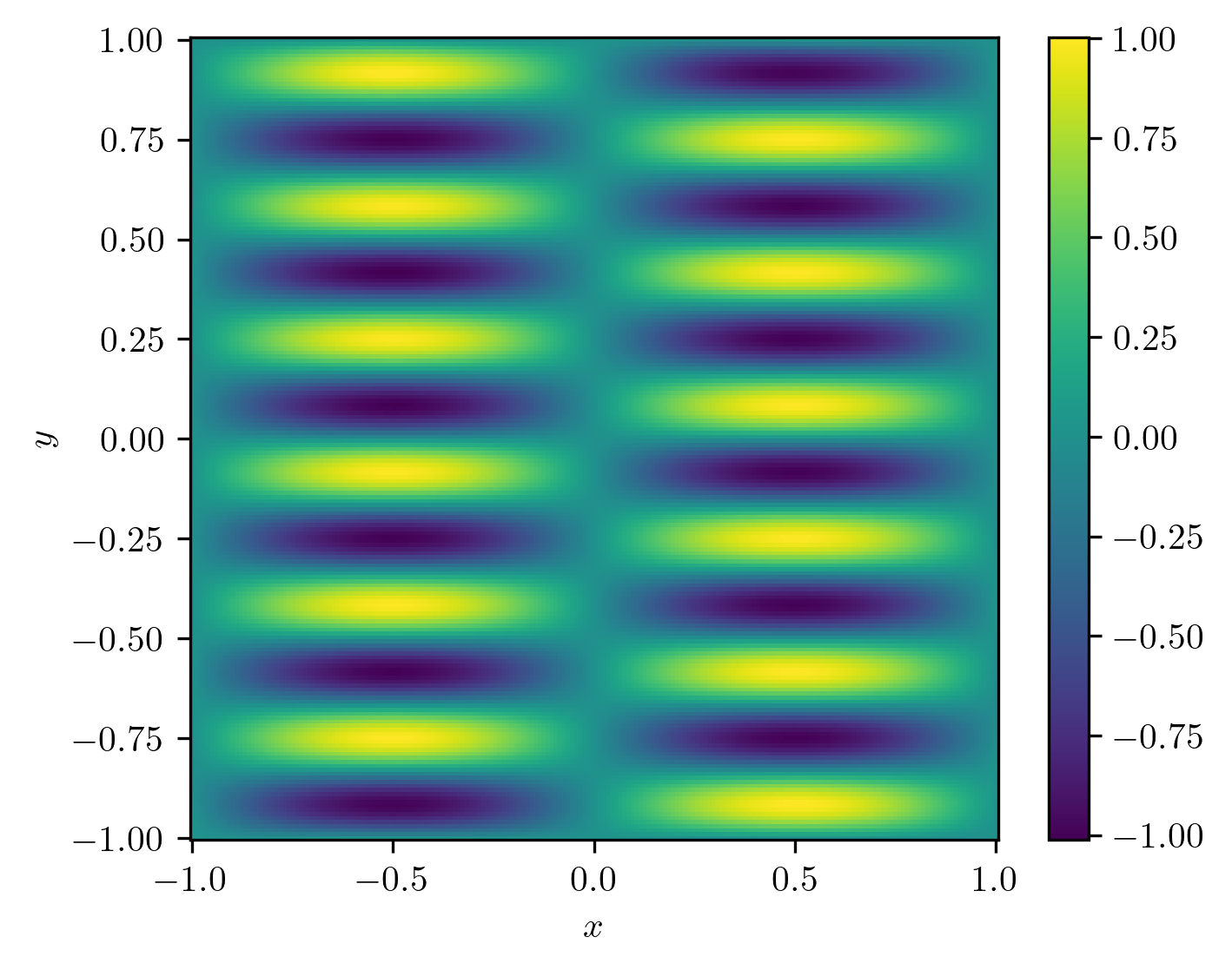}
    \end{subfigure}   
    \hfill
    \begin{subfigure}[t]{0.32\linewidth}
        \centering
        \includegraphics[width=\textwidth]{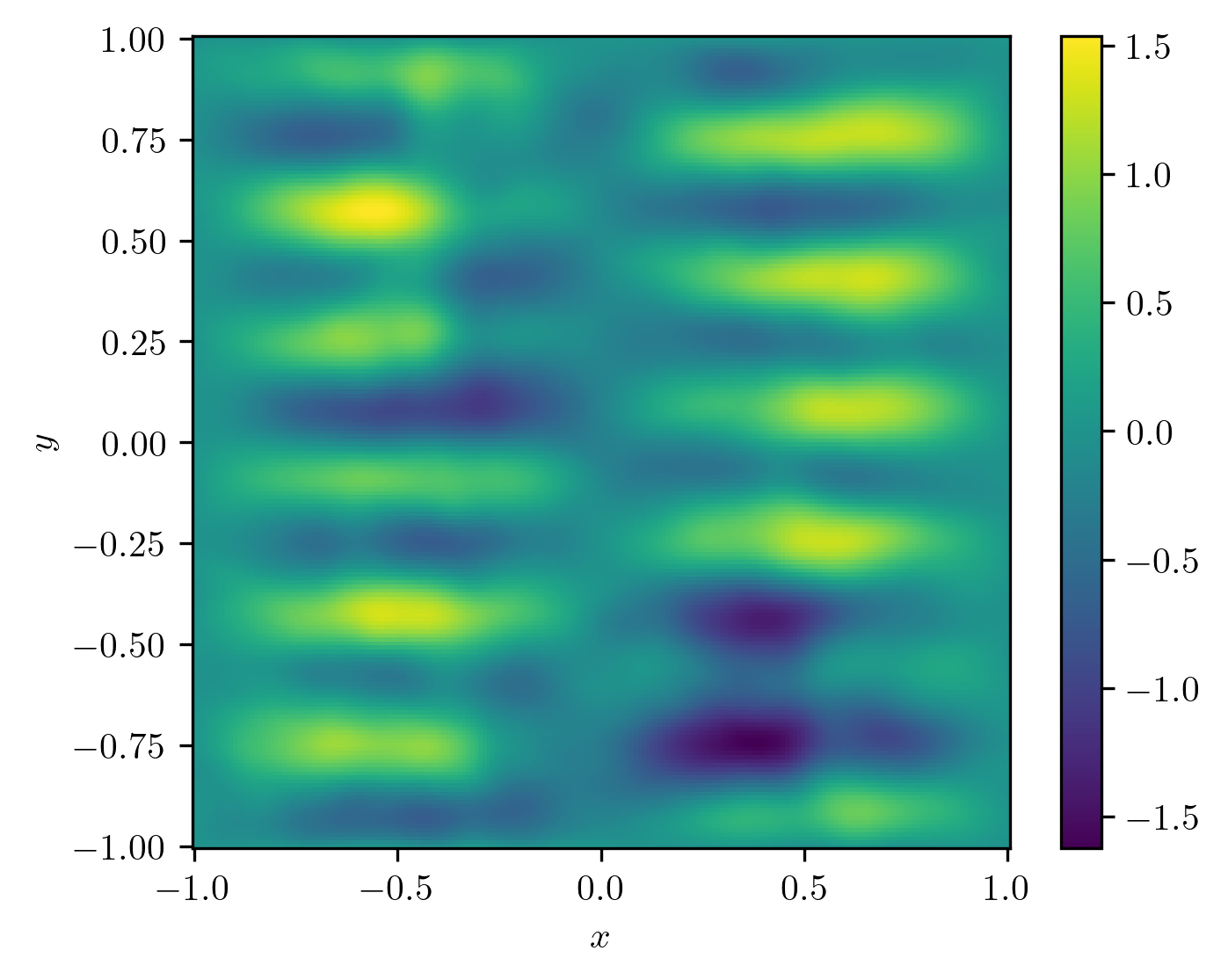}
    \end{subfigure} 

    \begin{subfigure}[t]{0.32\linewidth}
        \centering
        \includegraphics[width=\textwidth]{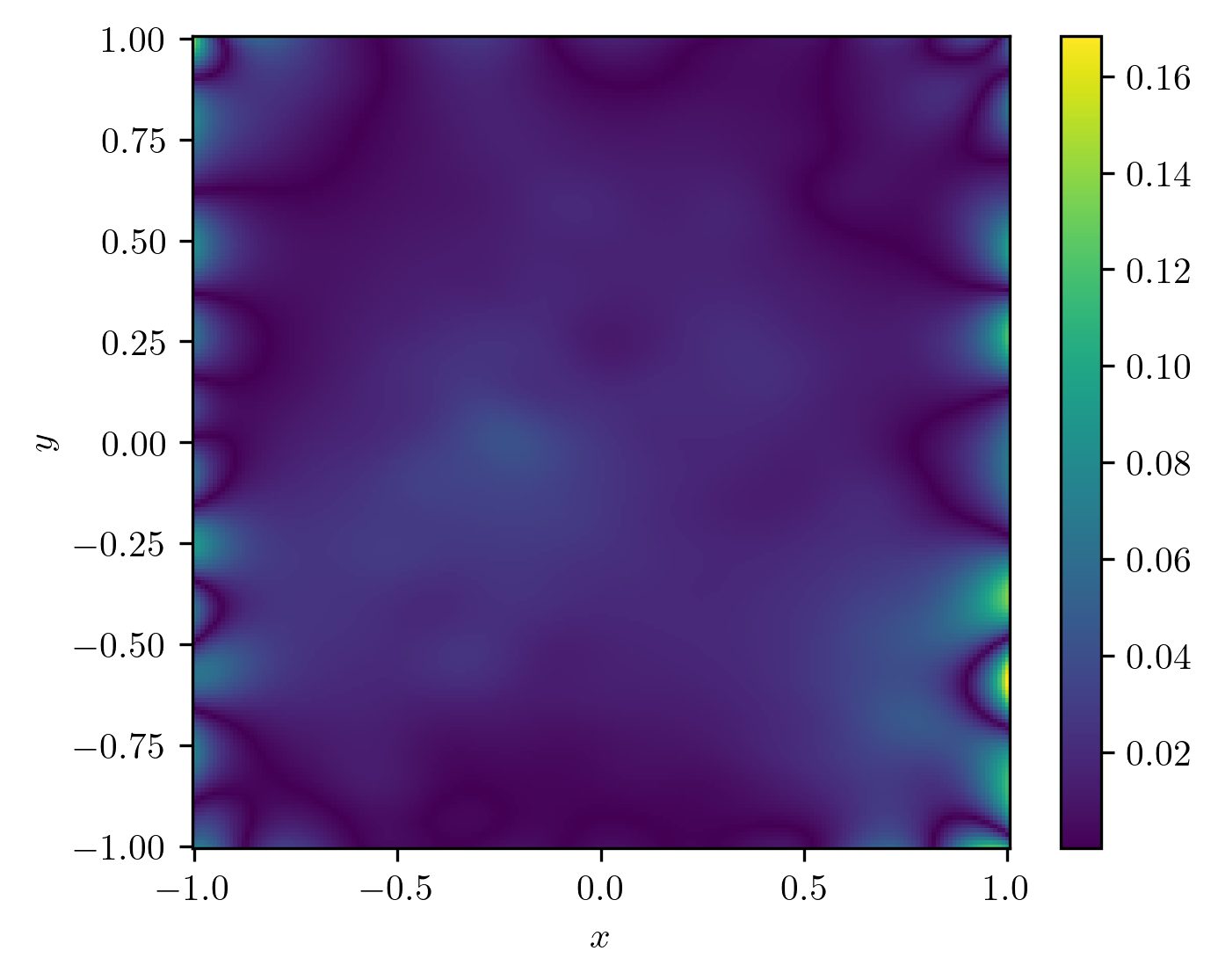}
        \caption{Sine 128-Neuron}
    \end{subfigure}    
    \hfill
    \begin{subfigure}[t]{0.32\linewidth}
        \centering
        \includegraphics[width=\textwidth]{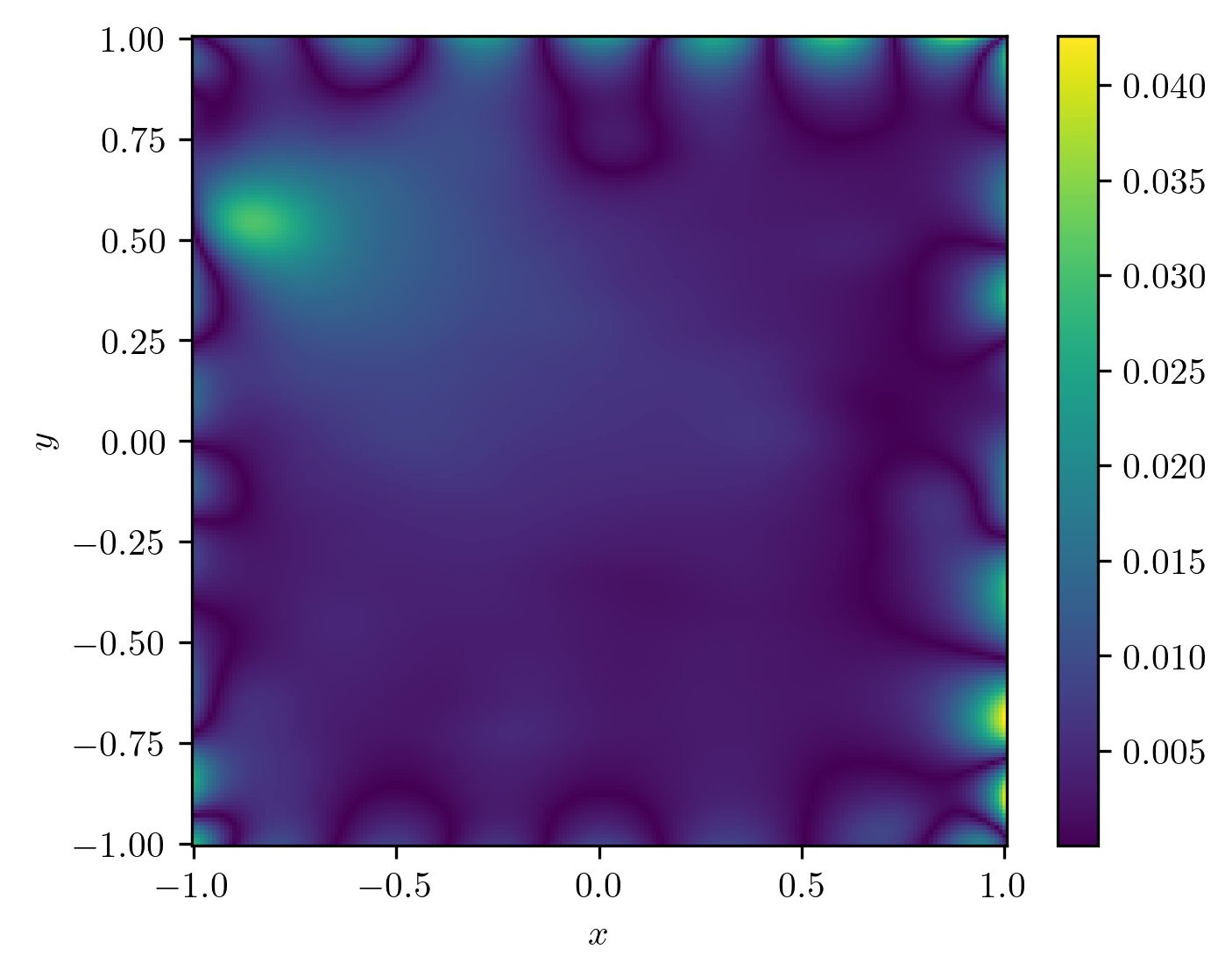}
        \caption{Sine 1024-Neuron}
    \end{subfigure}    
    \hfill
    \begin{subfigure}[t]{0.32\linewidth}
        \centering
        \includegraphics[width=\textwidth]{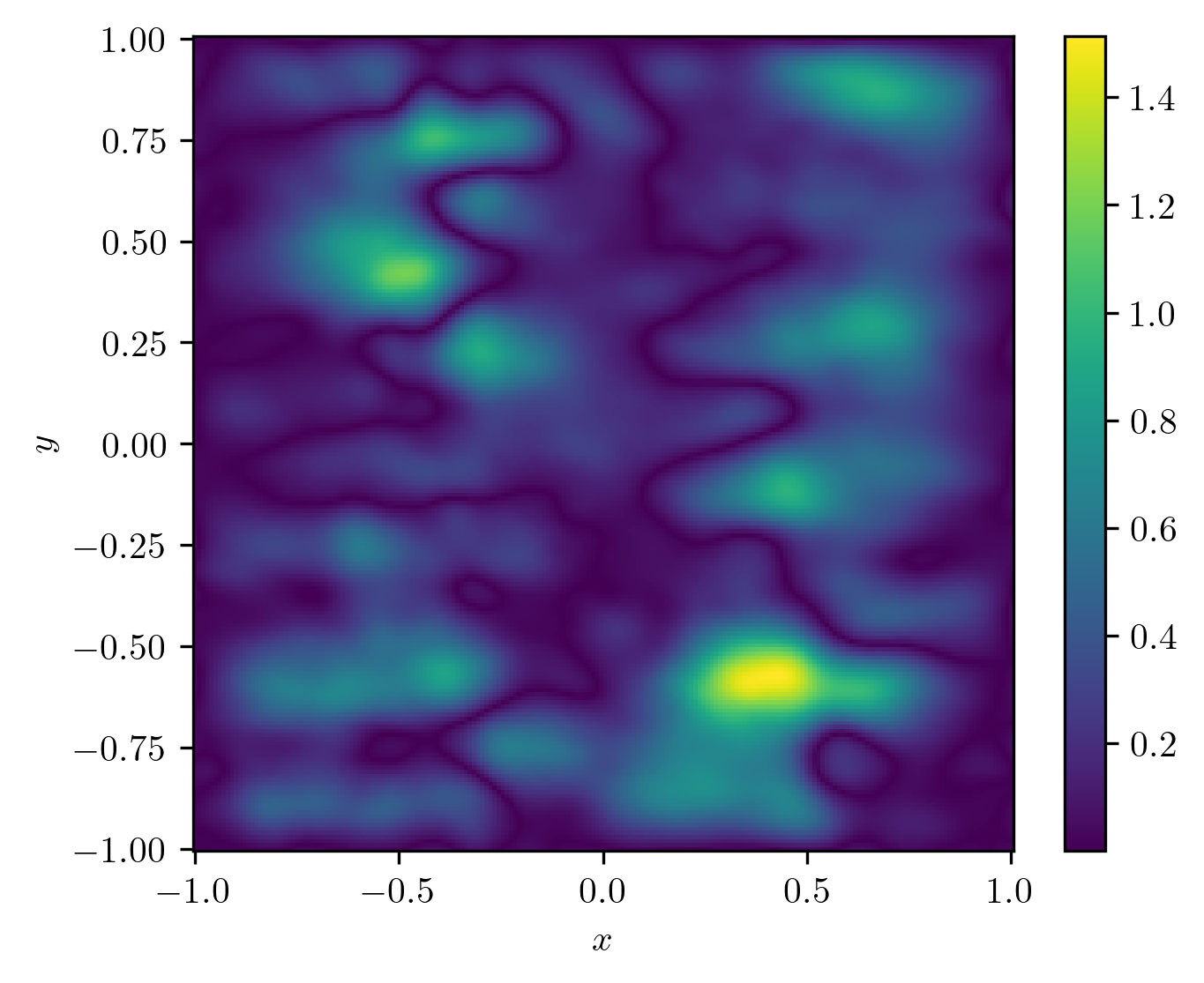}
        \caption{Tanh 1024-Neuron}
    \end{subfigure} 

    \caption{Helmholtz Equation. Top panels: Exact solution, Middle panels: Predicted solution, Bottom panels: Absolute Error}
    \label{fig: Helmholtz preds}
\end{figure}

\subsection{Klein-Gordon Equation}
\label{sec: KG}

\begin{figure}[t]
    \centering
    \begin{subfigure}[t]{0.32\linewidth}
    \centering
        \includegraphics[width=\textwidth]{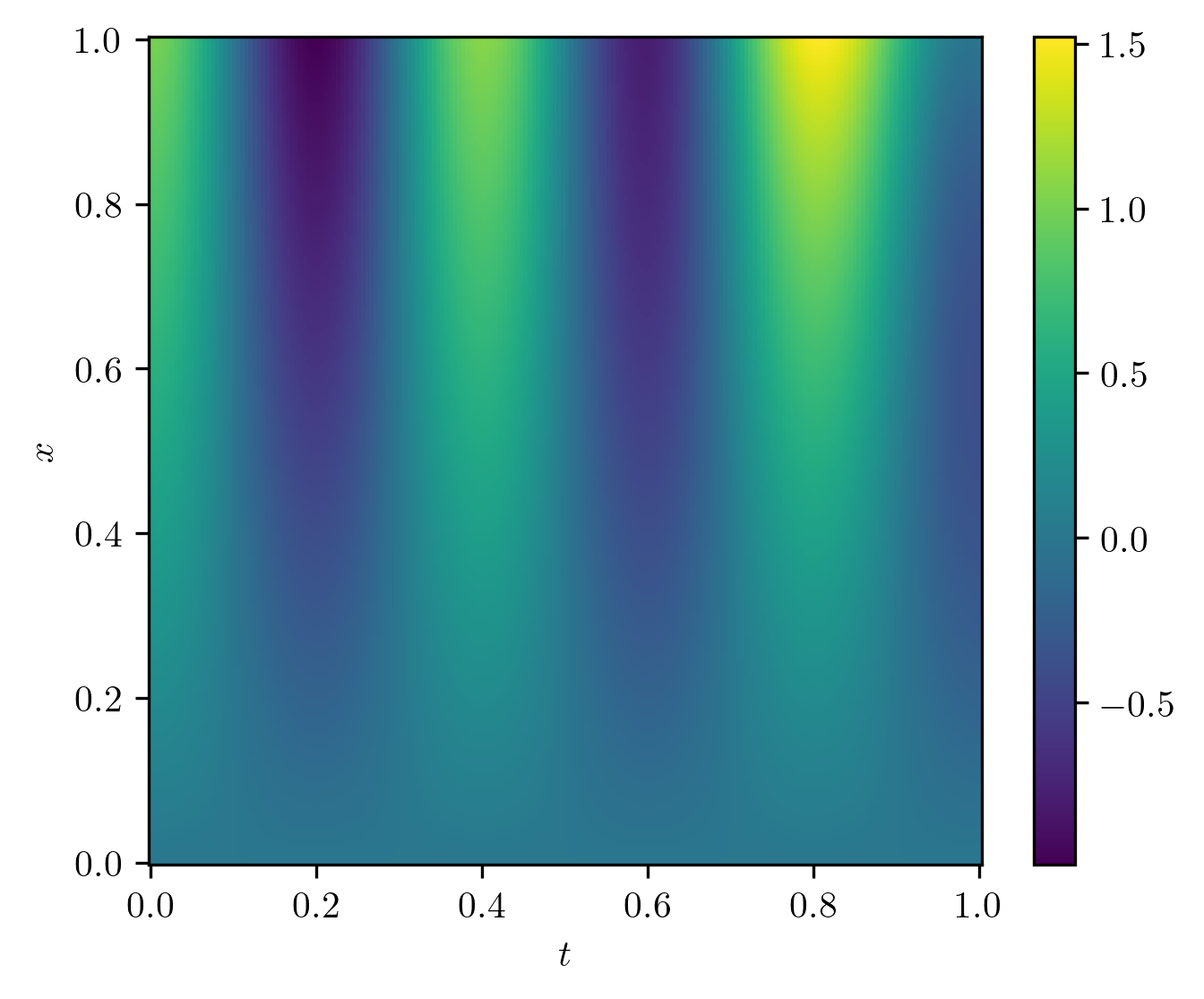}
        \caption{Exact Solution}
    \end{subfigure}
    \hfill
    \begin{subfigure}[t]{0.32\linewidth}
    \centering
        \includegraphics[width=\textwidth]{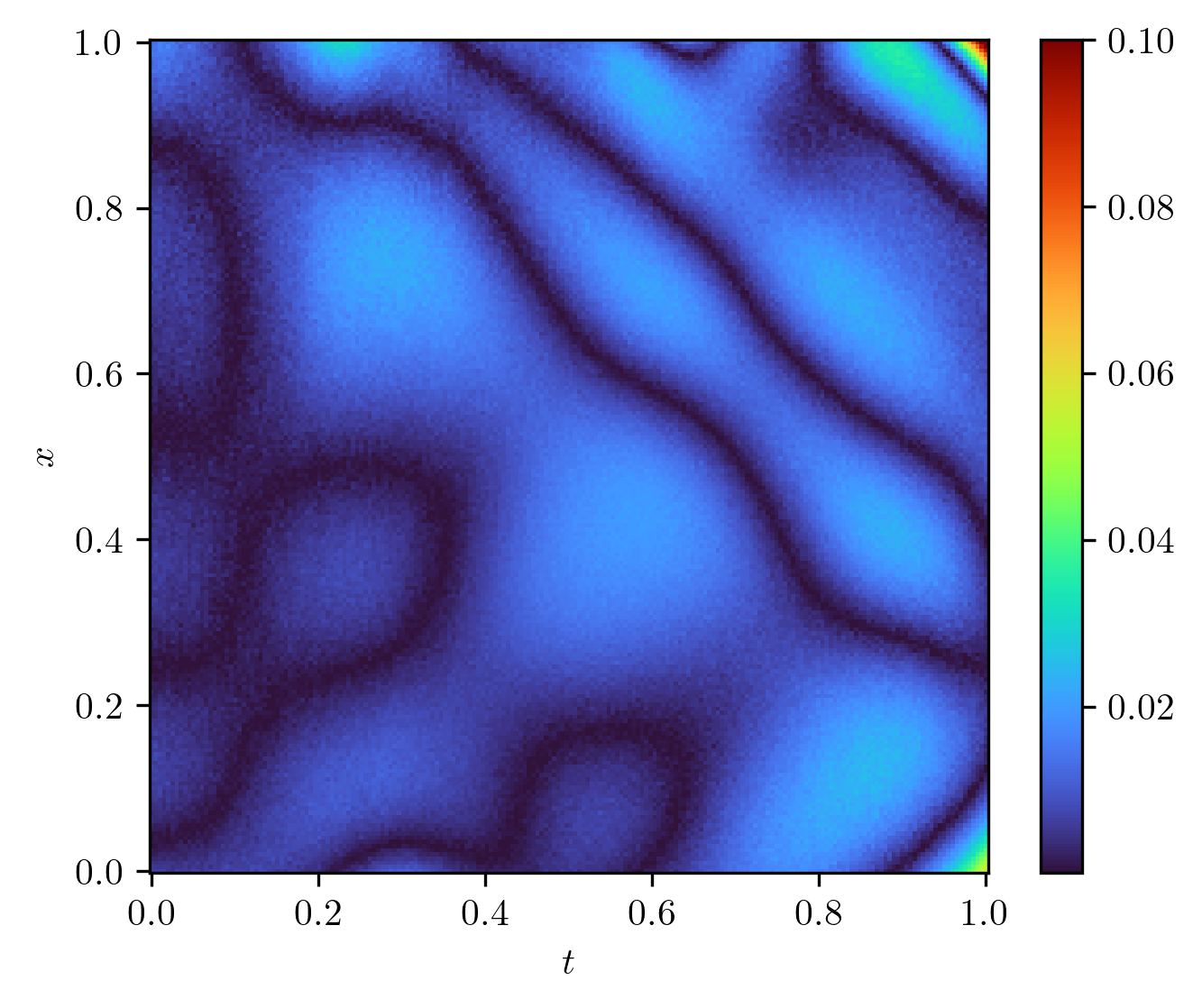}
        \caption{Tanh 128-Neuron}
    \end{subfigure}
    \hfill
    \begin{subfigure}[t]{0.32\linewidth}
    \centering
        \includegraphics[width=\textwidth]{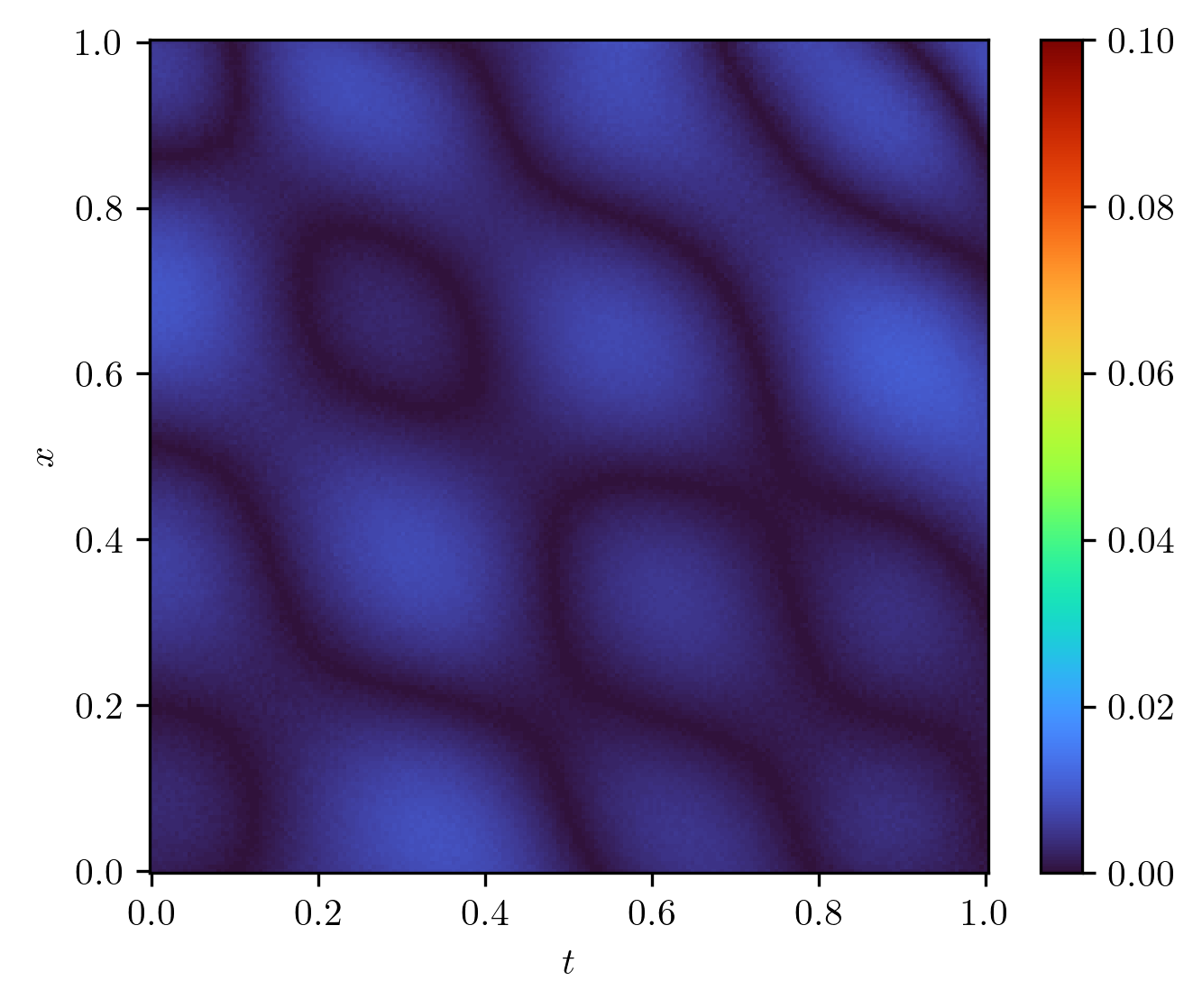}
        \caption{Sine 256-Neuron}
    \end{subfigure}
    \hfill
    \caption{Absolute error of the Klein-Gordon PINNs.}
    \label{fig:klein-gordon error}
\end{figure}

We conduct the same experiments on the non-linear one-dimensional Klein-Gordon equation of the following form:
\begin{align}
        &\frac{\partial^2 u}{\partial t^2} - \frac{\partial^2u}{\partial x^2} + u^3 = f(t, x),\;\;\;\;&x\in\left[0, 1\right],t\in\left[0, 1\right].\label{eq: Klein gordon}
\end{align}
We adopt the solution provided in \cite{wang2020understanding} and derive the source term $f(t,x)$ in Eq. \ref{eq: Klein gordon} to be consistent with the solution below:
\begin{align*}
    &u(t, x) = x\cos(5\pi t) + (xt)^3
\end{align*}
The model is then trained with a zero initial condition and a Dirichlet boundary condition corresponding to the solution $u$, using 256 collocation training points and 200 boundary data.

The resulting absolute errors are reported in Table \ref{tab: table second order}. Similar to other equations, the Sine activation function performs consistently better for all widths, achieving the best mean absolute error of $3.1\times 10^{-3}$ with the 256 neurons-wide model. Tanh also performs reasonably well with the 128 neurons-wide model, although it still performs slightly worse than the Sine models, as shown in Figure \ref{fig:klein-gordon error}.

\section{Conclusion}
The differentiation process in the residual loss of PINNs transforms the structure of the neural networks and their outputs, rendering the existing theory around loss functions and common supervised tasks ineffective in analyzing PINNs. In this work, we aim to fill the gap in our understanding of the residual loss and derive the requirements in network design that lead to achieving global minimization of this loss function. To this end, we study the residual loss at a critical point in the parameter space of the neural network and look for distinct characteristics of a global minimum that sets it apart from other critical points. We then use those characteristics to derive the requirements in the neural network design that ensure the existence of a global minimum. In particular, we show that under certain conditions, wide networks globally minimize the residual loss. Additionally, we reveal that activation functions with well-behaved high-order derivatives are crucial in the optimal minimization of the residual loss. We then use the established theory and empirical observations to choose activation functions and verify their effectiveness by conducting a set of experiments. The theory developed in this work paves the way for further development of better activation functions and provides a guideline for designing effective PINNs.

\section*{Acknowledgements}
This research is supported by Natural Sciences and Engineering Research Council of Canada (NSERC),
Discovery Grants program, and the Vector Scholarship in Artificial Intelligence, provided through the Vector Institute.

%% The file named.bst is a bibliography style file for BibTeX 0.99c
\bibliographystyle{named}
\bibliography{ijcai24}

\begin{thebibliography}{}

\bibitem[\protect\citeauthoryear{Allen-Zhu \bgroup \em et al.\egroup }{2019}]{allen2019convergence}
Zeyuan Allen-Zhu, Yuanzhi Li, and Zhao Song.
\newblock A convergence theory for deep learning via over-parameterization.
\newblock In {\em International conference on machine learning}, pages 242--252. PMLR, 2019.

\bibitem[\protect\citeauthoryear{Belbute-Peres and Kolter}{2022}]{belbute2022simple}
Filipe de~Avila Belbute-Peres and J~Zico Kolter.
\newblock Simple initialization and parametrization of sinusoidal networks via their kernel bandwidth.
\newblock {\em arXiv preprint arXiv:2211.14503}, 2022.

\bibitem[\protect\citeauthoryear{Cai \bgroup \em et al.\egroup }{2021}]{cai2021physics}
Shengze Cai, Zhiping Mao, Zhicheng Wang, Minglang Yin, and George~Em Karniadakis.
\newblock Physics-informed neural networks (pinns) for fluid mechanics: A review.
\newblock {\em Acta Mechanica Sinica}, 37(12):1727--1738, 2021.

\bibitem[\protect\citeauthoryear{Chen \bgroup \em et al.\egroup }{2020a}]{chen2020physics}
Yuyao Chen, Lu~Lu, George~Em Karniadakis, and Luca Dal~Negro.
\newblock Physics-informed neural networks for inverse problems in nano-optics and metamaterials.
\newblock {\em Optics express}, 28(8):11618--11633, 2020.

\bibitem[\protect\citeauthoryear{Chen \bgroup \em et al.\egroup }{2020b}]{chen2020generalized}
Zixiang Chen, Yuan Cao, Quanquan Gu, and Tong Zhang.
\newblock A generalized neural tangent kernel analysis for two-layer neural networks.
\newblock {\em Advances in Neural Information Processing Systems}, 33:13363--13373, 2020.

\bibitem[\protect\citeauthoryear{Dong and Ni}{2021}]{dong2021method}
Suchuan Dong and Naxian Ni.
\newblock A method for representing periodic functions and enforcing exactly periodic boundary conditions with deep neural networks.
\newblock {\em Journal of Computational Physics}, 435:110242, 2021.

\bibitem[\protect\citeauthoryear{Du and Hu}{2019}]{du2019width}
Simon Du and Wei Hu.
\newblock Width provably matters in optimization for deep linear neural networks.
\newblock In {\em International Conference on Machine Learning}, pages 1655--1664. PMLR, 2019.

\bibitem[\protect\citeauthoryear{Du \bgroup \em et al.\egroup }{2019}]{du2019gradient}
Simon Du, Jason Lee, Haochuan Li, Liwei Wang, and Xiyu Zhai.
\newblock Gradient descent finds global minima of deep neural networks.
\newblock In {\em International conference on machine learning}, pages 1675--1685. PMLR, 2019.

\bibitem[\protect\citeauthoryear{Farhani \bgroup \em et al.\egroup }{2022}]{farhani2022momentum}
G.~Farhani, Alexander Kazachek, and Boyu Wang.
\newblock Momentum diminishes the effect of spectral bias in physics-informed neural networks.
\newblock {\em arXiv preprint arXiv:2206.14862}, 2022.

\bibitem[\protect\citeauthoryear{Jacot \bgroup \em et al.\egroup }{2018}]{jacot2018neural}
Arthur Jacot, Franck Gabriel, and Cl{\'e}ment Hongler.
\newblock Neural tangent kernel: Convergence and generalization in neural networks.
\newblock {\em Advances in neural information processing systems}, 31, 2018.

\bibitem[\protect\citeauthoryear{Krishnapriyan \bgroup \em et al.\egroup }{2021}]{krishnapriyan2021characterizing}
Aditi Krishnapriyan, Amir Gholami, Shandian Zhe, Robert Kirby, and Michael~W Mahoney.
\newblock Characterizing possible failure modes in physics-informed neural networks.
\newblock {\em Advances in Neural Information Processing Systems}, 34:26548--26560, 2021.

\bibitem[\protect\citeauthoryear{Lee \bgroup \em et al.\egroup }{2019}]{lee2019wide}
Jaehoon Lee, Lechao Xiao, Samuel Schoenholz, Yasaman Bahri, Roman Novak, Jascha Sohl-Dickstein, and Jeffrey Pennington.
\newblock Wide neural networks of any depth evolve as linear models under gradient descent.
\newblock {\em Advances in neural information processing systems}, 32, 2019.

\bibitem[\protect\citeauthoryear{Li \bgroup \em et al.\egroup }{2020}]{li2020fourier}
Zongyi Li, Nikola Kovachki, Kamyar Azizzadenesheli, Burigede Liu, Kaushik Bhattacharya, Andrew Stuart, and Anima Anandkumar.
\newblock Fourier neural operator for parametric partial differential equations.
\newblock {\em arXiv preprint arXiv:2010.08895}, 2020.

\bibitem[\protect\citeauthoryear{Liu \bgroup \em et al.\egroup }{2020}]{liu2020linearity}
Chaoyue Liu, Libin Zhu, and Misha Belkin.
\newblock On the linearity of large non-linear models: when and why the tangent kernel is constant.
\newblock {\em Advances in Neural Information Processing Systems}, 33:15954--15964, 2020.

\bibitem[\protect\citeauthoryear{McClenny and Braga-Neto}{2020}]{mcclenny2020self}
Levi McClenny and Ulisses Braga-Neto.
\newblock Self-adaptive physics-informed neural networks using a soft attention mechanism.
\newblock {\em arXiv preprint arXiv:2009.04544}, 2020.

\bibitem[\protect\citeauthoryear{Meronen \bgroup \em et al.\egroup }{2021}]{meronen2021periodic}
Lassi Meronen, Martin Trapp, and Arno Solin.
\newblock Periodic activation functions induce stationarity.
\newblock {\em Advances in Neural Information Processing Systems}, 34:1673--1685, 2021.

\bibitem[\protect\citeauthoryear{Milnor \bgroup \em et al.\egroup }{1965}]{a54a6932-695b-3fed-b1c3-8acbcfb5005c}
John Milnor, L.~Siebenmann, and J.~Sondow.
\newblock {\em Lectures on the H-Cobordism Theorem}.
\newblock Princeton University Press, 1965.

\bibitem[\protect\citeauthoryear{Nguyen and Hein}{2017}]{nguyen2017loss}
Quynh Nguyen and Matthias Hein.
\newblock The loss surface of deep and wide neural networks.
\newblock In {\em International conference on machine learning}, pages 2603--2612. PMLR, 2017.

\bibitem[\protect\citeauthoryear{Nguyen and Mondelli}{2020}]{nguyen2020global}
Quynh~N Nguyen and Marco Mondelli.
\newblock Global convergence of deep networks with one wide layer followed by pyramidal topology.
\newblock {\em Advances in Neural Information Processing Systems}, 33:11961--11972, 2020.

\bibitem[\protect\citeauthoryear{Oymak and Soltanolkotabi}{2020}]{oymak2020toward}
Samet Oymak and Mahdi Soltanolkotabi.
\newblock Toward moderate overparameterization: Global convergence guarantees for training shallow neural networks.
\newblock {\em IEEE Journal on Selected Areas in Information Theory}, 1(1):84--105, 2020.

\bibitem[\protect\citeauthoryear{Rahaman \bgroup \em et al.\egroup }{2018}]{rahaman2018spectral}
Nasim Rahaman, A.~Baratin, Devansh Arpit, Felix Dräxler, Min Lin, F.~Hamprecht, Yoshua Bengio, and Aaron~C. Courville.
\newblock On the spectral bias of neural networks.
\newblock {\em International Conference On Machine Learning}, 2018.

\bibitem[\protect\citeauthoryear{Raissi \bgroup \em et al.\egroup }{2017}]{raissi2017physics}
Maziar Raissi, Paris Perdikaris, and George~Em Karniadakis.
\newblock Physics informed deep learning (part i): Data-driven solutions of nonlinear partial differential equations.
\newblock {\em arXiv preprint arXiv:1711.10561}, 2017.

\bibitem[\protect\citeauthoryear{Reiser \bgroup \em et al.\egroup }{2022}]{reiser2022graph}
Patrick Reiser, Marlen Neubert, Andr{\'e} Eberhard, Luca Torresi, Chen Zhou, Chen Shao, Houssam Metni, Clint van Hoesel, Henrik Schopmans, Timo Sommer, et~al.
\newblock Graph neural networks for materials science and chemistry.
\newblock {\em Communications Materials}, 3(1):93, 2022.

\bibitem[\protect\citeauthoryear{Safran and Shamir}{2016}]{safran2016quality}
Itay Safran and Ohad Shamir.
\newblock On the quality of the initial basin in overspecified neural networks.
\newblock In {\em International Conference on Machine Learning}, pages 774--782. PMLR, 2016.

\bibitem[\protect\citeauthoryear{Sirignano and Spiliopoulos}{2018}]{sirignano2018dgm}
Justin Sirignano and Konstantinos Spiliopoulos.
\newblock Dgm: A deep learning algorithm for solving partial differential equations.
\newblock {\em Journal of computational physics}, 375:1339--1364, 2018.

\bibitem[\protect\citeauthoryear{Sitzmann \bgroup \em et al.\egroup }{2020}]{sitzmann2020implicit}
Vincent Sitzmann, Julien Martel, Alexander Bergman, David Lindell, and Gordon Wetzstein.
\newblock Implicit neural representations with periodic activation functions.
\newblock {\em Advances in neural information processing systems}, 33:7462--7473, 2020.

\bibitem[\protect\citeauthoryear{Wang \bgroup \em et al.\egroup }{2020a}]{wang2020understanding}
Sifan Wang, Yujun Teng, and Paris Perdikaris.
\newblock Understanding and mitigating gradient pathologies in physics-informed neural networks.
\newblock {\em arXiv preprint arXiv:2001.04536}, 2020.

\bibitem[\protect\citeauthoryear{Wang \bgroup \em et al.\egroup }{2020b}]{wang2020pinns}
Sifan Wang, Xinling Yu, and P.~Perdikaris.
\newblock When and why pinns fail to train: A neural tangent kernel perspective.
\newblock {\em Journal Of Computational Physics}, 2020.

\bibitem[\protect\citeauthoryear{Wang \bgroup \em et al.\egroup }{2021}]{wang2021eigenvector}
Sifan Wang, Hanwen Wang, and Paris Perdikaris.
\newblock On the eigenvector bias of fourier feature networks: From regression to solving multi-scale pdes with physics-informed neural networks.
\newblock {\em Computer Methods in Applied Mechanics and Engineering}, 384:113938, 2021.

\bibitem[\protect\citeauthoryear{Wang \bgroup \em et al.\egroup }{2022}]{wang2022respecting}
Sifan Wang, Shyam Sankaran, and Paris Perdikaris.
\newblock Respecting causality is all you need for training physics-informed neural networks.
\newblock {\em arXiv preprint arXiv:2203.07404}, 2022.

\bibitem[\protect\citeauthoryear{Wight and Zhao}{2020}]{wight2020solving}
Colby~L Wight and Jia Zhao.
\newblock Solving allen-cahn and cahn-hilliard equations using the adaptive physics informed neural networks.
\newblock {\em arXiv preprint arXiv:2007.04542}, 2020.

\bibitem[\protect\citeauthoryear{Wong \bgroup \em et al.\egroup }{2022}]{wong2022learning}
Jian~Cheng Wong, Chinchun Ooi, Abhishek Gupta, and Yew-Soon Ong.
\newblock Learning in sinusoidal spaces with physics-informed neural networks.
\newblock {\em IEEE Transactions on Artificial Intelligence}, 2022.

\bibitem[\protect\citeauthoryear{Zhang \bgroup \em et al.\egroup }{2022}]{zhang2022physics}
Jie Zhang, Yihui Zhao, Fergus Shone, Zhenhong Li, Alejandro~F Frangi, Sheng~Quan Xie, and Zhi-Qiang Zhang.
\newblock Physics-informed deep learning for musculoskeletal modeling: Predicting muscle forces and joint kinematics from surface emg.
\newblock {\em IEEE Transactions on Neural Systems and Rehabilitation Engineering}, 31:484--493, 2022.

\end{thebibliography}
\newpage
\paragraph
{ }
\newpage
\appendix
\section{Generalizing to Multiple Independent Variables}
We first generalize Lemmas \ref{theorem: 2 layer Du} and \ref{theorem: 2 layer gradients W_L} and Theorem \ref{theorem: full rank two layer} in Section \ref{sec: section 3} to multiple ($d$) independent variables and present the proofs for them, and then, prove Theorem \ref{theorem: width N} in the next section. We consider a function of $d$ independent variables and a differential operator $\mathcal{D}$ with $k$-th order terms and coefficients $c_1, \dots, c_d$ of the following form:
\begin{equation}\label{eq: Du multi}
    \mathcal{D}\left[u\right] = c_1 \frac{\partial^k u}{\partial x_1^k} + \dots + c_d \frac{\partial^k u}{\partial x_d^k}
\end{equation}
We also define $(W_1)_i\in \mathbb R ^{1\times n_1}$ for $i \in \{1, \dots, d\}$ to represent the $i$-th row of the weight matrix $W_1$, i.e., $(W_1)_i$ are the weights corresponding to the $i$-th input variable.

\begin{lemma}[Generalization of Lemma \ref{theorem: 2 layer Du}] \label{lemma: multiple Du}
    For a two-layer neural network $\hat u$ of $d$ input variables and the $k$-th order differential operator in Eq. \ref{eq: Du multi}, $\mathcal{D}[\hat u]$ is given by
    \begin{equation*}
    \begin{aligned}
                \mathcal{D}[\hat{u}](x) = W_2^\top \times (F_1^{(k)}(x) \circ (&c_1(W_1)_1^k +\dots \\
                &+c_d (W_1)_d^k))^\top.
        \end{aligned}
    \end{equation*}
\end{lemma}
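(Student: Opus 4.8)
The plan is to reduce the multivariate statement to the single-variable Lemma~\ref{theorem: 2 layer Du}, one coordinate at a time, and then reassemble the pieces using linearity of $\mathcal D$. First I would fix an index $j\in\{1,\dots,d\}$ and freeze every input coordinate except $x_j$. Writing $G_1(x) = x\times W_1 + b_1 = x_j (W_1)_j + \big(b_1 + \sum_{i\neq j} x_i (W_1)_i\big)$, the network $\hat u$, viewed as a function of $x_j$ alone, is exactly a two-layer network in the single variable $x_j$ whose first-layer weight row is $(W_1)_j$ and whose first-layer bias is the (now constant) vector $b_1 + \sum_{i\neq j} x_i (W_1)_i$. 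Since the formula in Lemma~\ref{theorem: 2 layer Du} does not involve the first-layer bias, applying it verbatim yields
\begin{equation*}
\frac{\partial^k \hat u}{\partial x_j^k}(x) = W_2^\top \times \big(F_1^{(k)}(x)\circ (W_1)_j^k\big)^\top .
\end{equation*}

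Next I would sum these $d$ identities against the coefficients $c_j$. Because $\mathcal D$ is linear, $\mathcal D[\hat u](x) = \sum_{j=1}^d c_j\, \partial^k\hat u/\partial x_j^k(x)$, so
\begin{equation*}
\mathcal D[\hat u](x) = \sum_{j=1}^d c_j\, W_2^\top \times \big(F_1^{(k)}(x)\circ (W_1)_j^k\big)^\top .
\end{equation*}
Pulling the scalars $c_j$ inside the Hadamard product and using that the Hadamard product distributes over addition with $F_1^{(k)}(x)$ as a common left factor, the right-hand side collapses to $W_2^\top \times \big(F_1^{(k)}(x)\circ(c_1 (W_1)_1^k + \dots + c_d (W_1)_d^k)\big)^\top$, which is the claimed expression.

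I do not expect a genuine obstacle here: once Lemma~\ref{theorem: 2 layer Du} is granted, the argument is bookkeeping. (The single-variable base case itself rests only on the observation that $\partial G_1/\partial x = W_1$ is constant in $x$, so iterating the chain rule $k$ times replaces $\sigma$ by $\sigma^{(k)}$ and produces the elementwise power $W_1^k$; a one-line induction on $k$ suffices.) The two points that need care are the transpose bookkeeping — keeping in mind that $\hat u$ and each $\partial^k_{x_j}\hat u$ are scalars, so $v\times W_2 = W_2^\top\times v^\top$ for the row vector $v = F_1^{(k)}(x)\circ (W_1)_j^k$ — and the justification that freezing the other coordinates really does leave a \emph{bona fide} two-layer network in $x_j$, so that Lemma~\ref{theorem: 2 layer Du} applies without modification.
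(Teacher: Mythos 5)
Your proposal is correct and follows essentially the same route as the paper: the paper likewise reduces to per-coordinate $k$-th derivatives (re-deriving them by a short induction on $k$ rather than formally invoking Lemma~\ref{theorem: 2 layer Du} after freezing the other coordinates, but the computation is identical) and then sums against the $c_j$ and pulls the sum inside the Hadamard product. Your observation that the frozen-coordinate network is a bona fide single-variable two-layer network whose modified bias drops out of the formula is exactly the content of the paper's remark that only $G_1(x)$ depends on $x_j$ while the other terms are constants.
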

\begin{proof}
Note that the neural network is defined as
\begin{align*}
    \hat u (x) &= F_1(x)\times W_2 + b_2\\
    &=\sigma(G_1(x)) \times W_2 + b_2, \; G_1(x) = x \times W_1 + b_1
\end{align*}
    The first derivative $\frac{\partial \hat u}{\partial x_1}$ w.r.t. the first input variable is then derived as follows:
    \begin{align}
        \frac{\partial\hat u}{\partial x_1} &= \frac{\partial \hat u}{\partial F_1}\times \left(\frac{\partial F_1}{\partial G_1} \frac{\partial G_1}{\partial x_1}\right)\nonumber\\
        &= W_2^\top \times \left(\frac{\partial F_1}{\partial G_1} \frac{\partial G_1}{\partial x_1}\right)\label{eq: deriv F1}
        % & = W_2^\top \times \sigma'(G_1(x))
    \end{align}
    Derivative of $F_1$ w.r.t. $G_1$ is just $\sigma'(G_1(x))$. We also have
    \begin{align*}
        G_1(x) &= x\times W_1 + b_1 \\
        &= \sum_{i=1}^{d} x_i(W_1)_i + b_1.
    \end{align*}
    Thus, Eq. \ref{eq: deriv F1} yields:
    \begin{align*}
        \frac{\partial \hat u}{\partial x_1} = W_2^\top \times \left(\sigma'(G_1(x))\circ (W_1)_1\right)^\top
    \end{align*}
    For higher-order derivatives, only $G_1(x)$ depends on $x_1$, and other terms are constants. Hence, assuming that the $(k-1)$-th derivative is given by $W_2^\top \times \left(\sigma^{(k-1)}(G_1(x))\circ (W_1)_1^{k-1}\right)^\top$, the $k$-th derivative is
    \begin{align*}
        \frac{\partial^k \hat u}{\partial x_1^k} &= W_2^\top \times \left(\frac{\partial F_1^{(k-1)}}{\partial x_1}\circ (W_1)^{k-1}_1\right)^\top\\
        &=W_2^\top \times \left(\sigma^{(k)}(G_1(x)) \circ (W_1)_1\circ (W_1)^{k-1}_1\right)^\top\\
        &=W_2^\top \times \left(F_1^{(k)}(x) \circ (W_1)^{k}_1\right)^\top.
    \end{align*}
    Derivatives w.r.t. other input variables are taken similarly. Thus, the differential operator $\mathcal{D}[\hat u]$ is
    \begin{align*}
        \mathcal{D}[\hat u](x) &= \sum_{i=1}^{d} c_i W_2^\top \times \left(F_1^{(k)}(x) \circ (W_1)^{k}_i\right)^\top\\
        &= W_2^\top \times \left(\sum_{i=1}^{d}c_i F_1^{(k)}(x) \circ (W_1)_i^k\right)\\
        &= W_2^\top \times \left(F_1^{(k)}(x) \circ \sum_{i=1}^{d}c_i(W_1)_i^k\right).
    \end{align*}
\end{proof}
Note that Lemma \ref{lemma: multiple Du} (and other results in this section) can be further generalized to include mixed derivatives such as 
$$
\frac{\partial^2 \hat u}{\partial x_1 x_2} = W_2^\top \times (F_1^{(k)}(x) \circ (W_1)_1 \circ (W_1)_2)^\top,
$$
which we don't consider for simplicity.

\begin{lemma}[Generalization of Lemma \ref{theorem: 2 layer gradients W_L}]\label{lemma: gradients multiple var}
    For $\hat u$ and $\mathcal D[\hat u]$ as in Lemma \ref{lemma: multiple Du}, gradients of the residual loss w.r.t. the weights of the second layer over the training collocation data $\mathbf x$ of $N$ samples are given by
    \begin{align*}
        &\nabla_{W_2}\phi_r(\mathbf x; \mathcal W) =\\  &\;\;\left(\sum_{i=1}^{d}c_i(W_1)_i^k\right)\circ\left(l'(\mathcal D[\hat u](\mathbf x) - f(\mathbf x))^\top \times F^{(k)}_1(\mathbf x)\right).
    \end{align*}
\end{lemma}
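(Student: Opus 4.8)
The plan is to substitute the closed form of $\mathcal D[\hat u]$ supplied by Lemma~\ref{lemma: multiple Du} into the definition of $\phi_r$ in Eq.~\ref{eq: phi residual loss} and differentiate, mirroring the single-variable argument behind Lemma~\ref{theorem: 2 layer gradients W_L}. Write $r(x) = \mathcal D[\hat u](x) - f(x)$ and set $m := \sum_{i=1}^{d} c_i (W_1)_i^k$, a $1\times n_1$ row vector that does not depend on $W_2$. By Lemma~\ref{lemma: multiple Du} we then have $\mathcal D[\hat u](x) = W_2^\top \times (F_1^{(k)}(x) \circ m)^\top$, so $\mathcal D[\hat u](x)$ is a \emph{linear} function of $W_2$ whose gradient is $\nabla_{W_2} \mathcal D[\hat u](x) = (F_1^{(k)}(x)\circ m)^\top$.

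First I would apply the chain rule termwise to $\phi_r(\mathbf x;\mathcal W) = \sum_{x\in\mathbf x} l(r(x))$, obtaining
$$
\nabla_{W_2}\phi_r(\mathbf x;\mathcal W) = \sum_{x\in\mathbf x} l'(r(x))\,\bigl(F_1^{(k)}(x)\circ m\bigr)^\top .
$$
Next, since $m$ is the same vector for every collocation point, I would use that transposition commutes with the Hadamard product and that Hadamard multiplication by the fixed vector $m^\top$ is linear, so that it may be pulled outside the summation:
$$
\nabla_{W_2}\phi_r(\mathbf x;\mathcal W) = m^\top \circ \sum_{x\in\mathbf x} l'(r(x))\,\bigl(F_1^{(k)}(x)\bigr)^\top .
$$
Finally I would rewrite the remaining sum over the batch as a single matrix product: stacking the rows $F_1^{(k)}(x)$ into the matrix $F_1^{(k)}(\mathbf x)\in\mathbb R^{N\times n_1}$ and the scalars $l'(r(x))$ into the vector $l'(\mathcal D[\hat u](\mathbf x)-f(\mathbf x))$ gives $\sum_{x\in\mathbf x} l'(r(x))\,F_1^{(k)}(x) = l'(\mathcal D[\hat u](\mathbf x)-f(\mathbf x))^\top \times F_1^{(k)}(\mathbf x)$, and substituting back $m = \sum_{i=1}^{d} c_i (W_1)_i^k$ yields the stated identity.

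There is no substantive obstacle here; the argument is essentially bookkeeping, and the only points that demand care are the row/column-vector conventions (since $n_L = 1$ several of the transposes are cosmetic, and the paper states the final answer as a row vector) and the justification that the constant vector $m$ may be factored out of both the Hadamard product and the sum over $\mathbf x$. As a sanity check, taking $d = 1$ and $c_1 = 1$ makes $m = W_1^k$ and recovers Lemma~\ref{theorem: 2 layer gradients W_L} verbatim.
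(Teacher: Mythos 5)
Your proposal is correct and follows essentially the same route as the paper's proof: substitute the closed form from Lemma~\ref{lemma: multiple Du}, exploit linearity of $\mathcal D[\hat u]$ in $W_2$ via the chain rule, factor the fixed vector $\sum_i c_i (W_1)_i^k$ out of the Hadamard product and the batch sum, and recognize the remaining sum as $l'(\cdot)^\top \times F_1^{(k)}(\mathbf x)$. The only difference is presentational—the paper works entry-by-entry in $(W_2)_\alpha$ before reassembling the gradient vector, whereas you keep everything vectorized—and your sanity check recovering the $d=1$ case matches the paper's single-variable lemma.
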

\begin{proof}
    Let $A\in \mathbb R ^ {1\times n_1}$ be $\sum_{i=1}^{d}c_i(W_1)_i^k$. From Lemma \ref{lemma: multiple Du}, the residual loss over collocation samples $\mathbf x = [x^1, \dots, x^N]^\top$ is given by
    \begin{align*}
    \phi_r(\mathbf x; \mathcal W) &= \sum_{i=1}^{N} l\biggl(W_2^\top \times \bigl(F_1^{(k)}(x^i) \circ A\bigr) - f(x^i)\biggr)\\
    &= \sum_{i=1}^{N} l\biggl(\sum_{j=1}^{n_1}(W_2)_j \bigl(F_{1j}^{(k)}(x^i) A_j\bigr) - f(x^i)\biggr)\,,
    \end{align*}
    where $F_{1j}^{(k)}$ is the scalar output of neuron $j$ in the first layer fed to $\sigma^{(k)}$, and $A_j$ and $(W_2)_j$ are the $j$-th elements of the row and column vectors $A$ and $W_2$ (note that $W_2\in\mathbb R^{n_1 \times 1}$). Let $r_i\in \mathbb R$ denote the residual $\sum_{j=1}^{n_1}(W_2)_j (F_{1j}^{(k)}(x^i) A_j) - f(x^i)$ for a sample $x^i\in \mathbf x$. The partial derivative $\frac{\partial \phi_r}{\partial (W_2)_\alpha}$ for $\alpha\in\{1,\dots,n_1\}$ is then derived as
    \begin{align*}
        \frac{\partial \phi_r}{\partial (W_2)_\alpha}(\mathbf x; \mathcal W) &= \sum_{i=1}^{N}\frac{\partial l(r_i)}{\partial r_i} \frac{\partial r_i}{\partial (W_2)_\alpha}\\
        &= \sum_{i=1}^{N}l'(r_i)F_{1\alpha}^{(k)}(x^i)A_\alpha\\
        &= A_\alpha\sum_{i=1}^{N}l'(r_i)F_{1\alpha}^{(k)}(x^i),
    \end{align*}
    and the gradients w.r.t. $W_2$ are
    \begin{align*}
        \nabla_{W_2}\phi_r(\mathbf x; \mathcal W) &= \begin{bmatrix}
            A_1\sum_{i=1}^{N}l'(r_i)F_{11}^{(k)}(x^i)\\ \vdots\\ A_{n_1}\sum_{i=1}^{N}l'(r_i)F_{1n_1}^{(k)}(x^i)
        \end{bmatrix}^\top\\
        & = A \circ \begin{bmatrix}
            \sum_{i=1}^{N}l'(r_i)F_{11}^{(k)}(x^i)\\ \vdots\\ \sum_{i=1}^{N}l'(r_i)F_{1n_1}^{(k)}(x^i)
        \end{bmatrix}^\top\\
        & = A \circ \biggl(\sum_{i=1}^{N}l'(r_i)F_1^{(k)}(x^i)\biggr)\\
        & = A \circ \bigl(L'^\top\times F_1^{(k)}(\mathbf x)\bigr),
    \end{align*}
    where $L'\in \mathbb R^N$ is $[l'(r_1), \dots, l'(r_N)]^\top$.
    
\end{proof}

\begin{theorem}[Generalization of Theorem \ref{theorem: full rank two layer}]\label{theorem: full rank two layer multiple}
    For $\hat u$ and $\mathcal{D} [\hat u]$ as in Lemma \ref{lemma: multiple Du}, a critical point $\overline{\mathcal W}$ of the residual loss $\phi_r(\mathbf{x}, \mathcal W)$ is a global minimum if the following conditions are satisfied:
\begin{enumerate}
    \item The summation $\sum_{i=1}^{d}c_i(\overline{W}_1)_i^k$ of the weights of each input variable in the first layer is strictly non-zero, 
    \item $F^{(k)}_1$ has full row rank, i.e., $\text{rank}(F_1^{(k)}(\mathbf x)) = N$.
\end{enumerate}
\end{theorem}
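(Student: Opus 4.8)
The plan is to mirror the single-variable argument of Theorem~\ref{theorem: full rank two layer}, now using the gradient formula from Lemma~\ref{lemma: gradients multiple var} in place of Lemma~\ref{theorem: 2 layer gradients W_L}. As noted in the discussion preceding Theorem~\ref{theorem: full rank two layer}, since $l(r)=r^2$ is convex and non-negative with $l'(r)=0\iff l(r)=0$, it suffices to show that at the critical point $\overline{\mathcal W}$ we have $l'(\mathcal D[\hat u_{\overline{\mathcal W}}](x^i) - f(x^i)) = 0$ for every collocation sample $x^i \in \mathbf x$; equivalently, writing $L' = [l'(r_1),\dots,l'(r_N)]^\top$ as in the proof of Lemma~\ref{lemma: gradients multiple var}, it suffices to show $L' = 0$.

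First I would write down the stationarity condition: since $\overline{\mathcal W}$ is a critical point, $\nabla_{W_2}\phi_r(\mathbf x; \overline{\mathcal W}) = 0$. By Lemma~\ref{lemma: gradients multiple var} this reads
\begin{equation*}
    \Bigl(\sum_{i=1}^{d}c_i(\overline W_1)_i^k\Bigr)\circ\bigl(L'^\top \times F_1^{(k)}(\mathbf x)\bigr) = 0 \in \mathbb R^{1\times n_1}.
\end{equation*}
Next I would invoke Condition~1, which says the row vector $A := \sum_{i=1}^{d}c_i(\overline W_1)_i^k$ has all entries strictly non-zero. Since the Hadamard product of a vector with all-nonzero entries against another vector is zero only if the second vector is zero, we conclude $L'^\top \times F_1^{(k)}(\mathbf x) = 0$, i.e.\ $L'$ lies in the left null space of $F_1^{(k)}(\mathbf x)$. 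Finally I would apply Condition~2: $F_1^{(k)}(\mathbf x)$ is an $N\times n_1$ matrix of full row rank $N$, so its left null space is trivial, forcing $L' = 0$. Hence every residual vanishes and $\phi_r(\mathbf x; \overline{\mathcal W}) = 0$, so $\overline{\mathcal W}$ is a global minimum.

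This argument is essentially bookkeeping once the gradient formula is in hand, so there is no serious obstacle; the only point requiring a moment of care is the step from $A \circ (L'^\top \times F_1^{(k)}(\mathbf x)) = 0$ to $L'^\top \times F_1^{(k)}(\mathbf x) = 0$, which is exactly where Condition~1 (the strict non-vanishing of the aggregated first-layer weights $\sum_i c_i (\overline W_1)_i^k$) is used — without it, the gradient could vanish for spurious reasons unrelated to the residuals. The multi-variable generalization changes nothing structural: the single scalar $W_1^k$ of the $d=1$ case is simply replaced by the aggregate row vector $\sum_{i=1}^d c_i (W_1)_i^k$, and the same full-row-rank condition on $F_1^{(k)}(\mathbf x)$ does the rest.
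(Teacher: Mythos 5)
Your proposal is correct and follows essentially the same route as the paper's proof: both set $A=\sum_{i=1}^d c_i(\overline W_1)_i^k$, use the stationarity of $\nabla_{W_2}\phi_r$ via Lemma~\ref{lemma: gradients multiple var}, cancel the everywhere-nonzero Hadamard factor $A$, and then use the full row rank of $F_1^{(k)}(\mathbf x)$ to force $L'=0$ and hence zero loss. No gaps.
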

\begin{proof}
    At a critical point $\overline{\mathcal{W}}$ of $\phi_r$, the gradients of $\phi_r$ w.r.t. the weights of the second layer are zero. Let $L' \in \mathbb{R}^{N}$ be $l'(\mathcal{D}[\hat u_{\overline{\mathcal W}}](\mathbf x) - f(\mathbf{x}))$ and $A\in \mathbb R^{1\times n_1}$ be $\sum_{i=1}^{d}c_i(\overline{W}_1)_i^k$. Lemma \ref{lemma: gradients multiple var} yields
    $$
    \nabla_{W_2}\phi_r(\mathbf{x}; \overline{\mathcal{W}}) = A\circ({L'}^\top\times F^{(k)}_1(\mathbf{x})) = 0.
    $$
    Since $A$ is strictly non-zero, ${L'}^\top\times F^{(k)}_1(\mathbf{x})$ must be zero. Furthermore, if $F^{(k)}_1(\mathbf{x})$ has full row rank, the null space of its transpose is zero, and thus, $L'$ must be zero for the above product to hold. Since $l$ is convex, $l(\mathcal{D}[\hat u_{\overline{\mathcal{W}}}](\mathbf x) - f(\mathbf x))$ and $\phi_r(\mathbf x; \overline{\mathcal{W}})$ are also zero, and $\overline{\mathcal{W}}$ globally minimizes the residual loss.
\end{proof}

\section{Proof of Theorem \ref{theorem: width N}}
In this section, we present a brief explanation of the proof for Theorem \ref{theorem: width N}. We refer the readers to \cite[Section 4]{nguyen2017loss} for complete and more detailed proof, while we provide the necessary modifications for PINNs here. Note that in \cite{nguyen2017loss} it is shown that a global minimum for a normal neural network is achieved when $[F_1(\mathbf x), \mathbf{1}_N]$ is full row rank, and then, a width of $n_1 \geq N-1$ is shown to be sufficient. For PINNs, we showed that $F_1^{(k)}(\mathbf x)$ should be full row rank for globally minimizing the residual loss and a width of $n_1 \geq N$ is sufficient. 

In summary, the proof works by first showing that a parameterization of the neural network that results in full row-rank $F_1^{(k)}(\mathbf x)$ does exist. Based on the fact that $F_1^{(k)}(\mathbf x)$ and determinants of its sub-matrices are real-analytic functions of the model parameters, it is then shown that for any set of parameters $\{W_1, b_1\}$ that $F_1^{(k)}(\mathbf x)$ is not full row-rank, there exists another set of parameters in a ball of any positive radius around $\{W_1, b_1\}$ that results in $\textit{rank}(F_1^{(k)}(\mathbf x)) = N$. Finally, using the implicit function theorem and the non-degeneracy of the critical point on $\{W_2, b_2\}$, it is shown that there exists another critical point in a ball of an arbitrarily small radius around the original critical point that results in full row rank $F_1^{(k)}(\mathbf x)$.

We start by modifying Lemma 4.3 in \cite{nguyen2017loss}, showing that $F_1^{(k)}(\mathbf x)$ can be full row-rank for some parameters in the parameter space when $n_1 \geq N$.

\begin{lemma}[Existence of full row-rank $F_1^{(k)}(\mathbf x)$]\label{lemma: exist full rank} With Assumptions \ref{assumption: 2 layer single var} holding and for a two-layer neural network with a width $n_1 \geq N$, there exists a parameterization that results in $\textit{rank}(F_1^{(k)}(\mathbf x))=N$
\end{lemma}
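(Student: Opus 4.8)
The plan is to reduce the claim to exhibiting one $N\times N$ invertible submatrix of $F_1^{(k)}(\mathbf x)$ and then to construct such a submatrix explicitly by a thresholding/limiting argument, following the strategy of Lemma~4.3 in \cite{nguyen2017loss} but with $\sigma$ replaced by $\sigma^{(k)}$. Since $d=1$, the first layer is parameterized by $W_1=(w_1,\dots,w_{n_1})$ and $b_1=(\beta_1,\dots,\beta_{n_1})$, and the $(i,j)$ entry of $F_1^{(k)}(\mathbf x)$ is $\sigma^{(k)}(w_j x^i+\beta_j)$. Because $n_1\ge N$, it suffices to pick $N$ neurons whose columns are linearly independent and to leave the remaining $n_1-N$ neurons arbitrary; equivalently, it suffices to find weights $(w_j,\beta_j)_{j=1}^N$ for which the $N\times N$ matrix $M$ with $M_{ij}=\sigma^{(k)}(w_j x^i+\beta_j)$ is nonsingular, since then $\mathrm{rank}(F_1^{(k)}(\mathbf x))\ge N$, and it is exactly $N$ as the matrix has $N$ rows.

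For the construction, relabel so that $x^1<x^2<\dots<x^N$, which is possible by Assumption~\ref*{assumption: 2 layer single var}.\ref{assumption: distinc samples}, and fix thresholds $\theta_1<x^1<\theta_2<x^2<\dots<\theta_N<x^N$ interleaving the data points. Choose a common positive weight $w_j=w>0$ and bias $\beta_j=-w\theta_j$, so that $M_{ij}(w)=\sigma^{(k)}\big(w(x^i-\theta_j)\big)$; by the interleaving, $x^i-\theta_j>0$ iff $i\ge j$ and $x^i-\theta_j<0$ iff $i<j$. By Assumptions~\ref*{assumption: 2 layer single var}.\ref{assumption: bijective}–\ref{assumption: infimum}, $\sigma^{(k)}$ is continuous, monotone and bounded, hence has limits at $\pm\infty$, namely $\inf\sigma^{(k)}=0$ at $-\infty$ and $M_\infty:=\sup\sigma^{(k)}$ at $+\infty$, with $M_\infty>0$ because a strictly increasing function is non-constant. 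Letting $w\to\infty$ therefore gives $M_{ij}(w)\to M_\infty$ for $i\ge j$ and $M_{ij}(w)\to 0$ for $i<j$, so $M(w)$ converges entrywise to a lower-triangular matrix with all diagonal entries equal to $M_\infty$, whose determinant is $M_\infty^{N}\neq 0$.

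Since the determinant is a polynomial, hence continuous, function of the matrix entries, and the entries depend continuously on $w$ (using continuity of $\sigma^{(k)}$), we conclude $\det M(w)\to M_\infty^{N}>0$ as $w\to\infty$, so there is a finite $w^\star$ with $\det M(w^\star)\neq 0$. The resulting parameterization makes $F_1^{(k)}(\mathbf x)$ full row rank, and it moreover has strictly non-zero first-layer weights, consistent with the first condition of Theorem~\ref{theorem: full rank two layer}. The points that need care are: that the strict interleaving $\theta_1<x^1<\dots<\theta_N<x^N$ exists (immediate from distinctness of the $x^i$ and density of $\mathbb R$), so that no entry sits exactly at the jump point $0$ in the limit and the limiting diagonal is unambiguously $M_\infty$; that a bounded monotone $\sigma^{(k)}$ indeed has limits $0$ and $M_\infty>0$ at $\mp\infty$; and the interchange of the limit with the determinant, justified by continuity. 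I expect the only mildly delicate step to be verifying the lower-triangular limiting structure (the interleaving bookkeeping); the rest is routine.
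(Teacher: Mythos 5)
Your proof is correct and follows essentially the same route as the paper's: both adapt Lemma~4.3 of \cite{nguyen2017loss} by sending a common weight scale to infinity so that an $N\times N$ submatrix of $\sigma^{(k)}$-outputs converges entrywise to a triangular matrix with positive diagonal (you obtain a lower-triangular limit via interleaved bias thresholds $\theta_j$, the paper an upper-triangular one via $b_j=\alpha\, x^j\cdot a+\beta$, which pins the diagonal at $\sigma^{(k)}(\beta)$), and then both invoke continuity of the determinant to get a finite parameter choice. The only substantive difference is that you restrict to $d=1$ by ordering the scalar samples directly, whereas the paper handles general $d$ by first choosing a direction $a$ with $a\cdot x^1<\dots<a\cdot x^N$ via a measure-zero hyperplane argument; your construction generalizes immediately under the same substitution.
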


\begin{proof}
    Following the proof in \cite{nguyen2017loss}, we construct the parameters $\{W_1, b_1\}$ such that $F_1^{(k)}$ is full row-rank. Since $n_1 \geq N$ and $F_1^{(k)}(\mathbf x) \in \mathbb R ^ {N\times n_1}$, $N$ columns from $F_1^{(k)}$ must be linearly independent for it to have rank $N$. We denote the sub-matrix containing the first $N$ columns with $A$ and the other $n_1 - N$ columns with $B$. Showing that the matrix $A$ is full rank for a set of parameters proves the argument as the original matrix $F_1^{(k)}(\mathbf x) = \left[A\; B\right]$ contains the same $N$ linearly independent columns in $A$.
    
    Pick a vector $a \in \mathbb R^d$ such that
    \begin{equation}\label{eq: ordering a}
        a\cdot x^1 < a\cdot x^2 < \dots < a\cdot x^N \;\;(a \cdot x^i \text{denotes dot product})    
    \end{equation}
    Note that by Assumption \ref*{assumption: 2 layer single var}.\ref{assumption: distinc samples}, the rows of the inputs $\mathbf x = [x^1, \dots, x^N]^\top$ are distinct. Thus, for some $i,j \in \{1, \dots, N\}$ that $i \neq j$, the set $\{a \in R^d \mid a\cdot(x^i - x^j) = 0\}$ is a hyperplane with Lebesgue measure zero. For all pairs of distinct samples in $\mathbf x$, there are $N(N-1) / 2$ such hyperplanes, and their union still has Lebesgue measure zero. As a result, a vector $a$ that results in the inequality above does exist.

    By Assumption \ref*{assumption: 2 layer single var}.\ref{assumption: infimum}, the infimum of $\sigma^{(k)}$ is zero. Also, let $\gamma$ denote the supremum of $\sigma^{(k)}$. Thus, $\lim_{x\to-\infty}\sigma^{(k)}(x) = 0$ and $\lim_{x\to\infty}\sigma^{(k)}(x) = \gamma$ as $\sigma^{(k)}$ is strictly monotonically increasing. Pick any $\beta \in \mathbb R$. Since $\sigma^{(k)}$ is strictly monotonically increasing, $\sigma^{(k)}(\beta)$ is greater than the infimum zero. For $\alpha \in \mathbb R$, the weights and biases of the first $N$ neurons in the first layer are defined as $w_j = -\alpha a$ and $b_j = \alpha x^j\cdot a + \beta$ for $j\in\{1, \dots, N\}$, as in \cite{nguyen2017loss}. With this choice of weights, the outputs $A$ of the first $N$ neurons become a function of $\alpha$ with $A(\alpha)_{ij} = \sigma^{(k)}(x^i\cdot w_j + b_j) = \sigma^{(k)}(\alpha(x^j - x^i)\cdot a + \beta)$ being the output of the $j$-th neuron for the $i$-th sample for $i,j\in\{1,\dots,N\}$. Based on the ordering in Eq.~\ref{eq: ordering a}, $A(\alpha)$ at the limit becomes:
    \begin{equation*}
    \resizebox{\linewidth}{!}{$
        \lim_{\alpha \to \infty}A(\alpha)=
        \begin{bmatrix}
        \sigma^{(k)}(\beta) & \gamma                     & \gamma                     & \dots     & \gamma      \\
        0                   & \sigma^{(k)}(\beta)   & \gamma                     & \dots     & \gamma      \\
        0                   & 0                     & \sigma^{(k)}(\beta)   & \dots     & \gamma      \\
        \vdots              & \vdots                & \vdots                & \ddots    & \vdots \\
        0                   & 0                     & 0                     & \dots     & \sigma^{(k)}(\beta)
        \end{bmatrix}
    $}
    \end{equation*}
    The matrix above is upper triangular with a positive diagonal, and thus, it has a positive determinant and is full rank.

    The rest of the proof is identical to \cite{nguyen2017loss}. As the determinant of $A(\alpha)$ is a polynomial function of its entries and is continuous in $\alpha$, for some $\alpha_0 \in \mathbb R$, $A(\alpha)$ has a positive determinant and is full rank for all $\alpha>\alpha_0$. Thus, the weights of the first $N$ neurons in the first layer can be chosen in a way that results in a full row rank $F_1^{(k)}(\mathbf x)$, regardless of the weights of other neurons. 
\end{proof}

The rest of the proof for Theorem \ref{theorem: width N} can be adapted from \cite[Section 4]{nguyen2017loss} by replacing $[F_1(\mathbf x), \mathbf 1_N]$ with $F_1^{(k)}(\mathbf x)$ and subtle modifications. Thus, we briefly go through the proof without formally presenting the details. The proof continues by observing that $F_1^{(k)}(\mathbf x)$ is a real analytic function of parameters $\{W_1, b_1\}$ (as long as $\sigma^{(k)}$ is real analytic), as it is a composition of linear transformations, multiplications, and addition, which are all real analytic, with the real analytic $\sigma^{(k)}$. Similarly, the determinants of the $N\times N$ sub-matrices of $F_1^{(k)}(\mathbf x)$ are also polynomial functions of the parameters and are real analytic. As proven in \cite[Lemma 4.4]{nguyen2017loss}, the last argument on determinants and Lemma \ref{lemma: exist full rank} conclude that the set $\{\{W_1, b_1\} \mid \textit{rank}(F_1^{(k)}(\mathbf{x}))< N \}$ has Lebesgue measure zero. This is because, for a real analytic function that is not constantly zero, the set of parameters that make it zero has Lebesgue measure zero. Since the determinant function is real analytic and is not constant zero, as shown in Lemma \ref{lemma: exist full rank}, the set of parameters that make it zero has Lebesgue measure zero. Consequently, for any set of parameters $\{W'_1, b'_1\}$ that $\textit{rank}(F_1^{(k)}(\mathbf{x}))< N$, the ball centred at $\{W'_1, b'_1\}$ with any radius $\epsilon>0$ contains at least one $\{W_1, b_1\}$ such that $\textit{rank}(F_1^{(k)}(\mathbf{x})) = N$ \cite[Corollary 4.5]{nguyen2017loss}.

Using the implicit function theorem, it is then shown that for a critical point $\overline{\mathcal W} = \{\overline W_1, \overline b_1, \overline W_2, \overline b_2\}$ that is non-degenerate on $\{W_2, b_2\}$, there exists an open ball around $\{\overline W_1, \overline b_1\}$ for some radius $\delta_1>0$ that for every $\{W'_1, b'_1\}$ in the ball, $\{W'_1, b'_1, \overline W_2, \overline b_2\}$ is also a critical point. Moreover, as we discussed before, for every $0<\delta_2\leq \delta_1$, there is a ball of radius $\delta_2$ around $\{W'_1, b'_1\}$ that contains at least one set of parameters $\{\Tilde W_1, \Tilde b_1\}$ that satisfies $\textit{rank}(F_1^{(k)}(\mathbf x)) = N$. Note that $\{\Tilde W_1, \Tilde b_1, \overline W_2, \overline b_2\}$ is still a critical point as well. Nguyen et al. \shortcite{nguyen2017loss} further show how $\{\Tilde W_1, \Tilde b_1\}$ are equal to the original $\{\overline W_1, \overline b_1\}$ as the radius of the ball approaches zero.

With the results above, we get a critical point that makes $F_1^{(k)}$ full-rank. Based on Theorems \ref{theorem: full rank two layer} and \ref{theorem: full rank two layer multiple}, the resulting critical point is also a global minimum as long as the condition on the sum of weights is also satisfied. Note that the rarity of degenerate critical points (as explained in Section \ref{sec: width N and activation}) means that non-degenerate ones are the common type of critical points when $n_1 \geq N$. Thus, the conditions on the sum of the weights of the first layer are also satisfied for most of the non-degenerate critical points with a high probability.

\section{Comparison with Sinusoidal Features}

\begin{table}[t]
\centering
\begin{tabular}{@{}c|ccccc@{}}
\toprule
    & TT    & TS    & W                 & K      & H \\ \midrule
128 & 0.613 & 0.011 & 0.406             & 0.024  & 0.590 \\
256 & 0.625 & 0.050 & \textbf{0.135}             & 0.071  & 0.762  \\
512 & 0.627 & 0.002 & \textbf{0.046}    & 0.089  & 0.502  \\
1024 & 0.654 & 0.668 & \textbf{0.024}   & 0.118  & 0.443  \\ \bottomrule
\end{tabular}%
\caption{MAE from sf-PINN. TT: Transport with Tanh, TS: Transport with Softplus, W: Wave, K: Klein-Gordon, H: Helmholtz}
\label{tab: sfpinn}
\end{table}

Applying random or trainable sinusoidal embeddings on PINN inputs has been explored before \cite{wong2022learning,wang2020pinns}. In particular, the first layer in sf-PINN uses $\sin(2\pi x)$ as the activation function, followed by other layers with any activation function of choice \cite{wong2022learning}. To confirm the effectiveness of the Sine activation function compared to Sine features in sf-PINN, we trained sf-PINNs for the equations in Section \ref{sec: experiments}. 

Table \ref{tab: sfpinn} reports the MAEs from sf-PINN for each equation. Compared to Tables \ref{tab: transport mae} and \ref{tab: table second order}, we observe that in most cases, our results outperform sf-PINN. Note that, consistent with Section \ref{sec: section 3}, Softplus activation significantly improves the performance of sf-PINNs for the Transport equation, underpinning our findings about the activation functions. In the case of the Wave equation, sf-PINN with Tanh performs better than plain Sine MLP, though the condition on width still improves the performance as the number of neurons grows beyond the number of collocation points ($N=512$). 

\section{Non-linear PDEs}

\begin{table}[t]
\centering
\begin{tabular}{@{}c|ccc|ccc@{}}
\toprule
 & \multicolumn{3}{c}{Navier-Stokes} & \multicolumn{3}{c}{Burger's} \\ \midrule
W   & T     & SF-T  & S      & T      & SF-T  & S    \\ \midrule
64  & 0.047 & 0.009 & 0.010  & 0.010  & 0.099 & 0.014 \\
128 & 0.010 & 0.010 & 0.011  & 0.013  & 0.046 & 0.034 \\
256 & 0.007 & 0.013 &  0.010 & 0.053  & 0.060 & 0.027 \\ \bottomrule
\end{tabular}
\caption{MAEs for Lid-driven Cavity ($Re=400$) and Burger's equation with vanilla Tanh MLP (T), sf-PINN with Tanh (SF-T), and Sine MLP (S).}
\label{tab: nonlinear}
\end{table}

Our theoretical analysis primarily focused on linear PDEs, and we empirically showed the effectiveness of our findings for solving linear PDEs. We also tried the non-linear Klein-Gordon equation in Section \ref{sec: KG} and demonstrated the effectiveness of our method for solving this equation. However, as apparent in Table \ref{tab: table second order}, the gains for the Klein-Gordon equation are not as significant and consistent as other linear equations, especially regarding the width.

To better understand to what extent our findings generalize to non-linear equations, we perform similar experiments with the non-linear Burger's equation and steady-state Navier-Stokes (lid-driven cavity flow with $Re=400$). Table \ref{tab: nonlinear} reports the MAEs for each equation. Similar to our observations from the Klein-Gordon equation, the gains from the width and Sine activation are not as consistent as linear equations. For the Navier-Stokes equation, while the Sine MLP performs consistently well across all various widths, the Tanh MLP eventually outperforms both sf-PINN and Sine MLP as the width grows. With Burger's equation, the Tanh MLP performs best with a narrow network and degrades as the width grows, while the Sine MLP fluctuates closely. 

Altogether, our experiments suggest that while the conditions discussed for linear equations can partly improve the performance of non-linear PINNs, non-linear equations come with intricacies regarding the representation power and optimization of PINNs that require further analysis.
\end{document}